\newtheorem{thm}{Theorem}
\newtheorem{proposition}{Proposition}
\newtheorem{lemma}{Lemma}
\newtheorem{assumption}{Assumption}
\theoremstyle{definition}
\newtheorem{definition}{Definition}
\newcommand{\ip}[2]{ #1 \cdot #2 }
\newcommand{\bydef}{:=}
\newcommand{\expect}[1]{\mathbb{E}\left[{#1}\right]}
\newcommand{\prob}[1]{\mathbb{P}\left[{#1}\right]}
\newcommand{\given}{\; \big\vert \;} 
\newcommand{\assign}{\leftarrow}
\newcommand{\appropto}{\mathrel{\vcenter{
  \offinterlineskip\halign{\hfil$##$\cr
    \propto\cr\noalign{\kern2pt}\sim\cr\noalign{\kern-2pt}}}}}
\newcommand{\kldiv}[2]{\mathbb{KL}\left( {#1} \; || \; {#2} \right)}
\newcommand*{\WITHFIGS}{}%
\title{Thompson
  Sampling for Learning Parameterized Markov Decision Processes}
\author{Aditya Gopalan\thanks{Department of Electrical Communication
    Engineering, Indian Institute of Science, Bangalore, India. Email:
    {{\tt aditya@ece.iisc.ernet.in}}} \and Shie Mannor
  \thanks{Department of Electrical Engineering, Technion - Israel
    Institute of Technology, Haifa, Israel. Email: {{\tt
        shie@ee.technion.ac.il}}}}
\begin{document}

\maketitle

\begin{abstract}
  \noindent We consider reinforcement learning in parameterized Markov
  Decision Processes (MDPs), where the parameterization may induce
  correlation across transition probabilities or
  rewards. Consequently, observing a particular state transition might
  yield useful information about other, unobserved, parts of the
  MDP. We present a version of Thompson sampling for parameterized
  reinforcement learning problems, and derive a frequentist regret
  bound for priors over general parameter spaces. The result shows
  that the number of instants where suboptimal actions are chosen
  scales logarithmically with time, with high probability. It holds
  for prior distributions that put significant probability near the
  true model, without any additional, specific closed-form structure
  such as conjugate or product-form priors. The constant factor in the
  logarithmic scaling encodes the information complexity of learning
  the MDP in terms of the Kullback-Leibler geometry of the parameter
  space.
\end{abstract}

% \begin{keywords}
% Thompson sampling, Markov Decision Process, Reinforcement learning
% \end{keywords}

\section{Introduction}
% STORY: reinf learning concerned with study of how an agent can learn
% via repeated interaction with unknown env; goal is to learn to act
% optimally to maximize some notion of performance, usually maximum
% reward or value; formally, RL can be expressed through the MDP
% formalism - state, action, reward and prob transitions; learner
% plays actions using past info;

Reinforcement Learning (RL) is concerned with studying how an agent
learns by repeated interaction with its environment. The goal of the
agent is to act optimally to maximize some notion of performance,
typically its net reward, in an environment modeled by a Markov
Decision Process (MDP) comprising states, actions and state
transition probabilities.

%%  The reinforcement learning problem is formally
%% expressed using the Markov Decision Process (MDP) framework. An MDP
%% comprises a set of states for the environment and a set of actions
%% that the agent can take. It also specifies the reward that the agent
%% receives for taking an action in a state, and the probability of
%% transitioning into the next state given the action played in the
%% current state. The learner, starting with no knowledge of the
%% underlying MDP, plays an action at each time possibly depending on the
%% current state and past history, i.e., the observed sequence of
%% previous states, actions and rewards.

% when mdp known accurately, optimal behavior becomes in essence a
% dyn programming task; main difficulty of RL problem stems from
% uncertainty in the model; the need to simult. explore environment to
% discover its structure -- rewards, state transitions, and on the
% other hand exploit earned knowledge to act optimally; moreover,
% current actions can influence information which learner will acquire
% in future; so balance between exploration and exploitation is
% necessary for efficient learning

The difficulty of reinforcement learning stems primarily from the
learner's uncertainty in knowing the environment. When the environment
is perfectly known, finding optimal behavior essentially becomes a
dynamic programming or planning task. Without this knowledge, the
learner faces a conflict between the need to {\em explore} the
environment to discover its structure (e.g., reward/state transition
behavior), and the need to {\em exploit} accumulated information. The
trade-off is compounded by the fact that the agent's current action
influences future information. Thus, one has to strike the right
balance between exploration and exploitation in order to learn
efficiently.

% quick intro to state-of-art- ucrl,regal, rmax ..., how they operate

Several modern reinforcement learning algorithms, such as UCRL2
\citep{JakschOA10}, REGAL \citep{Bartlett2009} and R-max
\citep{Brafman2003}, learn MDPs using the well-known ``optimism under
uncertainty'' principle. The underlying strategy is to maintain
high-probability confidence intervals for each state-action transition
probability distribution and reward, shrinking the confidence interval
corresponding to the current state transition/reward at each
instant. Thus, observing a particular state transition/reward is
assumed to provide information for {\em only} that state and action.

% but, more structure in parameterized mdps- e.g. queueing system,
% coupled info, ...

However, one often encounters learning problems in complex
environments, often with some form of lower-dimensional
structure. {\em Parameterized} MDPs, in which the entire structure of
the MDP is determined by a parameter with only a few degrees of
freedom, are a typical example. With such MDPs, observing a state
transition at an instant can be informative about other, unobserved
transitions. As a motivating example, consider the problem of learning
to control a queue, where the state represents the occupancy of the
queue at each instant (\#packets), and the action is either FAST or
SLOW denoting the (known) rate of service that can be provided. The
state transitions are governed by (a) the type of service (FAST/SLOW)
chosen by the agent, together with (b) the arrival rate of packets to
the queue, and the cost at each step is a sum of a (known) cost for
the type of service and a holding cost per queued packet. Suppose that
packets arrive to the system with a fixed, {\em unknown} rate
$\lambda$ that alone parameterizes the underlying MDP. Then, {every}
state transition is informative about $\lambda$, and only a few
transitions are necessary to pinpoint $\lambda$ accurately and learn
the MDP fully. A more general example is a system with several queues
having potentially state-dependent arrival rates of a parametric form,
e.g., $\lambda(s) = f(\theta, s)$ for $\theta, s \in \mathbb{R}^d$.

% cannot easily adapt confidence-interval based frequentist techniques
% to exploit structure in parameterized learning problems; certainty
% equivalence: build most likely model given data you have, use policy
% optimal for model; but may not do enough exploration; 

% With regard to parameterized MDPs, most RL algorithms
% \citep{JakschOA10,Bartlett2009,Brafman2003} cannot be expected to
% learn at the best possible rate as they ``ignore'' the structure of
% the underlying problem. It is also a priori unclear if the core
% ``optimistic confidence region'' technique -- building a
% high-probability confidence interval for the unknown parameter -- can
% be adapted in a principled and tractable manner to exploit structural
% interdependencies for learning in general, parameterized MDPs.

% TS overcomes this issue but what if you put on a Bayesian hat?
% natural approach that results is posterior-based sampling also known
% by its original name of thompson sampling due to creator in 1933; in
% this approach maintain a fictitious prior over unknown
% parameters/models, sample from prior, optimize given sample, act,
% update prior and repeat; idea behind this is simple - sampling from
% prior helps explore; and that with more and more info prior will
% concentrate around true so you will do the right thing often; note
% that true env is assumed to be some fixed but unknown model, so
% there is nothing bayesian about the setup per se; rather
% interestingly there is also evidence to suggest that aspects of
% human decision making are driven by similar ideas as posterior
% sampling \citep{Vul et al}

A conceptually simple approach to learn MDPs with complex, parametric
structure is posterior or Thompson sampling \citep{Thompson}, in which
the learner starts by imposing a fictitious ``prior'' probability
distribution over the uncertain parameters (thus, over all possible
MDPs). A parameter is then sampled from this prior, the optimal
behavior for that particular parameter is computed and the action
prescribed by the behavior for the current state is taken. After the
resulting reward/state transition is observed, the prior is updated
using Bayes' rule, and the process repeats.

\subsection{Contributions}
\label{sec:contributions}
The main contribution of this work is to present and analyze {\em
  Thompson Sampling for MDPs (TSMDP)} -- an algorithm for
undiscounted, online, non-episodic reinforcement learning in general,
parameterized MDPs. The algorithm operates in {\em cycles} demarcated
by visits to a reference state, samples from the posterior once every
cycle and applies the optimal policy for the sample throughout the
cycle. Our primary result is a structural, problem-dependent
regret\footnote{more precisely, {\em pseudo-regret}
  \citep{AudBub10:minimax}} bound for TSMDP that holds for
sufficiently general parameter spaces and initial priors. The result
shows that for priors that put sufficiently large probability mass in
neighborhoods of the underlying parameter, with high probability the
TSMDP algorithm follows the optimal policy for all but a logarithmic
(in the time horizon) number of time instants. To our knowledge, these
are the first logarithmic gap-dependent bounds for Thompson sampling
in the MDP setting, without using any specific/closed form prior
structure. Furthermore, using a novel sample-path based concentration
analysis, we provide an explicit bound for the constant factor in this
logarithmic scaling which admits interpretation as a measure of the
``information complexity'' of the RL problem. The constant factor
arises as the solution to an optimization problem involving the
Kullback-Leibler geometry of the parameter space\footnote{more
  precisely, involving {\em marginal KL divergences} -- weighted
  KL-divergences that measure disparity between the true underlying
  MDP and other candidate MDPs. We discuss this in detail in Sections
  \ref{sec:overview}, \ref{sec:formal}.}, and encodes in a natural
fashion the interdependencies among elements of the MDP induced by the
parametric structure\footnote{In fact, the constant factor is similar
  in spirit to the notion of {\em eluder dimension} coined by Russo
  and Van Roy \citep{RusVan13:eluder} in their fully Bayesian analysis
  of Thompson sampling for the bandit setting.}. This results in
significantly improved regret scaling in settings when the
state/policy space is potentially large but where the space of
uncertain parameters is relatively much smaller (Section
\ref{sec:scalingconst}), and represents an advantage over decoupled
algorithms like UCRL2 which ignore the possibility of generalization
across states, and explore each state transition in isolation.

We also implement and evaluate the numerical performance of the TSMDP
algorithm for a queue MDP with unknown, state-dependent, parameterized
arrival rates, which appears to be significantly better than the
generic UCRL2 strategy.

% generally speaking, analysis of ts schemes presents difficulties of
% a different flavour than those encountered with quantifying
% performance in ucb-like strategies; in ucb-type schemes, the focus
% is on building tight confidence sets whcih the alg uses, and in
% analyzing how these sets shrink; in contrast, the alg design for ts
% is basically indpt of point or conf set estimates and its analysis
% revolves around understanding how the posterior distribution changes
% with time; 

The analysis of a distribution-based algorithm like Thompson sampling
poses difficulties of a flavor unlike than those encountered in the
analysis of algorithms using point estimates and confidence regions
\citep{JakschOA10,Bartlett2009}. In the latter class of algorithms,
the focus is on (a) theoretically constructing tight confidence sets
within which the algorithm uses the most optimistic parameter, and (b)
tracking how the size of these confidence sets diminishes with
time. In contrast, Thompson sampling, by design, is completely
divorced from analytically tailored confidence intervals or point
estimates. Understanding its performance is often complicated by the
exercise of tracking the (posterior) distribution, driven by
heterogeneous and history-dependent observations, concentrates with
time.

% moreover, in general, parameterized spaces, tracking evolution of
% complex posteriors without closed form presents challenges; most
% existing analyses of ts-style bayesian algs in bandits rely heavily
% on specific properties of the environment like independent and
% decoupled actions; and structural properties of the prior like being
% in a specific family of conjugate priors \citep{}; there are also
% complications moving beyond bandit case- accounting for markovian
% state evolution ...

The problem of quantifying how the prior in Thompson sampling evolves
in a general parameter space, with potentially complex structure or
coupling between elements, where the posterior may not even be
expressible in a convenient closed-form manner, poses unique
challenges that we address here. Almost all existing analyses of
Thompson sampling for the multi-armed bandit (a degenerate special
case of MDPs), rely heavily on specific properties of the problem,
especially independence across actions' rewards, and/or specific
structure of the prior such as belonging to a closed-form conjugate
prior family
\citep{AgrawalG,KauKorMun12:thompson,KorKauMun13:tsexpfam,AgrGoy13:contextual},
or finitely supported priors \citep{GopManMan14:thompson}.

Additional technical complications arise when generalizing from the
bandit case -- where the environment is stateless and
IID\footnote{Independent and Identically Distributed} -- to
state-based reinforcement learning in MDPs, in which state evolution
is coupled across time and evolves as a function of decisions
made. This makes tracking the evolution of the posterior and the
algorithm's decisions especially challenging. 

There is relatively little work on the rigorous performance analysis
of Thompson sampling schemes for reinforcement learning. To the best
of our knowledge, the only known regret analyses of Thompson sampling
for reinforcement learning are those of \citet{OsbRusVan13:more} and
\citet{OsbVanRoy14:eluderRL} which study the {\em (purely) Bayesian}
setting, in which nature draws the true MDP episodically from a prior
which is also completely known to the algorithm. The former work
establishes Bayesian regret bounds for Thompson sampling in the
canonical parameterization setup (i.e., each state-action pair having
independent transition/reward parameters) whereas the latter considers
the same for parameterized MDPs as we do here. Our interest, however,
is in the continuous (non-episodic) learning setting, and more
importantly in the {\em frequentist} of regret performance, where the
``prior'' plays the role of merely a parameter used by the algorithm
operating in an unknown, fixed environment. We are also interested in
problem (or ``gap'') dependent $O\left(\log T\right)$ regret bounds
depending on the explicit structure of the MDP parameterization. 

% Need to explain the difficulty with having a continuous
% parameter space; Need to explain when will such approach work;

% how we do it- our approach to derive learning bounds for tsmdp
% involves ...

In this work, we overcome these hurdles to derive the first
regret-type bounds for TSMDP at the level of a general parameter space
and prior. First, we directly consider the posterior density in its
general form of a normalized, exponentiated, empirical
Kullback-Leibler divergence. This is reminiscent of approaches towards
posterior consistency in the statistics literature
\citep{SheWas01:rates,ghosal2000}, but we go beyond it in the sense of
accounting for partial information from adaptively gathered
samples. We then develop self-normalized, maximal concentration
inequalities \citep{delapena2007} for sums of sub-exponential random
variables to Markov chain cycles, which may be of independent interest
in the analysis of MDP-based algorithms. These permit us to show
sample-path based bounds on the concentration of the posterior
distribution, and help bound the number of cycles in which suboptimal
policies are played -- a measure of regret.

%% highlight:
%% \begin{itemize}
%% \item parameterized, parameterized, parameterized!
%% \item TS algorithm independent of analytical bounds, i.e., alg does
%%   not use theoretical bounds; also horizon-free
%% \item Natural way to incorporate dependencies and structure via prior
%% \item unique analysis of dynamics in terms of general posterior
%%   concentration; allows us to track complicated, coupled posteriors,
%%   capture correlation among MDP params; 
%% \item corollary for 'optimal' $\sqrt{T}$ regret?
%% \end{itemize}

\section{Preliminaries}
\label{sec:setup}
Let $\Theta$ be a space of parameters, where each $\theta \in \Theta$
parameterizes an MDP
$m_\theta \bydef (\mathcal{S}, \mathcal{A}, r, p_\theta)$. Here,
$\mathcal{S}$ and $\mathcal{A}$ represent finite state and action
spaces, $r: \mathcal{S} \times \mathcal{A} \to \mathbb{R}$ is the
reward function and
$p_\theta: \mathcal{S} \times \mathcal{A} \times \mathcal{S} \to
[0,1]$
is the probability transition kernel of the MDP (i.e.,
$p_\theta(s_1,a,s_2)$ is the probability of the next state being $s_2$
when the current state is $s_1$ and action $a$ is played). We assume
that the learner is presented with an MDP $m_{\theta^\star}$ where
$\theta^\star \in \Theta$ is initially unknown. In the {\em canonical}
parameterization, the parameter $\theta$ factors into separate
components for each state and action \citep{dearden1999model}.

We restrict ourselves to the case where the reward function $r$ is
completely known, with the only uncertainty being in the transition
kernel of the unknown MDP. The extension to problems with unknown
rewards is well-known from here
\citep{Bartlett2009,tewari08optimistic}.

A (stationary) {\em policy} or {\em control} $c$ is a prescription to
(deterministically) play an action at every state of the MDP, i.e.,
$c: \mathcal{S} \to \mathcal{A}$. Let $\mathcal{C}$ denote the set of
all stationary policies\footnote{Note that $\mathcal{C}$ is finite
  since $\mathcal{S}, \mathcal{A}$ are finite. In general,
  $\mathcal{C}$ can be a {\em subset} of the set of all stationary
  policies, containing optimal policies for every $\theta \in \Theta$.
  This serves to model policies with specific kinds of structure,
  e.g., threshold rules.}  over $(\mathcal{S},\mathcal{A})$, which are
the ``reference policies'' to compete with. Each policy
$c \in \mathcal{C}$, together with an MDP $m_\theta$, induces the
discrete-time stochastic process
$\left(S^{\theta,c}_t,A^{\theta,c}_t,R^{\theta,c}_t\right)_{t=0}^\infty
\equiv (S_t,A_t,R_t)_{t=0}^\infty$,
with $S^{\theta,c}_t$, $A^{\theta,c}_t$ and $R^{\theta,c}_t$ denoting
the state, action taken and reward obtained respectively at time
$t$. In particular, the sequence of visited states
$\left(S^{\theta,c}_t\right)_{t=0}^\infty$ becomes a discrete time
Markov chain.

% \RestyleAlgo{boxruled}
\begin{algorithm2e}[htbp]
  \caption{Thompson Sampling for Markov Decision Processes (TSMDP)}
  \label{alg:tsmdp}
  {\bf Input:} Model space $\Theta$, action space $\mathcal{A}$,
  reward function $r: \mathcal{S} \times \mathcal{A} \to \mathbb{R}$,
  transition kernels $\{p_\theta: \theta \in \Theta\}$, start state
  $s_0 \in \mathcal{S}$.

  {\bf Output:} Action $A_t \in \mathcal{A}$ at each time $t \in \mathbb{Z}^+$.

  {\bf Parameters:} Probability distribution $\pi$ over
  $\Theta$, Sequence of stopping times $t_0 \bydef 0 < t_1 < t_2 < \ldots$ 
  
  {\bf Initialize:} $\pi_0 \assign \pi$, $t \assign 0$, $S_0 = s_0$, $R_0 = 0$. 

  {\bf for} $k = 1, 2, 3, \ldots$ 
  \begin{enumerate}
  \item ({\em Start of epoch $k$}) Sample $\theta_k \in \Theta$
    according to the probability distribution $\pi_{t_k}$.
  \item Set $C_k \assign c^{\mathsf{OPT}}(\theta_k) \equiv \arg \max_{c \in \mathcal{C}}
    \lim_{u \to \infty} \frac{H_{u,\theta_k,c}}{u}$.
  \item {\bf repeat}
    \begin{enumerate}
    \item Play action $A_{t+1} \assign C_k(S_t)$.
    \item Observe $S_{t+1}$, $R_{t+1} \equiv r(S_t,A_{t+1})$.
    \item Update (Bayes Rule): Set the probability distribution
      $\pi_{t+1}$ over $\Theta$ to satisfy
      \begin{equation}
        \label{eqn:bayesrule}
        \forall \theta \quad \pi_{t+1}(d \theta) \propto  p_{\theta}(S_t,A_{t+1},S_{t+1}) \; \pi_t(d\theta).  
      \end{equation}
    \item $t \assign t + 1$.
    \end{enumerate}
    {\bf until} $t = t_k$ ({\em End of epoch $k$}).
  \end{enumerate}
{\bf end for}
\end{algorithm2e}

For each policy $c$, MDP $m_\theta$ and time horizon
$t \in \{0,1,2,\ldots\}$, we define the {\em $t$-step value function}
$H_{t,\theta,c}: \mathcal{S} \to \mathcal{R}$ over initial states to
be
$H_{t,\theta,c}(s) \bydef \mathbb{E}_{\theta,c}\left[\sum_{i=0}^t
  R^{\theta,c}_i \given S_0 = s\right]$,
with the subscripts\footnote{We will often drop subscripts when
  convenient for the sake of clarity in notation.}  $\theta,c$
indicating the stochasticity induced by $c$ in the MDP
$m_\theta$. Denote by
$c^{\mathsf{OPT}}(\theta) \bydef \arg \max_{c \in \mathcal{C}} \lim_{t
  \to \infty} \frac{H_{t,\theta,c}}{t}$
the policy with the best long-term average reward\footnote{We assume
  that the limiting average reward is well-defined. If not, one can
  restrict to the limit inferior.} in $\mathcal{C}$ (ties are assumed
to be broken in a fixed fashion). Correspondingly, let
$\mu^{\mathsf{OPT}}(\theta) \bydef \max_{c \in \mathcal{C}} \lim_{t
  \to \infty} \frac{H_{t,\theta,c}}{t}$
be the best attainable long-term average reward for $\theta$. We will
overload notation and use
$c^\star \equiv c^{\mathsf{OPT}}(\theta^\star)$ and
$\mu^\star \equiv \mu^{\mathsf{OPT}}(\theta^\star)$.

In general, $a(i)$ denotes the $i$th coordinate of the vector $a$, and
$a \cdot b$ is taken to mean the standard inner product
$\sum_i a(i)b(i)$ of vectors $a$ and $b$. Here, $\kldiv{\mu}{\nu}$
denotes the standard Kullback-Leibler divergence
$\sum_{y \in \mathcal{Y}} \mu(y) \log\frac{\mu(y)}{\nu(y)}$ between
probability distributions $\mu$ and $\nu$ on a common finite alphabet
$\mathcal{Y}$. The notation $\mathbbm{1}\{A\}$ is employed to
denote the indicator random variable corresponding to event $A$.

% Define: $V_c(t) \bydef \sum_{i=0}^{t-1} \mathbbm{1}\left\{ C_{e(i)} = c
% \right\}$ is the total number of time instants up to $t$ for which the
% epoch policy $c$ was used.

% ?? define (pseudo) regret formally and try to prove a theorem
% about the regret

%% Our objective is to design a learning algorithm that can obtain the
%% best long-term reward with respect to the reference class of policies
%% $\mathcal{C}$. A learning algorithm $\si$ is a (possibly randomized)
%% rule that chooses an action $A_t$, at each time $t$, depending on the
%% past history $(S_i,A_i,R_i)_{i=0}^{t-1}$.

%\section{The TSMDP algorithm}
%% We analyze in this paper the performance of the following Thompson
%% Sampling for MDPs (TSMDP) algorithm (Algorithm \ref{alg:tsmdp}).
 
{\bf The TSMDP Algorithm.} TSMDP (Algorithm \ref{alg:tsmdp}) operates
in contiguous intervals of time called {\em epochs}, induced in turn
by an increasing sequence of stopping times $t_0, t_1, \ldots$ We will
analyze the version that uses the {\bf return times to the start state
  $s_0$ as epoch markers}, i.e.,
$t_{k} \bydef \min\{t > t_{k-1}: S_t = s_0\}$, $k \geq 1$. The
algorithm maintains a ``prior'' probability distribution (denoted by
$\pi_t$ at time $t$) over the parameter space $\Theta$, from which it
samples\footnote{If the prior is analytically tractable, accurate
  sampling may be feasible. If not, a variety of schemes for sampling
  approximately from a posterior distribution, e.g.,
  Gibbs/Metropolis-Hastings samplers, can be used.} a parameterized
MDP at the beginning of each epoch. It then uses an average-reward
optimal policy w.r.t. $\mathcal{C}$ for the sampled MDP throughout the
epoch
%% \footnote{The average-reward optimal policy may be computed using
%%   relative value iteration, linear programming,
%%   etc. \citep{Put1994:MDP}}
, and updates the prior to a ``posterior'' distribution via Bayes'
rule (\ref{eqn:bayesrule}), effectively at the end of each epoch.

% {\em Remark 1.} It is possible to define Thompson sampling for MDPs with
% sampling from $\Theta$ at an {\em arbitrary} sequence of times (e.g.,
% each time step, once every $\Delta$ time steps, etc.) instead of at
% the recurrence times to $s_0$ as described above. We restrict
% ourselves to the recurrence version for being able to show theoretical
% results, by treating recurrence cycles as being ``independent'' in a
% certain technical sense (see the Appendix for details). Extending the
% results to Thompson sampling with arbitrary sampling times remains a
% challenging task devoted to future work.

% {\em Remark 2.} Thompson sampling is inherently a {\em randomized}
% algorithm since it samples from the posterior distribution, in
% contrast to deterministic algorithms like UCRL2, REGAL, Rmax,
% etc. that are based on the optimistic upper confidence bound
% philosophy.

\section{Assumptions Required for the Main Result}
\label{sec:formal}
We describe in this section our main result for the TSMDP algorithm
(Algorithm \ref{alg:tsmdp}), driven by the intuition presented in
Section \ref{sec:overview}. We begin by stating and explaining the
assumptions needed for our results to hold.

% \begin{assumption}[Discrete, ``Grain of truth'' prior]
%   \label{ass:finite}
%   The support of the ``prior'' probability distribution $\pi$ is
%   finite: $|\Theta| < \infty$. Moreover, $\pi(\theta^\star) > 0$.
% \end{assumption}

% The finite-$\Theta$ assumption has been used in the past to study
% reinforcement learning problems
% \citep{LatHutSun13:sample,AgrTenAna89:markov}. The
% finiteness assumption helps capture the essence of the posterior
% concentration around the true parameter $\theta^\star$ without undue
% technical complications. As such, the TS algorithm does not require a
% finite parameter space to be specified and executed. The only
% requirement is to be able to sample from a posterior.

% We believe that even for continuous parameter spaces (e.g.,
% $\mathbb{R}^d$) and suitable priors with enough prior mass in
% neighborhoods of $\theta^\star$, the result will continue to hold in
% spirit. However, the analysis is likely to be much more involved as
% the normalization factor of the posterior becomes a continuous
% integral, requiring technical tools like the Laplace approximation
% \citep{TieKad86:accurate} to be applied. We thus defer this challenging
% technical task to future work.

\begin{assumption}[Recurrence]
  \label{ass:ergodic}
  The start state $s_0$ is {\em recurrent}\footnote{Recall that a
    state $s$ is said to be recurrent in a discrete time Markov chain
    $X_1, X_2, X_3, \ldots$ if
    $\prob{\min\{t \geq 1: X_t = s \} < \infty \given X_0 = s} = 1$
    \citep{LevinPeresWilmer2006}.}  for the true MDP
  $m_{\theta^\star}$ under each policy
  $c^{\mathsf{OPT}}(\theta) \in \mathcal{C}$ for $\theta$ in the
  support of $\pi$.
\end{assumption}

Assumption \ref{ass:ergodic} is satisfied, for instance, if
$m_{\theta^\star}$ is an ergodic\footnote{A Markov chain is ergodic if
  it is irreducible, i.e., it is possible to go from every state to
  every state (not necessarily in one move)} Markov chain under every
stationary policy -- a condition commonly used in prior work on MDP
learning \citep{tewari08optimistic,BurKat97:optimal}. Define
$\bar{\tau}_c$ to be the expected recurrence time to state $s_0$,
starting from $s_0$, when policy $c$ is used in the true MDP
$m_{\theta^\star}$.

% It is required to split
% the sample path of states under TSMDP into ``IID-like'' cycles with a
% connection to the analysis of multi-armed bandits, where concentration
% results can then be applied.

\begin{assumption}[Bounded Log-likelihood ratios]
  \label{ass:abscon}
  Log-likelihood ratios are upper-bounded by a constant
  $\Gamma < \infty$:
  $\forall \theta \in \Theta \; \forall (s_1,s_2,a) \in \mathcal{S}
  \times \mathcal{S} \times \mathcal{A}: \pi(\theta) > 0 \Rightarrow
  \left| \log
    \frac{p_{\theta^\star}(s_1,a,s_2)}{p_{\theta}(s_1,a,s_2)} \right|
  \leq \Gamma$.

%%   \begin{align*} 
%%     \Gamma \bydef &\sup \; \Big\{ \Big| \log
%%       \frac{p_{\theta^\star}(s_1,a,s_2)}{p_{\theta}(s_1,a,s_2)}
%%     \Big|: \theta \in \Theta, \\
%%  &(s_1,s_2,a) \in \mathcal{S} \times
%%     \mathcal{S} \times \mathcal{A}, p_{\theta^\star}(s_1,a,s_2) > 0
%%   \Big\} < \infty. 
%% \end{align*}
\end{assumption}
Assumption \ref{ass:abscon} is primarily technical, and helps control
the convergence of sample KL divergences in $\Theta$ to (expected)
true KL divergences, and is commonly employed in the
statistics literature, e.g., \citep{SheWas01:rates}. 
% It is possible to
% relax this with a constraint on bracketing entropy, but we prefer
% working in a simpler setup to capture the essential dynamics.

\begin{assumption}[Unique average-reward-optimal policy]
\label{ass:unique}
For the true MDP $m_{\theta^\star}$,
$c^\star \equiv c^{\mathsf{OPT}}(\theta^\star)$ is the {\em unique}
average-reward optimal policy:
$c \neq c^\star \Rightarrow \lim_{t \to \infty}
\frac{H_{t,\theta^\star,c}}{t} < \lim_{t \to \infty}
\frac{H_{t,\theta^\star,c^\star}}{t}$.
\end{assumption}
The uniqueness assumption is made merely for ease of exposition;
our results continue to hold with suitable redefinition otherwise.

The remaining assumptions (\ref{ass:largeprob} and
\ref{ass:largedenominator}) concern the behavior of the prior and the
posterior distribution under ``near-ideal'' trajectories of the
MDP. In order to introduce them, we will need to make a few
definitions. Let $\pi^{(c)}_{s_1}$ (resp.  $\pi^{(c)}_{s_1,s_2}$) be
the stationary probability of state $s_1$ (resp. joint probability of
$s_1$ immediately followed by $s_2$) when the policy $c$ is applied to
the true MDP $m_{\theta^\star}$; correspondingly, let
$\bar{\tau}_c \bydef 1/\pi^{(c)}_{s_1}$ be the expected first return
time to state $s_0$.%
%
%   Let us now introduce a key notion of dissimilarity over the
%   parameter space $\Theta$, which we call {\em marginal divergence}.
%
% \begin{definition}
%   \label{def:mardiv}
%   The marginal divergence of parameter $\theta$, with respect to the
%   true parameter $\theta^\star$ under policy $c$, is defined to be
%   \begin{align*}
%     D_c(\theta^\star || \theta) &\bydef \sum_{s_1 \in \mathcal{S}}
%     \pi_{s_1}(\theta^{\star},c) \sum_{s_2 \in \mathcal{S}}
%     p_{\theta^\star}(s_1,c(s_1),s_2) \log
%     \frac{p_{\theta^\star}(s_1,c(s_1),s_2)}{p_{\theta}(s_1,c(s_1),s_2)},
%     \quad D(\theta^\star || \theta) \bydef (D_c(\theta^\star || \theta))_{c \in \mathcal{C}}.
%   \end{align*}
% \end{definition}
%
% In words, $D_c(\theta^\star || \theta)$ measures the Kullback-Leibler divergence
% between corresponding rows in the state transition probability
% matrices for the MDPs $m_{\theta^\star}$ and $m_{\theta}$ under the
% stationary rule $c$, weighted by stationary probabilities in
% $m_{\theta^\star}$. If $D_c(\theta^\star || \theta)$ is positive, then the MDPs
% $m_{\theta}$ and $m_{\theta^\star}$ can be ``resolved apart'' using
% the policy $c$. Note that $D(\theta^\star) = 0$ by definition. \\
%
We denote by $D_c(\theta^\star || \theta)$ the important {\em marginal
  Kullback-Leibler divergence}\footnote{The marginal KL divergence
  appears as a fundamental quantity in the {\em lower} bound for
  regret in parameterized MDPs established by
  \citep{AgrTenAna89:markov}.}  for $\theta$ under $c$:
\begin{align*} 
  D_c(\theta^\star || \theta) &\bydef \sum_{s_1 \in \mathcal{S}}
  \pi^{(c)}_{s_1} \sum_{s_2 \in \mathcal{S}}
  p_{\theta^\star}(s_1,c(s_1),s_2) \log
  \frac{p_{\theta^\star}(s_1,c(s_1),s_2)}{p_{\theta}(s_1,c(s_1),s_2)} \\
  &= \sum_{s_1 \in \mathcal{S}} \pi^{(c)}_{s_1}
  \; \kldiv{p_{\theta^\star}(s_1,c(s_1),\cdot)}{p_{\theta}(s_1,c(s_1),\cdot)}.
\end{align*}

The marginal KL divergence $D_c(\theta^\star || \theta)$ is a convex
combination of the KL divergences between the transition probability
kernels of $m_{\theta^\star}$ and $m_\theta$, with the weights of the
convex combination being the appropriate invariant probabilities
induced by policy $c$ under $m_{\theta^\star}$. If
$D_c(\theta^\star || \theta)$ is positive, then the MDPs $m_{\theta}$
and $m_{\theta^\star}$ can be ``resolved apart'' using samples from
the policy $c$. % Note that (a) $D_c(\theta^\star || \theta) \geq 0$
%and (b) $ D_c(\theta^\star || \theta^\star) = 0$, $\forall c$. 
Denote
$D(\theta^\star || \theta) \bydef (D_c(\theta^\star || \theta))_{c\in
  \mathcal{C}}$,
i.e., the vector of $D_c(\theta^\star || \theta)$ values across all
policies, with the convention that the final coordinate is associated with the optimal policy $c^\star$.

For each policy $c$, define
$S_c \bydef \{\theta \in \Theta: c^{\mathsf{OPT}}(\theta) = c\}$ to be
the {\em decision region} corresponding to $c$, i.e., the set of
parameters/MDPs for which the average-reward optimal policy is
$c$. Fixing $\epsilon' \geq 0$, let
$S_c' \equiv S_c'(\epsilon') \bydef \{\theta \in S_c:
D_{c^\star}(\theta^\star || \theta) \leq \epsilon' \}$.
In other words, $S_c'$ comprises all the parameters (resp. MDPs) with
average reward-optimal policy $c$ that ``appear similar'' to
$\theta^\star$ (resp. $m_{\theta^\star}$) under the true optimal
policy $c^\star$. Correspondingly, put
$S_c'' \equiv S_c''(\epsilon') \bydef S_c \setminus S_c'$ as the
remaining set of parameters (resp. MDPs) in the decision region $S_c$
that are separated by at least $\epsilon'$ w.r.t. $D_{c^\star}$.

Let us use $e(t)$ to denote the epoch to which time instant $t$
belongs, i.e., $e(t) \bydef k$ if
$t \in \{t_{k-1} + 1, t_{k-1} + 2, \ldots, t_k \}$.  Let
$N_c(k) \bydef \sum_{l=1}^k \mathbbm{1}\{\theta_l \in S_c\}$ be the
number of epochs, up to and including epoch $k$, in which the policy
applied by the algorithm was $c$.  Let $J_{(s_1,s_2)}(k,c)$ denote the
total number of time instants that the state transition $s_1 \to s_2$
occurred in the first $k$ epochs when policy $c$ was used, i.e.,
$J_{(s_1, s_2)}(k,c) \bydef \sum_{t=1}^\infty \mathbbm{1}\{C_{e(t)} =
c, (S_t,S_{t+1}) = (s_1,s_2), N_c(e(t)) \leq k\}$.

The next assumption controls the posterior probability of playing the
true optimal policy $c^\star$ during any epoch, preventing it from
falling arbitrarily close to $0$. Note that at the beginning of epoch
$k$ (time instant $t_k$), the posterior measure
$\pi_{t_k}(\mathcal{M})$ of any legal subset
$\mathcal{M} \subseteq \Theta$ can be expressed solely as a function
of the sample state pair counts $J_{(\cdot,\cdot)}(\cdot,\cdot)$ as
\begin{align*}
  \pi_{t_k}(\mathcal{M}) &= \frac{\int_{\mathcal{M}} W_{t_k}(\theta) \pi(d\theta)}{\int_\Theta W_t(\theta) \pi(d\theta)}, \quad  W_{t_k}(\theta) \bydef \exp \sum_{c,s_1,s_2} J_{(s_1,s_2)}\left(N_c(k), c\right) \log
    \frac{p_\theta(s_1,c(s_1),s_2)}{p_{\theta^\star}(s_1,c(s_1),s_2)},
\end{align*}
where $W_{t_k}(\theta)$ represents the posterior density or {\em
  weight} at time $t_k$. The assumption requires that the posterior
probability of the decision region of $c^\star$ is uniformly bounded
away from $0$ whenever the empirical state pair frequencies
$\frac{J_{(s_1,s_2)}\left(N_c(k), c\right)}{N_c(k)}$ are ``near''
their corresponding expected\footnote{Expectation w.r.t. the state
  transitions of $m_{\theta^\star}$} values
$\bar{\tau}_c \; \pi^{(c)}_{(s_1,s_2)}(\theta^\star, c)$.

\begin{assumption}[Posterior probability of the optimal policy under
  ``near-ideal'' trajectories]
  \label{ass:largeprob}
  For any scalars $e_1, e_2 \geq 0$, there exists
  $p^\star \equiv p^\star(e_1,e_2) > 0$ such that
  \[ \pi_{t_k}(S_{c^\star}) \geq p^\star \quad \mbox{whenever
    ``near-ideal" state pair frequencies have been observed:} \]
  \[ \left| \frac{J_{(s_1,s_2)}(k_c,c)}{k_c} - \bar{\tau}_c \;
    \pi^{(c)}_{(s_1,s_2)} \right| \leq \sqrt{\frac{e_1 \log\left({e_2 \log
          k_c} \right)}{k_c}} \quad \forall s_1, s_2 \in \mathcal{S},
  k_c \geq 1, c \in \mathcal{C}, k = \sum_{c \in \mathcal{C}}
  k_c. \]
\end{assumption}

% \begin{assumption}[Large ``idealized posterior'' mass on the true
%   optimal policy]
%   \label{ass:largeprob}
%   There exists $p^\star > 0$ such that 
%   \[ \frac{\int_{S_{c^\star}} \exp \left[ -\sum_{c \in \mathcal{C}}
%       k_c \bar{\tau}_c D_c(\theta^\star || \theta) \right] \pi(d\theta)}{\int_{\Theta}
%     \exp \left[ -\sum_{c \in \mathcal{C}} k_c \bar{\tau}_c D_c(\theta^\star || \theta)
%     \right] \pi(d\theta)} \geq p^\star \] for all choices of
%   nonnegative integers $k_c \geq 0$.
% \end{assumption}

% \begin{assumption}[``Slow enough'' decay of the posterior normalizing
%   constant]
%   \label{ass:largedenominator}
%   \hfill \\ 
%   {\bf (A)} There exist $a_1 > 0, a_2 \geq 0$ such that $\int_{\Theta}
%   W_{t_k}(\theta') \pi(d\theta') \geq a_1 k^{-a_2}$ whenever
%   \[ \left| J_{(s_1,s_2)}(k_c,c) - V_c(t_k)\; \pi_{(s_1,s_2)} \right|
%   \leq \rho(k_c)\sqrt{k_c} \quad \forall s_1, s_2 \in \mathcal{S}, c
%   \in \mathcal{C}, k_c \geq 0, \sum_{c \in \mathcal{C}} k_c = k, k
%   \geq 1. \] Moreover, \\ {\bf (B)} There exist $a_3 > 0, a_4 > 0$
%   such that if the condition above holds whenever $k_{c^\star} \geq k
%   - \log^2(k)$, then $\int_{\Theta} W_{t_k}(\theta') \pi(d\theta')
%   \geq a_3 k^{-a_4}$.
% \end{assumption}

The final assumption we make is a ``grain of truth'' condition on the
prior, requiring it to put sufficient probability on/around the true
parameter $\theta^\star \in \Theta$. Specifically, we require that
prior probability mass in {\em weighted} marginal KL-neighborhoods of
$\theta^\star$ to not decay too fast as a function of the total
weighting. This form of local prior property is analogous to the {\em
  Kullback-Leibler condition}
\citep{Bar98:informationtheoretic,ChoRam08:consistency,GhoGhoRam99:dirichlet}
used to establish consistency of Bayesian procedures, and in fact can
be thought of as an extension of the standard condition to the partial
observations setting of this paper.
\begin{assumption}[Prior mass on KL-neighborhoods of $\theta^\star$]
  \label{ass:largedenominator}
  \hfill \\
  {\bf (A)} There exist $a_1 > 0, a_2 \geq 0$ such that
  $ \pi\left(\left\{ \theta \in \Theta: \sum_{c \in \mathcal{C}} k_c \bar{\tau}_c
      D_c(\theta^\star || \theta) \leq 1 \right\} \right) \geq a_1 k^{-a_2}$,  for
  all choices of nonnegative integers $k_c$, and $k = \sum_{c \in
    \mathcal{C}} k_c$. \\
   
 \noindent {\bf (B)} There exist $a_3 > 0, a_4 > 0$
  such that 
  $\pi\left(\left\{ \theta \in \Theta: \sum_{c \in \mathcal{C}} k_c \bar{\tau}_c
      D_c(\theta^\star || \theta) \leq 1 \right\} \right) \geq a_3 k^{-a_4}$, for
  all choices of nonnegative integers $k_c$, $k = \sum_{c \in
    \mathcal{C}} k_c$, that satisfy $k_{c^\star} \geq k - 3\log^2(k)$.
\end{assumption}

The key factor that will be shown to influence the regret scaling with
time is the quantity $a_4$ above, which bounds the (polynomial) decay
rate of the prior mass around essentially the marginal KL neighborhood
of $\theta^\star$ corresponding to always playing the policy
$c^\star$. 

We show later how these assumptions are satisfied in finite parameter
spaces (Section \ref{sec:appln1}) , and in continuous parameter spaces
(Section \ref{sec:continuous}). In particular, in finite parameter
spaces, the assumptions can be shown to be satisfied with
$a_2 = a_4 = 0$ while for smooth (continuous) priors, the typical
square-root rate of $1/2$ per independent parameter dimension holds,
i.e., $a_4 \leq \frac{1}{2} \# \mbox{(indpt. parameter dimensions)}$
holds.

%   \begin{assumption}
%     \label{ass:llkl}
%     The probability transition kernels $p_\theta$, $\theta \in \Theta$
%     are such that there exists $g < \infty$ with
%     \[\sup_{\substack{\theta \in \Theta, c \in \mathcal{C}\\D_c(\theta^\star || \theta) > 0}} \frac{\sum_{(s_1,s_2) \in
%         \mathcal{S}^2} \left| \log
%         \frac{p_{\theta^\star}(s_1,c(s_1),s_2)}{p_{\theta}(s_1,c(s_1),s_2)}
%       \right|}{\sqrt{D_c(\theta^\star || \theta)}} \leq g.\]
%   \end{assumption}

% \begin{assumption} 
%   \label{ass:selfdistance}
%   There exists $\epsilon_{\min} > 0$ such that
%   \[\forall c \neq c^\star \; \inf_{\theta \in S_c'} D_c(\theta^\star || \theta) \geq \epsilon_{\min}. \]
% \end{assumption}

\section{Main Result}
\label{sec:mainresult}
We are now in a position to state\footnote{Due to space constraints,
  the proofs of all results are deferred to the appendix.} the main,
top-level result of this paper.
\begin{thm}[Regret-type bound for TSMDP]
  \label{thm:main}
  Suppose Assumptions \ref{ass:ergodic} through
  \ref{ass:largedenominator} hold. Let $\epsilon, \delta \in (0,1)$,
  and let $c^\star$ be the unique optimal stationary policy for the
  true MDP $m_{\theta^\star}$. For the TSMDP algorithm, there exists
  $T_0 \equiv T_0(\epsilon) > 0$ such that with probability at least
  $1-\delta$, it holds for all $T \geq T_0$ that
  \begin{equation}
    \label{eqn:main2}
    \sum_{t=1}^T \mathbbm{1}\{A_t \neq c^\star(S_t)\} \leq \mathsf{B}
    + \mathsf{C} \log T,
  \end{equation}

  where $\mathsf{B} = \mathsf{B}(\delta,m_{\theta^\star},\pi)$ is a
  problem- and prior-dependent quantity independent of $T$, and
  $\mathsf{C}$ is the value of the optimization problem\footnote{Note
    that $a_4$ in (\ref{eqn:deftau}) is the constant from Assumption
    \ref{ass:largedenominator}(B).}
  \begin{equation}
    \label{eqn:main}
    \begin{aligned}
      &\max && \left|\left| x_{|\mathcal{C}|-1}\right|\right|_1 \\
      &\text{ s.t.}
      && x_l \in \mathbb{R}_+^{|\mathcal{C}|}, \quad \forall l = 1, 2, \ldots, |\mathcal{C}|-1, \\
      &&& x_l(|\mathcal{C}|) = 0, \quad \forall l = 1, 2, \ldots, |\mathcal{C}|-1, \\
      &&& x_i \geq x_j, \quad \forall 1 \leq j \leq i \leq |\mathcal{C}| - 1, \\
      &&& x_i(l) = x_l(l), \quad \forall i \geq l,  l = 1, 2, \ldots, |\mathcal{C}|-1, \\
      &&& \sigma: \{1, 2, \ldots, |\mathcal{C}|-1 \} \to \mathcal{C} \setminus \{c^\star\} \; \mbox{\emph{injective}}, \\
      &&& \min_{\theta \in S_{\sigma(l)}'} \quad x_l \cdot D(\theta^\star || \theta) =
      (1+a_4)\left(\frac{1+\epsilon}{1-\epsilon}\right), \quad \forall 1 \leq l \leq
      |\mathcal{C}|-1.
    \end{aligned}
  \end{equation}
\end{thm}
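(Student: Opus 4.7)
The plan is to control the total number of time instants in which a suboptimal action is played by (i) reducing it to a count of epochs in which the sampled policy is suboptimal, then (ii) showing that on a high-probability ``good'' sample path, the posterior mass on each suboptimal decision region $S_c$ decays fast enough that $N_c(T) = O(\log T)$ with the constants prescribed by the optimization \eqref{eqn:main}. Since Assumption \ref{ass:ergodic} and Assumption \ref{ass:abscon} give that under $c^\star$ the return time to $s_0$ has light (sub-exponential) tails, and since with high probability only $O(\log T)$ epochs use a suboptimal $c \neq c^\star$ (each with finite expected length), the total contribution of ``bad'' epochs to $\sum_t \mathbbm{1}\{A_t \neq c^\star(S_t)\}$ is again $O(\log T)$, with a $T$-independent additive slack captured by $\mathsf{B}$.

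First I would construct the good event. Using self-normalized maximal concentration for sub-exponential sums evaluated on cycles of a recurrent Markov chain (the tool advertised in the introduction), I would show that, simultaneously for all policies $c$ and all sufficiently large per-policy epoch counts $k_c$, the empirical state-pair frequencies $J_{(s_1,s_2)}(k_c,c)/k_c$ lie within the $\sqrt{e_1 \log(e_2 \log k_c)/k_c}$ window of their stationary values $\bar{\tau}_c \pi^{(c)}_{(s_1,s_2)}$; likewise, the empirical log-likelihood ratios $\sum_{s_1,s_2} J_{(s_1,s_2)}(k_c,c) \log (p_{\theta^\star}/p_\theta)(s_1,c(s_1),s_2)$ concentrate on $k_c \bar{\tau}_c D_c(\theta^\star || \theta)$ uniformly in $\theta$, up to multiplicative $(1\pm\epsilon)$ factors. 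A union bound over $c$ and a peeling over epoch counts, combined with Assumption \ref{ass:abscon} to handle tail integrability, keeps the failure probability below $\delta$ and defines the additive constant $\mathsf{B}(\delta,\cdot,\cdot)$.

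Next I would analyze the posterior on the good event. Writing $\pi_{t_k}(\mathcal{M})$ in the given ratio form with weight $W_{t_k}(\theta)$, the numerator for $\mathcal{M} = S_c''(\epsilon')$ is at most $\sup_{\theta \in S_c''} W_{t_k}(\theta) \cdot \pi(S_c'')$, and on the good event $\log W_{t_k}(\theta) \leq -(1-\epsilon)\sum_{c'} N_{c'}(k) \bar{\tau}_{c'} D_{c'}(\theta^\star || \theta)$. For the denominator, I would integrate over the KL-ball appearing in Assumption \ref{ass:largedenominator}: on this ball the exponent is at least $-(1+\epsilon)$ by definition, so the denominator is lower-bounded by $a_3 k^{-a_4} \exp(-(1+\epsilon))$, using part (B) because on the good event $N_{c^\star}(k) \geq k - 3\log^2 k$ as soon as the other $N_c(k)$'s are $O(\log^2 k)$ (which I would maintain inductively). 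Assumption \ref{ass:largeprob} then ensures the optimal policy is sampled with probability bounded below by $p^\star$ in every good epoch, so the increments of $N_c(k)$ for $c \neq c^\star$ form a process that one can dominate by a geometric mechanism and control by a Borel--Cantelli/Azuma argument.

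The final step is to translate these posterior bounds into a count of suboptimal epochs. For each suboptimal policy $c$, the policy is sampled in epoch $k$ only if $\pi_{t_k}(S_c) \geq$ some positive threshold, which on the good event forces $(1-\epsilon)\min_{\theta \in S_c''} \sum_{c'} N_{c'}(k) \bar{\tau}_{c'} D_{c'}(\theta^\star || \theta) \leq (1+a_4)\log k + (1+\epsilon)$. I would then sort policies by their current play counts and apply this constraint sequentially: setting $x_l(c') = N_{c'}(k_l)\bar{\tau}_{c'}$ at the time $k_l$ when the $l$th suboptimal policy $\sigma(l)$ is first ``over-played,'' one reproduces the monotonicity $x_i \geq x_j$ for $i \geq j$, the freezing $x_i(l) = x_l(l)$ for $i \geq l$, and the equality constraints $\min_{\theta \in S_{\sigma(l)}'} x_l \cdot D(\theta^\star||\theta) = (1+a_4)(1+\epsilon)/(1-\epsilon)$, so that $\sum_{c \neq c^\star} N_c(k)$ at the terminal index is bounded by the value of \eqref{eqn:main} times $\log T$, plus lower-order terms absorbed into $\mathsf{B}$. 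The hard part, and the technical heart of the argument, is this last piece: justifying the sequential/combinatorial reduction that turns the per-$c$ inequalities into precisely the nested optimization \eqref{eqn:main}, while simultaneously maintaining on the good event that $N_{c^\star}(k)$ dominates $k$ enough to apply Assumption \ref{ass:largedenominator}(B) rather than the weaker (A). This is where the sample-path concentration, the $p^\star$ lower bound, and the injective ordering $\sigma$ must be threaded together carefully.
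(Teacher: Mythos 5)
Your proposal follows essentially the same route as the paper's own proof: a cycle-based, self-normalized (law-of-iterated-logarithm type) concentration event for the per-policy empirical transition frequencies, the resulting upper bound $\log W_{t_k}(\theta) \leq -(1-\epsilon)\sum_c N_c(k)\bar{\tau}_c D_c(\theta^\star||\theta) + O(\log\log T)$ paired with the denominator lower bound $a_3 k^{-a_4}$ from Assumption \ref{ass:largedenominator}, the $p^\star$ sampling floor from Assumption \ref{ass:largeprob} to keep $N_{c^\star}(k)$ dominant, and the sequential ``elimination-time'' freezing of play counts that realizes a feasible point of (\ref{eqn:main}), followed by conversion from epoch counts to time instants via return times. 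The one step you flag as the technical heart --- the nested combinatorial reduction, together with maintaining $k_{c^\star}\geq k-3\log^2 k$ so that part (B) of Assumption \ref{ass:largedenominator} applies --- is exactly where the paper does its remaining work (via the stopping time $K_B$, a Bernstein bound, and the explicit $M'(\hat{\tau}_l)$ construction), and your sketch of it is consistent with that argument.
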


%\subsection{Discussion and Interpretation of the Result}

% explain finer details of theorem, connect back to intuition
% explain technical methods used in proof, redirect to appendix

\noindent {\bf Discussion.} Theorem \ref{thm:main} gives a high-probability,
logarithmic-in-$T$ bound on the quantity\\ $\sum_{t=1}^T
\mathbbm{1}\{A_t \neq c^\star(S_t)\}$, the number of time
instants in $1, 2, \ldots, T$ when a suboptimal choice of action
(w.r.t. $c^\star$) is made.
%% In other words, it says that actions are
%% chosen according to the optimal average-reward policy $c^\star$ all but
%% a logarithmic-in-$T$ number of times. 
This can be interpreted as a natural regret-minimization property of
the algorithm\footnote{In the case of a stochastic multi-armed bandit
($|\mathcal{S}| = 1$ and $r: \mathcal{A} \to \mathbb{R}$ IID across
time) with rewards bounded in $[0,1]$, for instance, this quantity
serves as an upper bound to the standard {\em pseudo
  regret}\footnote{A bound on a suitably defined version of pseudo
  regret - see e.g., \citet{JakschOA10} - can easily be obtained from
  our main result (Theorem \ref{thm:main}) by appropriate weighting;
  we leave the details to the reader.} \citep{AudBub10:minimax},
defined as
$\sum_{t=1}^T \left(\expect{r(a^\star) - r(A_t)} \right)
\mathbbm{1}\{A_t \neq a^\star\}$,
with $a^\star \bydef \arg\max_{a \in \mathcal{A}} \expect{r(a)}$}. %
%% The optimization problem (\ref{eqn:main}) provides a refined bound for
%% the logarithmic scaling of the number of suboptimal plays, in terms of
%% the marginal KL divergences $\{D_c(\theta^\star || \theta): c \in \mathcal{C}, \theta
%% \in \Theta\}$ for the parameterized MDP setup in question. With regard
%% to the intuition presented in Section \ref{sec:overview}, the crux of
%% (\ref{eqn:main}) is in calculating how suboptimal models in the
%% regions $S_c'$ can ``persist'' the longest in the sense of maintaining
%% a posterior probability greater than $O\left(1/T\right)$ (equivalent
%% to the negative exponent of the posterior density being at most $\log
%% T$). 
The optimization problem (\ref{eqn:main}) and the bound
(\ref{eqn:main2}) can be interpreted as a multi-dimensional ``game''
in the space of (epoch) play counts of policies $c \in \mathcal{C}$,
with the following ``rules'': %
% \begin{enumerate}
% \item Start growing the non-negative $|\mathcal{C}|$-dimensional
%   vector $z$ of epoch play counts of all policies, with initial value
%   $(0, 0, \ldots, 0)$ (the $|\mathcal{C}|$-th coordinate of $z$
%   represents the number of plays of the optimal policy $c^\star$,
%   which is irrelevant as far as regret is concerned, and is thus
%   pegged to $0$ throughout),
% \item Wait until the first time that some suboptimal policy
%   $c \neq c^\star$ is ``eliminated'', in the sense
%   $z \cdot D(\theta^\star || \theta) \approx \log T$
%   $\forall \theta \in S_{c}'$
% \item Record $\sigma(1) = c$, $z_1 = z$,
% \item Impose the constraint that no further growth is allowed to occur
%   in $z$ along dimension $c$ in the future,
% \item Repeat growing the play count vector $z$ until the time all
%   suboptimal policies $c \neq c^\star$ are eliminated, and aim to
%   maximize the final $||z||_1$ when this occurs.
% \end{enumerate}
{\bf (1)} Start growing the non-negative $|\mathcal{C}|$-dimensional
vector $z$ of epoch play counts of all policies, with initial value
$(0, 0, \ldots, 0)$ (the $|\mathcal{C}|$-th coordinate of $z$
represents the number of plays of the optimal policy $c^\star$, which
is irrelevant as far as regret is concerned, and is thus pegged to $0$
throughout), {\bf (2)} Wait until the first time that some suboptimal
policy $c \neq c^\star$ is ``eliminated'', in the sense
$z \cdot D(\theta^\star || \theta) \approx \log T$
$\forall \theta \in S_{c}'$, {\bf (3)} Record $\sigma(1) = c$,
$z_1 = z$, {\bf (4)} Impose the constraint that no further growth is
allowed to occur in $z$ along dimension $c$ in the future, and {\bf
  (5)} Repeat growing the play count vector $z$ until the time all
suboptimal policies $c \neq c^\star$ are eliminated, and aim to
maximize the final $||z||_1$ when this occurs. An overview of how this
optimization naturally arises as a regret bound for Thompson sampling
is provided in Section \ref{sec:overview}.

% (a) start growing the non-negative
% $\mathcal{C}$-dimensional vector $z$ of play counts of all policies
% from the all-zeros point, (b) wait until the first time that some
% suboptimal policy $c \neq c^\star$ is ``eliminated'', in the sense
% $z \cdot D(\theta^\star || \theta) \approx \log T$
% $\forall \theta \in S_{c}'$, (c) set $\sigma(1) = c$, $z_1 = z$, (d) 
% no further growth is allowed to occur in $z$ along dimension $c$ in
% future, (d) .

%% The ingredients in proving the general theorem above include the use
%% of a maximal, self-normalizing inequality to control the tails of sums
%% of sub-exponential random variables (i.e., cycle times in an ergodic
%% MDP and related random variables). This, in turn, helps to give a
%% uniform, sample-path concentration result for empirical
%% KL-divergences, which can be used to track the shrinkage of the
%% posterior in terms of the play counts of different policies in
%% $\mathcal{C}$.
% \section{Overview of Analytical Techniques}

% compare to UCRL constants and show improvement over flat bandit? show
% coupled nature? even though \#variables is $|\mathcal{C}|$, constant
% could be much smaller ...

We also have the following square-root scaling for the usual notion of
regret for MDPs \citep{JakschOA10}:
\begin{thm}[Regret bound for TSMDP]
\label{thm:regret}
Under the hypotheses of Theorem \ref{thm:main}, with
$0 < \delta \leq 1$, for the TSMDP algorithm, there exists $T_1 > 0$
such that with probability at least $1-2\delta$, for all $T \geq T_1$,
$T\mu^\star - \sum_{t=1}^T r(S_t,A_t) =
O\left(\sqrt{\frac{T}{\bar{\tau}_{c^\star}} \log \left(\frac{\log
        T}{\delta} \right)}\right)$.
\end{thm}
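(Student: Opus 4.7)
The plan is to reduce the regret bound to a concentration statement about cycle rewards under the optimal policy, using Theorem~\ref{thm:main} to kill off the suboptimal-action contribution. Write
\[
T\mu^\star - \sum_{t=1}^T r(S_t,A_t) \; = \; \underbrace{\sum_{t=1}^T (\mu^\star - r(S_t,A_t)) \mathbbm{1}\{A_t \neq c^\star(S_t)\}}_{\text{(I)}} \; + \; \underbrace{\sum_{t=1}^T (\mu^\star - r(S_t,A_t)) \mathbbm{1}\{A_t = c^\star(S_t)\}}_{\text{(II)}}.
\]
Since rewards are bounded, Theorem~\ref{thm:main} yields (I) $= O(\log T)$ with probability at least $1-\delta$. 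The remaining work is to control (II), which is a fluctuation term.

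For (II), the key structural observation is that consecutive epochs in which $c^\star$ is chosen by the algorithm are \emph{i.i.d.\ cycles} of the Markov chain induced by $c^\star$ on $m_{\theta^\star}$, because every epoch begins at $s_0$ by construction. Let $(\tau_k, R_k)$ denote the length and cumulative reward of the $k$-th such cycle, so $\expect{\tau_k} = \bar{\tau}_{c^\star}$ and $\expect{R_k} = \mu^\star\,\bar{\tau}_{c^\star}$ by the renewal reward theorem and the definition of $\mu^\star$. Under Assumption~\ref{ass:ergodic}, return times to $s_0$ under $c^\star$ have sub-exponential tails, so both $\tau_k$ and $R_k$ are sub-exponential. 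I would then rewrite (II), up to an additive $O(\log T)$ boundary term for the partially-completed first and last $c^\star$-epoch intersecting $[1,T]$, as $\sum_{k=1}^{K^\star(T)} (\mu^\star \tau_k - R_k)$, where $K^\star(T)$ is the number of completed $c^\star$-epochs by time $T$; deterministically $K^\star(T) \leq T/\tau_{\min} \leq T$.

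I would then apply a Bernstein-type tail bound for sums of i.i.d.\ sub-exponential variables to the partial sums $M_n := \sum_{k=1}^n (\mu^\star \tau_k - R_k)$. A direct application gives $|M_n| = O(\sqrt{n \log(1/\delta')} + \log(1/\delta'))$ with probability $1-\delta'$. To make this uniform both in the random index $K^\star(T)$ and in $T \geq T_1$, I would use a peeling argument over geometric ranges of $n$ (and of $T$), replacing $\delta'$ by $\delta/\log^2 T$; this is exactly what produces the $\log(\log T/\delta)$ factor in the stated bound. Substituting the near-linear growth $K^\star(T) \lesssim T/\bar{\tau}_{c^\star}$ (which follows from a separate sub-exponential concentration of $\sum_k \tau_k$ around $K^\star(T)\bar{\tau}_{c^\star}$) yields the claimed $O(\sqrt{(T/\bar{\tau}_{c^\star})\log(\log T/\delta)})$.

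The main obstacle I expect is the careful handling of the stopping-time structure: the index set $\{k : C_k = c^\star\}$ is determined by the Bayesian posterior sampling, hence is a random subset of $\mathbb{N}$ dependent on the entire history rather than a prefix of epochs. One has to verify that restricting to this subset preserves the i.i.d.\ structure of the cycle pairs $(\tau_k,R_k)$ — this follows because given $C_k = c^\star$, the cycle starting at the freshly-visited $s_0$ under $c^\star$ is distributed as an independent copy regardless of prior history — but it must be used in conjunction with a maximal/stopping-time version of Bernstein (as in the self-normalized concentration machinery the authors develop elsewhere in the paper) so that the peeling bound is valid at the random time $K^\star(T)$. Combining this with part (I) via a union bound over the two $\delta$-events delivers the $1-2\delta$ confidence level in the statement.
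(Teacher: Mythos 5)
Your overall strategy is the paper's: split off the suboptimal contribution and bound it by $O(\log T)$ via Theorem~\ref{thm:main}, then treat the optimal-policy contribution as a mean-zero sum over i.i.d.\ recurrence cycles of $m_{\theta^\star}$ under $c^\star$ (sub-exponential by the exponential tail of the return time, mean zero by renewal--reward), and control it uniformly at the random cycle count with a maximal, LIL-type concentration bound --- the paper does exactly this with its pre-generated $Q$-matrix coupling (Section~\ref{sec:alternative}) and Lemma~\ref{lem:maximal}, which is the ``self-normalized machinery'' you point to in place of your peeling argument; both yield the $\log(\log T/\delta)$ factor.

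The one substantive discrepancy is your decomposition. You split by the per-instant indicator $\mathbbm{1}\{A_t = c^\star(S_t)\}$, whereas the paper splits by the per-epoch indicator $\mathbbm{1}\{C_k = c^\star\}$. Your term (II) therefore includes instants lying inside epochs where a suboptimal policy $c \neq c^\star$ is in force but happens to satisfy $c(S_t) = c^\star(S_t)$; the trajectory at those instants is generated by $c$, not $c^\star$, so they are not segments of $c^\star$-recurrence cycles and cannot be folded into your i.i.d.\ sum $\sum_k(\mu^\star\tau_k - R_k)$. This leftover is harmless --- it is bounded by $2r_{\max}$ times the total duration of suboptimal epochs, which is $O(\log T)$ with high probability --- but that duration bound is an epoch-level statement (it is what Proposition~\ref{prop:optim} actually proves, via $\sum_{c\neq c^\star}\tilde{\tau}_{N_c(T),c}$), not a consequence of the statement of Theorem~\ref{thm:main} alone, which only counts instants where the \emph{action} disagrees with $c^\star$. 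Switching to the epoch-based split, as the paper does, removes the issue entirely; otherwise you must add the duration argument explicitly.
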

This can be compared with the probability-at-least
$(1-\delta)$ regret bound of 
$O\left( \mathcal{D} |\mathcal{S}|
  \sqrt{|\mathcal{A}|T\log\left(\frac{T}{\delta} \right)} \right)$
for UCRL2 \citep[Theorem 4]{JakschOA10}, with $\mathcal{D}$ being the
{\em diameter}\footnote{The diameter D is the time it takes to move
  from any state $s$ to any other state $s'$, using an appropriate
  policy for each pair of states $s,s'$.} of the true MDP.

% Due to space constraints, the proofs of Theorems \ref{thm:main} and
% \ref{thm:regret} are provided in the Appendix.

The following sections show how the conclusions of Theorem
\ref{thm:main} are applicable to various MDPs and illustrate the
behavior of the scaling constant $\mathsf{C}$, showing that
significant gains are obtained in the presence of correlated
parameters.

\subsection{Application: Discrete Parameter Spaces}
\label{sec:appln1}
We show here how the conclusion of Theorem \ref{thm:main} holds in a
setting where there the true MDP is known to be one among finitely
many candidate models (MDPs).

\begin{assumption}[Finitely many parameters, ``Grain of truth''
  prior]
  \label{ass:finite}
  The prior probability distribution $\pi$ is supported on finitely
  many parameters: $|\Theta| < \infty$. Moreover,
  $\pi(\{\theta^\star\}) > 0$.
\end{assumption}

\begin{thm}[Regret-type bound for TSMDP, Finite parameter setting]
  \label{thm:mainfinite}
  Suppose Assumptions \ref{ass:ergodic}, \ref{ass:abscon},
  \ref{ass:unique} and \ref{ass:finite} hold. Then, with
  $\epsilon' = 0$, (a) Assumption \ref{ass:largeprob} holds, and (b)
  Assumption \ref{ass:largedenominator} holds with $a_2 = 0$ and
  $a_4 = 0$.  Consequently, the conclusion of Theorem \ref{thm:main}
  holds, namely: Let $\epsilon, \delta \in (0,1)$, and let $c^\star$
  be the unique optimal stationary policy for the true MDP
  $m_{\theta^\star}$. For the TSMDP algorithm, there exists
  $T_0 \equiv T_0(\epsilon) > 0$ such that with probability at least
  $1-\delta$, it holds for all $T \geq T_0$ that
  $\sum_{t=1}^T \mathbbm{1}\{A_t \neq c^\star(S_t)\} \leq \mathsf{B} +
  \mathsf{C} \log T$,
  where $\mathsf{B} = \mathsf{B}(\delta,m_{\theta^\star},\pi)$ is a
  problem- and prior-dependent quantity independent of $T$, and
  $\mathsf{C}$ is the value of the optimization problem
  (\ref{eqn:main}) with $a_4 = 0$.

% \footnote{Note
%     that $a_4$ in (\ref{eqn:deftau}) is the constant from Assumption
%     \ref{ass:largedenominator}(B).}
%   \begin{equation}
%     \label{eqn:mainfinite}
%     \begin{aligned}
%       &\max && \sum_{l=1}^{|\mathcal{C}|-1} x_l(l)\\
%       &\text{ s.t.}
%       && x_l \in \mathbb{R}_+^{|\mathcal{C}|}, \quad \forall l = 1, 2, \ldots, |\mathcal{C}|-1, \\
% %      &&& x_l(|\mathcal{C}|) = 0, \quad \forall l = 1, 2, \ldots, |\mathcal{C}|-1, \\
%       &&& x_i(l) = x_l(l), \quad \forall i \geq l,  l = 1, 2, \ldots, |\mathcal{C}|-1, \\
%       &&& x_i \geq x_j, \quad \forall 1 \leq j \leq i \leq |\mathcal{C}| - 1, \\
%       &&& \sigma: \{1, 2, \ldots, |\mathcal{C}|-1 \} \to \mathcal{C} \setminus \{c^*\} \; \mbox{\emph{injective}}, \\
%       &&& \min_{\theta \in S_{\sigma(l)}'} \quad x_l \cdot D(\theta^\star || \theta) =
%       (1+a_4)\left(\frac{1+\epsilon}{1-\epsilon}\right), \quad \forall 1 \leq l \leq
%       |\mathcal{C}|-1.
%     \end{aligned}
%   \end{equation}
\end{thm}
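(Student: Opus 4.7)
The plan is to reduce the statement to Theorem \ref{thm:main} by directly verifying Assumptions \ref{ass:largeprob} and \ref{ass:largedenominator} in the finite setting; all other hypotheses of Theorem \ref{thm:main} are given.

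Part (b) is essentially immediate. Since $D_c(\theta^\star || \theta^\star) = 0$ for every $c$, the parameter $\theta^\star$ always lies in the set $\{\theta \in \Theta : \sum_c k_c \bar{\tau}_c D_c(\theta^\star || \theta) \leq 1\}$, for every choice of nonnegative integers $(k_c)$. Hence this set has prior mass at least $\pi(\{\theta^\star\}) > 0$ by Assumption \ref{ass:finite}, so both (A) and (B) of Assumption \ref{ass:largedenominator} hold with $a_1 = a_3 = \pi(\{\theta^\star\})$ and $a_2 = a_4 = 0$.

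For part (a), I would begin from
\[ \pi_{t_k}(S_{c^\star}) \;\geq\; \frac{\pi(\{\theta^\star\}) \, W_{t_k}(\theta^\star)}{Z_{t_k}} \;=\; \frac{\pi(\{\theta^\star\})}{Z_{t_k}}, \qquad Z_{t_k} := \sum_{\theta \in \Theta} \pi(\{\theta\}) \, W_{t_k}(\theta), \]
exploiting $W_{t_k}(\theta^\star) \equiv 1$. The task reduces to upper-bounding $Z_{t_k}$ by a constant on the near-ideal event. I would split the finite sum according to whether $\theta$ is ``observationally equivalent'' to $\theta^\star$ under the policies that have actually been played. For such $\theta$, ergodicity together with $D_c(\theta^\star || \theta) = 0$ on every played $c$ forces $p_\theta(s_1, c(s_1), \cdot) = p_{\theta^\star}(s_1, c(s_1), \cdot)$ on every visited state, giving $W_{t_k}(\theta) = 1$ exactly and contributing at most $\pi(\{\theta\})$ to $Z_{t_k}$. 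For the remaining $\theta$, combining the near-ideal envelope with the log-likelihood bound $\Gamma$ from Assumption \ref{ass:abscon} yields
\[ \log W_{t_k}(\theta) \;\leq\; - \sum_c N_c(k) \bar{\tau}_c D_c(\theta^\star || \theta) \;+\; C_0 \sum_c \sqrt{k_c \log(e_2 \log k_c)} \]
for a constant $C_0 = C_0(\Gamma, e_1, |\mathcal{S}|)$.

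The main obstacle is to ensure that the square-root fluctuation on the right does not swamp the linear-in-$k_c$ drift. Here finiteness of $\Theta$ is essential: it yields a strictly positive gap $\Delta := \min\{D_c(\theta^\star || \theta) : \theta \in \Theta,\ c \in \mathcal{C},\ D_c(\theta^\star || \theta) > 0\}$, so every non-equivalent $\theta$ admits at least one distinguishing policy $c_\theta$ with $D_{c_\theta}(\theta^\star || \theta) \geq \Delta$. A case split on the play count $k_{c_\theta}$ lets me bound $W_{t_k}(\theta)$ by an absolute constant $M$ in either regime: when $k_{c_\theta}$ is large, the drift dominates and $W_{t_k}(\theta) \leq 1$; when $k_{c_\theta}$ is small, the fluctuation term in that policy is itself $O(1)$, while contributions from policies with $D_c = 0$ vanish on observed transitions. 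Summing over the at most $|\Theta|$ parameters gives $Z_{t_k} \leq M |\Theta|$, so $p^\star := \pi(\{\theta^\star\})/(M |\Theta|) > 0$ satisfies Assumption \ref{ass:largeprob}. With both assumptions in hand, Theorem \ref{thm:main} applies verbatim (with $a_4 = 0$) to yield the stated regret bound.
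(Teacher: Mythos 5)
Your proposal is correct and follows essentially the same route as the paper: part (b) is verbatim the paper's argument ($\theta^\star$ always lies in the weighted KL-neighborhood, so its prior mass gives $a_1 = a_3 = \pi(\{\theta^\star\})$ and $a_2 = a_4 = 0$), and part (a) matches the paper's Lemma \ref{lem:wtbd1finite}, which lower-bounds each per-policy contribution to $-\log W_{t_k}(\theta)$ by a constant $-\lambda_{\theta,c}$ (linear drift $k_c \bar{\tau}_c D_c(\theta^\star || \theta)$ against an $O(\sqrt{k_c \log\log k_c})$ fluctuation under the near-ideal frequencies), takes the maximum over the finitely many pairs $(\theta, c)$, and concludes $\pi_{t_k}(S_{c^\star}) \geq \pi(\theta^\star) e^{-\lambda |\mathcal{C}|} \equiv p^\star > 0$ exactly as you do via $W_{t_k}(\theta^\star) = 1$ and a uniform constant bound on the normalizer. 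Your explicit treatment of policies with $D_c(\theta^\star || \theta) = 0$, whose log-likelihood contributions vanish identically on the visited recurrent states, is a point the paper's own proof glosses over (its claim that the lower bound tends to $\infty$ fails for such $(\theta,c)$), but it does not alter the structure of the argument.
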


% The proof of this result appears in the Appendix. 

\subsection{Application: Continuous Parameter Spaces}
\label{sec:continuous}

To illustrate the generality of our result, we apply our main result
(Theorem \ref{thm:main}) to obtain a regret bound for Thompson
Sampling with a {\em continuous} prior, i.e.,
$\Theta \in \mathbb{R}^p$, and $\pi$ a probability density\footnote{By
  a probability density on $\mathbb{R}^p$, we mean a probability
  measure absolutely continuous w.r.t. Lebesgue measure on
  $\mathbb{R}^p$.} on $\mathbb{R}^p$. For ease of exposition, let us
consider a $2$-state, $2$-action MDP: $\mathcal{S} = \{1, 2\}$,
$\mathcal{A} = \{1, 2\}$ (the theory can be applied in general to
finite-state, finite-action MDPs). The (known) reward in state $s_i$
is $r_i$, $i \in \{1,2\}$, irrespective of the action played, i.e.,
$r(i, a) = r_i$, $\forall i \in \{1,2\}, a \in \mathcal{A}$, with
$r_1 < r_2$. All the uncertainty is in the transition kernel of the
MDP, parameterized by the canonical parameters
$\left(p(1, a, 2), p(2, a, 1)\right)_{a = 1, 2}$. Hence, we take the
parameter space to be $\Theta = [0,1]^4$, with the
identification\footnote{Note that we retain only $4$ {\em independent}
  parameters of the MDP model.}
$\theta = \left(\theta^{(1)}_{12}, \theta^{(1)}_{21},
  \theta^{(2)}_{12}, \theta^{(2)}_{21}\right) \in \Theta$
and $\theta^{(i)}_{jl} = p_{\theta}(j, i, l)$ $\forall i,j,l$. It
follows that the optimal policy for a parameter $\theta$ is one that
maximizes the probability of staying at state $2$:
\[ c^{\mathsf{OPT}}(\theta) \equiv \left(c(1), c(2)\right) = ({j_1},
{j_2}), \quad j_1 = \arg\max_{i} \theta^{(i)}_{12}, \quad j_2 =
\arg\min_{i} \theta^{(i)}_{21}. \]
Imagine that the TSMDP algorithm is run with initial/recurrence state
$1$ and prior $\pi$ as the uniform density on the sub-cube
$[\upsilon, 1-\upsilon]^4$, $0 < \upsilon < 1/2$ on the MDP
$m_{\theta^\star}$, $\theta^\star \in \Theta$. Also, without loss of
generality, let
$\upsilon < \theta^{\star(2)}_{12} < \theta^{\star(1)}_{12} <
1-\upsilon, \quad \upsilon < \theta^{\star(1)}_{21} <
\theta^{\star(2)}_{21} < 1-\upsilon$,
implying that
$c^\star \equiv c^{\mathsf{OPT}}(\theta^\star) = (1, 1)$, i.e., the
optimal policy is to always play action $1$. It can be checked that
under this setup, Assumptions \ref{ass:ergodic}, \ref{ass:abscon} and
\ref{ass:unique} hold. The following result establishes the validity
of Assumptions \ref{ass:largeprob} and \ref{ass:largedenominator} in
this continuous prior setting.

\begin{thm}[Regret-type bound for TSMDP, Continuous parameter/prior
  setting]
  \label{thm:maincontinuous}
  In the above MDP, with $\epsilon' > 0$ small enough, (a) Assumption
  \ref{ass:largeprob} holds, and (b) Assumption
  \ref{ass:largedenominator} holds with $a_2 = 2$ and $a_4 = 1$.
  Consequently, the conclusion of Theorem \ref{thm:main} holds.
\end{thm}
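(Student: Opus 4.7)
The plan is to verify Assumptions \ref{ass:largeprob} and \ref{ass:largedenominator} separately, exploiting the fact that for Bernoulli transition kernels with success probabilities bounded in $[\upsilon, 1-\upsilon]$, the marginal KL divergence $D_c(\theta^\star \| \theta)$ is equivalent (up to positive multiplicative constants depending only on $\upsilon$) to a quadratic form in $\theta - \theta^\star$. Specifically, combining Pinsker's inequality with the upper bound $\mathbb{KL}(p \| q) \leq (p-q)^2/(q(1-q))$, one obtains constants $L_1, L_2 > 0$ such that for every $c \in \mathcal{C}$,
\[
  L_1 \sum_{s \in \mathcal{S}} \pi^{(c)}_s \bigl(\theta^{(c(s))}_{s,\bar s} - \theta^{\star(c(s))}_{s,\bar s}\bigr)^2
  \;\leq\; D_c(\theta^\star \| \theta) \;\leq\;
  L_2 \sum_{s \in \mathcal{S}} \pi^{(c)}_s \bigl(\theta^{(c(s))}_{s,\bar s} - \theta^{\star(c(s))}_{s,\bar s}\bigr)^2,
\]
where $\bar s$ denotes the state other than $s$. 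This reduces both parts of Assumption \ref{ass:largedenominator} to estimating the Lebesgue volume of an ellipsoid of the form $\{\theta : \sum_\alpha W_\alpha (\theta_\alpha - \theta^\star_\alpha)^2 \leq 1\}$ intersected with the cube $[\upsilon, 1-\upsilon]^4$, where $\alpha$ ranges over the four canonical parameters and $W_\alpha$ aggregates the contributions $k_c \bar\tau_c \pi^{(c)}_{s(\alpha)}$ from the policies that actually read parameter $\alpha$.

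\textbf{Verifying Assumption \ref{ass:largedenominator}.} For part (A), each weight $W_\alpha$ is at most a constant multiple of $k = \sum_c k_c$, so the ellipsoid's semi-axis in every coordinate direction is at least $\Omega(1/\sqrt{k})$. Since $\theta^\star$ lies in the interior of the cube (by the hypothesis on $\upsilon$), the intersected set contains a box of side $\Omega(1/\sqrt{k})$ in each of the four directions, yielding Lebesgue (hence prior) mass at least $c \cdot k^{-2}$, i.e., $a_2 = 2$. For part (B), with $k_{c^\star} \geq k - 3\log^2(k)$: only the two parameters $\theta^{(1)}_{12}, \theta^{(1)}_{21}$ touched by $c^\star$ accumulate a weight of order $k$, giving radii $\Omega(1/\sqrt k)$ in those directions; the remaining two parameters $\theta^{(2)}_{12}, \theta^{(2)}_{21}$ are constrained only by policies $c \neq c^\star$ with total play count bounded by $3\log^2(k)$, yielding coordinate restrictions of size $\Omega(1/\log k)$. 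Multiplying, the mass is $\Omega(1/(k \log^2 k))$, which suffices for $a_4 = 1$ after absorbing the polylog into the constant $a_3$ (or, more carefully, noting that the near-ideal $k_c$ are not adversarial but arise from the algorithm's actual trajectories).

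\textbf{Verifying Assumption \ref{ass:largeprob}.} The plan is to lower-bound the ratio
\[
  \pi_{t_k}(S_{c^\star}) \;=\; \frac{\int_{S_{c^\star}} W_{t_k}(\theta)\, \pi(d\theta)}{\int_\Theta W_{t_k}(\theta)\, \pi(d\theta)}.
\]
Under the near-ideal frequency condition, the log-weight can be decomposed as $\log W_{t_k}(\theta) = -\sum_c N_c(k) \bar\tau_c D_c(\theta^\star \| \theta) + E_k(\theta)$, where $E_k(\theta)$ is an error term controlled uniformly in $\theta$ by Assumption \ref{ass:abscon} and the given deviation tolerance. Because the decision regions $S_c$ in this $2 \times 2$ MDP are intersections of two half-spaces in $\Theta$ (comparing $\theta^{(1)}_{12}$ versus $\theta^{(2)}_{12}$ and $\theta^{(1)}_{21}$ versus $\theta^{(2)}_{21}$), and by Assumption \ref{ass:unique} $\theta^\star$ lies strictly in the interior of $S_{c^\star}$, we can fix a small open Euclidean ball $B \subset S_{c^\star}$ centered at $\theta^\star$ on which the uniform prior has strictly positive density. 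A Laplace-type lower bound on the numerator (using that $W_{t_k}$ is maximized near $\theta^\star$) together with the crude upper bound $W_{t_k}(\theta) \leq e^{\Gamma k}$ on the denominator produces a strictly positive $p^\star = p^\star(e_1, e_2)$ independent of $k$.

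\textbf{Main obstacle.} The principal technical difficulty is the polylogarithmic slack in part (B) of Assumption \ref{ass:largedenominator}: the direct volume estimate yields $\Omega(1/(k \log^2 k))$ rather than $\Omega(1/k)$. Tightening this to an exact $a_4 = 1$ requires either a combinatorial refinement --- noting that only the single suboptimal policy $(2,2)$ constrains both "redundant" coordinates $\theta^{(2)}_{12}, \theta^{(2)}_{21}$, so among any allocation of $3\log^2(k)$ non-optimal plays, at least one redundant direction is only weakly constrained --- or relaxing the exponent to $a_4 = 1 + \eta$ for arbitrary $\eta > 0$ and absorbing the polylog. I would handle this via a dedicated lemma isolating the worst-case $k_c$ allocation. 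A secondary subtlety is verifying that the error term $E_k(\theta)$ in the posterior analysis is uniformly $o(k)$ on $\Theta$; this follows from Assumption \ref{ass:abscon} combined with the square-root deviation bound in the near-ideal event, but requires care in interchanging sums and bounds.
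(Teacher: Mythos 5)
Your verification of Assumption \ref{ass:largedenominator} follows essentially the same route as the paper: the two-sided quadratic equivalence of $D_c(\theta^\star\|\theta)$ with a weighted squared Euclidean distance (Pinsker plus a Taylor/$\chi^2$ upper bound), followed by ellipsoid volume estimates, giving $k^{-2}$ for part (A) and, for part (B), two tightly constrained directions of width $\Omega(1/\sqrt{k})$ and two weakly constrained ones. Your observation that the direct computation yields $\Omega(1/(k\log^2 k))$ rather than $\Omega(1/k)$ is legitimate: the paper's own proof assigns total weight at most $6\log^2(k)$ to the two redundant coordinates and then simply asserts the volume is at least $a_3 k^{-1}$, silently dropping the $\log^{-2}k$ factor. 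Your proposed fixes (absorbing into $a_4 = 1+\eta$, or a sharper combinatorial accounting of which suboptimal policies read which coordinates) are both reasonable ways to close what is at worst a polylog discrepancy.

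The genuine gap is in your verification of Assumption \ref{ass:largeprob}. Lower-bounding the numerator $\int_{S_{c^\star}} W_{t_k}\,d\pi$ by a Laplace argument gives a quantity that is only polynomially small in $k$ (of order $k^{-2}$, or $k^{-1}$ when almost all plays are of $c^\star$), whereas your proposed denominator bound $W_{t_k}(\theta)\le e^{\Gamma k}$ (really $e^{\Gamma t_k}$) is exponentially large; the resulting ratio decays exponentially in $k$ and cannot yield a $k$-independent $p^\star$. To get a constant you must show the denominator $\int_\Theta W_{t_k}\,d\pi$ is \emph{comparable} to the numerator, i.e., a matching Laplace-type \emph{upper} bound on the full normalizing integral, which uses the Pinsker lower bound $D_c(\theta^\star\|\theta)\gtrsim \|\theta-\theta^\star\|^2$ (restricted to the coordinates each policy actually reads) so that the posterior mass outside a fixed neighborhood of the posterior's center is controlled. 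The paper sidesteps this entirely by exploiting the special structure of the example: with a uniform prior and canonical Bernoulli parameters, the posterior factors exactly into a product of truncated Beta densities over the four independent components, and a Chebyshev argument (its Lemma on concentration of Beta mass) shows each factor retains at least constant mass in a fixed interval around the corresponding component of $\theta^\star$, whose product is contained in $S_{c^\star}$. Either route works, but your write-up as stated does not close this step; you would need to replace the crude pointwise bound on the denominator with a genuine integral upper bound, and also track that the posterior is centered at the empirical frequencies rather than at $\theta^\star$ itself (the near-ideal condition only places them within $O(\sqrt{\log\log k_c/k_c})$ of each other).
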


% \subsection{Application: MDPs under the ``Tabula Rasa'' or Canonical
%   Parameterization}
% \label{sec:appln2}

% \subsection{Application: Stochastic Multi-Armed Bandits }
% \label{sec:appln3}

\subsection{Dependence of the Regret Scaling on MDP and Parameter
  Structure}
\label{sec:scalingconst}

We derive the following consequence of Theorem \ref{thm:main}, useful
in its own right, that explicitly guarantees an improvement in regret
directly based on the Kullback-Leibler resolvability of parameters in
the parameter space -- a measure of the coupling across policies in the
MDP.

\begin{thm}[Explicit Regret Improvement due to shared Marginal
  KL-Divergences]
  \label{thm:atleastLmain}
  % Let $T$ be large enough such that $\max_{\theta \in \Theta, a \in
  % \mathcal{A}} D(\theta^\star_a || \theta_a) \leq
  % \frac{1+\epsilon}{1-\epsilon}\log T$. 
  Suppose that $\Delta > 0$ and the integer $L \in \mathbb{Z}^+$ are such that
\[\forall c \neq c^\star, \theta \in S_c' \;  |\{\hat{c} \in \mathcal{C}: \hat{c} \neq c^\star,
D_{\hat{c}}(\theta^\star || \theta) \geq \Delta \}| \geq L, \]
i.e., at least $L$ coordinates\footnote{Note that the coordinate
  corresponding to the optimal policy $c^\star$ is excluded from the
  condition.} of $D(\theta^\star || \theta)$ are at least
$\Delta$. Then, the multiplicative scaling factor $\mathsf{C}$ in
(\ref{eqn:main2}) satisfies %
$ \mathsf{C}\leq \left(\frac{|\mathcal{C}| - L}{\tilde{\Delta}}\right)
\frac{2(1+a_4)(1+\epsilon)}{1-\epsilon}$,%
where
$\tilde{\Delta} \bydef \min\left\{\Delta, \min_{c \neq c^\star, \theta
    \in S_c'} D_c(\theta^\star || \theta) \right\}$.
\end{thm}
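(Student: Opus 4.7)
The plan is to upper bound the LP value $\mathsf{C}$ in (\ref{eqn:main}) by a direct arithmetic argument that exploits the hypothesis. Write $N := |\mathcal{C}|-1$ and $K := (1+a_4)(1+\epsilon)/(1-\epsilon)$, and identify coordinate $i \in \{1,\ldots,N\}$ of every vector $x_l$ with the policy $\sigma(i)$; the final coordinate is pegged at $0$. A first reduction collapses the objective: the freezing constraint $x_i(l)=x_l(l)$ for $i \geq l$ combined with componentwise monotonicity $x_1 \leq x_2 \leq \cdots \leq x_N$ forces $x_N(l) = x_l(l)$ for every $l \in \{1,\ldots,N\}$, so $||x_N||_1 = \sum_{l=1}^N x_l(l)$.

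The heart of the argument is a two-term split. At the last step, choose a minimizer $\theta^{(N)} \in S_{\sigma(N)}'$ realizing $x_N \cdot D(\theta^\star || \theta^{(N)}) = K$. By the theorem's hypothesis applied at $\theta^{(N)}$, there exists $T \subseteq \{1,\ldots,N\}$ with $|T| \geq L$ such that $D_{\sigma(i)}(\theta^\star || \theta^{(N)}) \geq \Delta$ for every $i \in T$. Extracting only these non-negative terms from the equality gives $\Delta \sum_{i \in T} x_N(i) \leq K$, whence $\sum_{i \in T} x_N(i) \leq K/\Delta \leq K/\tilde\Delta$. For each of the at most $N-L$ indices $i \in \{1,\ldots,N\} \setminus T$, apply the equality at step $i$: some $\theta^{(i)} \in S_{\sigma(i)}'$ satisfies $x_i \cdot D(\theta^\star || \theta^{(i)}) = K$, so the single non-negative term $x_i(i)\, D_{\sigma(i)}(\theta^\star || \theta^{(i)}) \leq K$, and $D_{\sigma(i)}(\theta^\star || \theta^{(i)}) \geq \tilde\Delta$ (which follows directly from the definition of $\tilde\Delta$ and $\theta^{(i)} \in S_{\sigma(i)}'$), giving $x_N(i) = x_i(i) \leq K/\tilde\Delta$. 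Summing the two contributions yields $||x_N||_1 \leq K/\tilde\Delta + (N-L)K/\tilde\Delta = (|\mathcal{C}|-L)K/\tilde\Delta$, which already implies the theorem's bound (the extra factor of $2$ being slack).

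The main obstacle I anticipate is disentangling the combinatorics of the optimization (\ref{eqn:main}): pinpointing that freezing plus monotonicity reduce $||x_N||_1$ to $\sum_l x_l(l)$, and then realizing that the step-$l$ equality constraint can be exploited in two complementary ways -- to bound a single coordinate $x_l(l)$ by $K/\tilde\Delta$ whenever the corresponding $D_{\sigma(l)}$ value is at least $\tilde\Delta$, and to bound the aggregate $\sum_{i \in T} x_l(i)$ by $K/\Delta$ using the $L$ large coordinates of $D(\theta^\star || \theta)$ guaranteed by the hypothesis. Once these two uses are in hand, the rest of the proof is essentially a one-line arithmetic conclusion invariant to the particular choice of the injection $\sigma$.
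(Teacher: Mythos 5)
Your proposal is correct and rests on the same two ingredients as the paper's proof: the step-$l$ equality constraint forces each $x_l(l)\leq \chi/\tilde{\Delta}$ (with $\chi=(1+a_4)(1+\epsilon)/(1-\epsilon)$), and the final constraint combined with the hypothesis of $L$ coordinates of $D(\theta^\star||\theta)$ exceeding $\Delta$ bounds the aggregate mass on those coordinates by $\chi/\Delta$. The paper packages this as a contradiction via an inner-product comparison with the truncated vector $D^{\tilde{\Delta}}$ and the reference point $y^{\circ}=\frac{\chi}{\tilde{\Delta}}(1,\ldots,1,0)$, whereas you split the sum over $T$ and its complement directly; both arguments yield the sharper bound $(|\mathcal{C}|-L)\chi/\tilde{\Delta}$, of which the stated factor of $2$ is slack.
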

The result assures a non-trivial additive reduction of
$\Omega\left(\frac{L}{\Delta} \log T\right)$ from the naive decoupled
regret, whenever any suboptimal model in $\Theta$ can be resolved
apart from $\theta^\star$ by at least $L$ actions in the sense of
marginal KL-divergences of their observations. 

Although the net number of decision vectors $x_l$ in (\ref{eqn:main})
is nearly $|\mathcal{C}| = O(|\mathcal{A}|^{\mathcal{S}})$, the scale
of $\mathsf{C}$ can be {\em significantly less} than the number of
policies $|\mathcal{C}|$ owing to the fact that the posterior
probability of several parameters is driven down simultaneously via
the marginal K-L divergence terms $D(\theta^\star || \theta)$. Put
differently, using a standard bandit algorithm (e.g., UCB) naively
with each arm being a stationary policy will perform much worse with a
scaling like $|\mathcal{C}|\log(T)$. We show {\bf (Appendix
  \ref{app:queueing})} an example of an MDP in which the number of
states can be arbitrarily large but which has only one uncertain
scalar parameter, for which Thompson sampling achieves a much better
regret scaling than its frequentist counterparts like UCRL2
\citep{JakschOA10} which are forced to explore all possible state
transitions in isolation.

% On the improvement relative to a flat bandit algorithm over all
% possible optimal policies: We thank the reviewers for pointing out
% that a clearer illustration of the gain over a ``flat'' bandit
% algorithm over all possible MDP policies would be useful. We felt
% constrained by the available space in the paper, but we present here a
% brief sketch to show that the regret bound can be MUCH smaller than
% linear in the \# of possible policies (in fact, the bound in this
% example does not even scale with the \# of policies!). We will add this
% example to the final version.

\section{Sketch of Proof and Techniques used to show Theorem \ref{thm:main}}
\label{sec:overview}
At the outset, TSMDP is a randomized algorithm, whose decision is
based on a random sample from the parameter space $\Theta$. The
essence of Thompson sampling performance lies in understanding how the
posterior distribution evolves as time progresses.

Let us assume, for ease of exposition, that we have finitely many
parameters, $|\Theta| < \infty$. Writing out the expression for the
posterior density at time $t$ using Bayes' rule, we have,
$\forall \theta \in \Theta$,
\begin{align*}
  &\pi_{t+1}(d \theta) \propto
  p_{\theta}(S_t,A_{t+1},S_{t+1}) \pi_t(d\theta) 
%  &\propto \frac{p_{\theta}(S_t,A_{t+1},S_{t+1})}{p_{\theta^\star}(S_t,A_{t+1},S_{t+1})} \pi_t(d\theta) \\
  = \exp\left(-\sum_{i=0}^{t-1} \log
   \frac{p_{\theta^\star}(S_t,A_{t+1},S_{t+1})}{p_{\theta}(S_t,A_{t+1},S_{t+1})} \right) \pi_0(d\theta).
\end{align*}
The sum in the exponent above can be rearranged into
\begin{align*}
  \sum_{c \in \mathcal{C}} V_c(t) \sum_{s_1 \in \mathcal{S}} \frac{V_{s_1,c}(t)}{V_c(t)} \sum_{s_2 \in \mathcal{S}} \frac{1}{V_{s_1,c}(t)} \sum_{i=0}^{t-1} \mathbbm{1}\{\left(S_{i+1}, S_i\right) = \left(s_2, s_1\right), C_{e(i)} = c\} \log
  \frac{p_{\theta^\star}(s_1,c(s_1),s_2)}{p_{\theta}(s_1,c(s_1),s_2)},
\end{align*}
in which,
$V_c(t) \bydef \sum_{i=0}^{t-1} \mathbbm{1}\{C_{e(i)} = c\}$, and
$V_{s_1,c}(t) \bydef \sum_{i=0}^{t-1} \mathbbm{1}\{C_{e(i)} = c, S_i =
s_1\}$.%
% and $l_{s_1,c,s_2}^{(\theta^\star||\theta)}(i)$ is the log likelihood
% ratio
% $\mathbbm{1}\{S_{i+1} = s_2, S_i = s_1, C_{e(i)} = c\} \times \log
% \frac{p_{\theta^\star}(s_1,c(s_1),s_2)}{p_{\theta}(s_1,c(s_1),s_2)}$.
The above sum is an empirical quantity depending on the (random)
sample path $S_0, A_1, S_1, A_2, \ldots$ To gain a clear understanding
of the posterior evolution, let us replace the empirical terms in the
above sum by their ``ergodic averages'' (i.e., expected value under
the respective invariant distribution) under the respective
policies. In other words, for each $c \in \mathcal{C}$ and
$s_1 \in \mathcal{S}$, let us approximate
$\frac{V_{s_1,c}(t)}{V_c(t)} \approx \pi^{(c)}_{s_1}$, the stationary
probability of state $s_1$ when the policy $c$ is applied to the true
MDP $m_{\theta^\star}$. In the same way, we approximate
$\frac{\sum_{i=0}^{t-1} \mathbbm{1}\{S_{i+1} = s_2, S_i = s_1,
  \hat{C}_i = c\}}{V_{s_1,c}(t)} \approx
p_{\theta^\star}(s_1,c(s_1),s_2)$.%
% [Note: This is an approximation since (a) we replace a random quantity
% by a constant, and (b) we imagine, in a sense, that whenever policy
% $c$ is applied, we encounter a ``typical'' frequency of state transitions
% exactly according to its stationary distribution, instead of the
% actual, tightly-coupled Markovian state evolution.]
With these ``typical'' estimates, our approximation to the posterior
density simply becomes
\begin{equation}
\label{eqn:postapprox}
 \pi_{t+1}(d \theta) \appropto e^{-\sum_{c \in \mathcal{C}} V_c(t)
  D_c(\theta^\star || \theta)} \; \pi_0(d\theta), 
\end{equation}
% where $\sum_{c \in \mathcal{C}} V_c(t) = t$.

Expression (\ref{eqn:postapprox}) is the result of effectively
eliminating one of the two sources of randomness in the dynamics of
the TSMDP algorithm -- the variability of the environment, i.e., state
transitions. The other source of randomness arises due to the
algorithm's sampling behavior from the posterior distribution. We use
approximation (\ref{eqn:postapprox}) to extract {\em two basic
  insights} that determine the posterior shrinkage and regret
performance of TSMDP even for general parameter spaces: For a total
time horizon of $T$ steps, we claim {\bf Property 1. The true model
  always has ``high'' posterior mass.}  Assuming
$\pi_0(\{\theta^\star\}) > 0$ (the discrete ``grain of truth''
property), observe that (\ref{eqn:postapprox}) implies
$\pi_t(\{\theta^\star\}) \geq
\frac{\int_{\theta^\star} e^{-\sum_{c \in \mathcal{C}} V_c(t)
    D_c(\theta^\star || \theta)} \pi_0(d\theta)}{\int_{\Theta} e^0
  \pi(d\theta)} = \pi_0(\{\theta^\star\}) > 0$
at all times $t$. Thus, roughly, the true parameter $\theta^\star$ is
sampled by TSMDP with a frequency at least $\pi_0(\theta^\star) > 0$
during the entire horizon, i.e.,
$V_{c^\star}(t) \geq  t \pi_0(\theta^\star)$
$\forall t$.

We also have {\bf Property 2. Suboptimal models are sampled only as
  long as their posterior probability is above $\frac{1}{T}$}. The
total number of times a parameter with posterior mass less than
$\frac{1}{T}$ can be picked in Thompson sampling is at most
$\frac{1}{T} \times T = O(1)$, which is irrelevant as far as the
scaling of the regret with $T$ is concerned. 

With these two insights, we can now estimate the net number of times
bad parameters may be chosen. To this end, partition the parameter
space $\Theta$ into the optimal decision regions $\{S_c\}_{c \in
  \mathcal{C}}$, setting $S_c' \bydef \{\theta \in S_c:
D_{c^\star}(\theta^\star || \theta) = 0 \}$ and $S_c'' \bydef S_c
\setminus S_c'$.  Now, for each $c \neq c^\star$ and $\theta \in
S_c''$, $D_{c^\star}(\theta)$ is positive; thus, since $\Theta$ is
finite, $\exists \xi > 0$ such that $D_{c^\star}(\theta) > \xi$
uniformly across all such $\theta$. But this in turn implies, using
Property 1 and (\ref{eqn:postapprox}), that the posterior probability
of $\theta$ decays exponentially with time $t$: $\pi_{t+1}(d\theta)
\leq \frac{\pi_0(\theta)}{\pi_0(\theta^\star)} e^{-t
  \pi_0(\theta^\star) \xi}$.  Hence, such parameters $\theta \in
S_c''$, $c \neq c^\star$ are sampled at most a {\em constant} number
of times in any time horizon with high probability and do not
contribute to the overall regret scaling.

The interesting and non-trivial contribution to the regret comes from
the amount that parameters from $S_c'$, $c \neq c^\star$ are
sampled. To see this, let us follow the vector of play counts of
policies, i.e., $\left(V_c(t)\right)_{c \neq c^\star}$ as it starts
growing from the all-zeros vector at $t = 0$, increasing by $1$ in
some coordinate at each time step $t$. By Property 2 above, once
$\sum_{c \in \mathcal{C}} V_c(t) D_c(\theta^\star || \theta) \approx
\log T$
is reached, sampling from $S_c'$ effectively ceases. Thus, considering
the ``worst-case'' path that $\left(V_c(t)\right)_c$ can follow to
delay this condition for the longest time across all $c \neq c^\star$,
we arrive (approximately) at the optimization problem (\ref{eqn:main})
stated in Theorem \ref{thm:main}.

% the following optimization problem over the space of ``play counts''
% of policies $c \in \mathcal{C}$ can be framed.
%  \begin{equation*}
%     \begin{aligned}
%       &\max && \sum_{l=1}^{|\mathcal{C}|} z_l(l)\\
%       &\text{ s.t.}
%       && \mbox{$z_1, z_2, \ldots, z_{|\mathcal{C}|}$: non-negative vectors $\in \mathbb{Z}^{|\mathcal{C}|}$} \\
% %      &&& \mbox{(each co-ordinate of $\mathbb{R}^{|\mathcal{C}|}$ is for a policy $c \in \mathcal{C}$),} \\
%       &&& \mbox{$\sigma$: an ordering of suboptimal policies $c$}, \\
%       &&& \min_{\theta \in S_{\sigma(l)}'} \quad  z_l \cdot  D(\theta^\star || \theta) = \log T \quad \forall l, \\
%       &&& \mbox{No growth in co-ordinate $\sigma(l)$ for $\{z_l,
%         z_{l+1}, \ldots\}$}.
%     \end{aligned}
%   \end{equation*}

Though the argument above was based on rather coarse approximations to
empirical, path-based quantities, the underlying intuition holds true
and is made rigorous (Appendix \ref{app:main}) to show that this is
indeed the right scaling of the regret. This involves several
technical tools tailored for the analysis of Thompson sampling in
MDPs, including (a) the development of self-normalized concentration
inequalities for sub-exponential IID random variables (epoch-related
quantities), and (b) control of the posterior probability using
properties of the prior in Kullback-Leibler neighborhoods of the true
parameter, using techniques analogous to those used to establish
frequentist consistency of Bayesian procedures
\citep{ghosal2000,ChoRam08:consistency}.

%% the upshot of
%%   the calculation is that, for large horizons $T$, it accurately
%%   captures the regret scaling of Thompson sampling. Our intuition up
%%   until now can, in fact, be made rigorous using high-probability
%%   estimates and maximal, self-normalizing concentration inequalities
%%   to justify the approximation of empirical objects to their ergodic
%%   means. Additionally, the act of replacing an average of state
%%   transitions by a ``typically'' distributed state transition is
%%   facilitated precisely due to the recurrence-cycle based structure of
%%   TSMDP (Algorithm \ref{alg:tsmdp}). 

\section{Numerical Evaluation}
\label{sec:numerical}
%% In this section, we present empirical results from implementing the
%% TSMDP learning algorithm on a prototype parameterized MDP.

{\bf MDP and Parameter Structure:} Along the lines of the motivating
example in the Introduction, we model a single-buffer, discrete time
queueing system with a maximum occupancy of $50$
packets/customers. The state of the MDP is simply the number of
packets in the queue at any given time, i.e., $\mathcal{S} = \{0,1,2,
\ldots 50\}$. At any given time, one of $2$ actions -- Action $1$
(SLOW service) and Action $2$ (FAST service) may be chosen, i.e.,
$\mathcal{A} = \{1,2\}$. Applying SLOW (resp. FAST) service results in
serving one packet from the queue with probability $0.3$ (resp. $0.8$)
if it is not empty, i.e., the service model is Bernoulli($\mu_i$)
where $\mu_i$ is the packet processing probability under service type
$i = 1,2$. Actions $1$ and $2$ incur a per-instant cost of $0$ and
$25$ units respectively. In addition to this cost, there is a holding
cost of $1$ per packet in the queue at all times. The system gains a
reward of $200$ units whenever a packet is served from the
queue\footnote{A candidate physical interpretation of such a queueing
  system is in the form of a restaurant with $|\mathcal{S}|$ tables,
  with the possibility to add more ``chefs'' or staff into service
  when desired (service rate control). However, adding staff costs the
  restaurant, as does customers waiting long until their orders
  materialize (holding cost).}.

The arrival rate to the queueing system -- the probability with which
a new packet enters the buffer -- is modeled as being
state-dependent. Most importantly, the function $\lambda:\mathcal{S}
\to [0,1]$ mapping a state to its corresponding packet arrival rate is
parameterized using a standard Normal distribution ( make this
clearer, avoid confusion with Normal probability distn.) as follows:
$\lambda(s) = \kappa e^{-\frac{(s - \bar{\mu})^2}{2
    \bar{\sigma}^2}}$. Here, $\bar{\mu}$ and $\bar{\sigma}$ represent
the $2$-dimensional (mean,standard deviation) parameter for the
arrival rate curve, and $\kappa$ is chosen to be a constant that makes
$\max_{s \in \mathcal{S}} \lambda(s) = 0.95$ (to ensure valid
Bernoulli packet arrival distributions). For the true, unknown MDP, we
set $\theta^\star \equiv (\bar{\mu}, \bar{\sigma}) = (0.6,0.3) \times
|\mathcal{S}|$ ( clarify Cartesian prod). Figure
\ref{fig:truemdp} depicts (a) the optimal policy $c^\star$ over
$\mathcal{S}$, (b) the stationary distribution under the optimal
policy and (c) the (parameterized) mean arrival rate curve over
$\mathcal{S}$.

{\bf Simulation Results:} We simulate both TSMDP and the UCRL2
algorithm (Jaksch et al \citep{JakschOA10}) for the parameterized
queueing MDP above. For UCRL2, we run the algorithm both with (a)
fixed confidence intervals $\delta \in \{0.01, 0.1, 0.5\}$ and (b)
$\delta = 1/T$ (horizon-dependent confidence intervals\footnote{This
  choice of $\delta$ is used by Jaksch et al to show a logarithmic
  expected regret bound for UCRL2 \citep{JakschOA10}.}). We initialize
TSMDP with a uniform prior for the normalized parameter
$\frac{1}{|\mathcal{S}|}(\bar{\mu}, \bar{\sigma})$ on the discretized
space $\{0.05, 0.10, 0.15, \ldots, 0.95\} \times \{0.2, 0.4, 0.6,
\ldots, 2.0\}$.

Figure \ref{fig:ts-ucrl2} shows the results of running the TSMDP and
UCRL2 algorithms for various time horizons $T = 10$ up to $T =
1,000,000$ time steps, and across $1,000$ sample runs. We report both
the average regret (w.r.t. a best per-step average reward of
$96.4088$) and the $20\%-80\%$ percentile of the regret across the
runs. Thompson sampling is seen to significantly outperform UCRL2 as
the horizon length increases. This advantage is presumably due to the
fact that TSMDP is capable of exploiting the parameterized structure
of $\lambda$ better than UCRL2, which updates each confidence interval
only when the associated state is visited.

\ifdefined\WITHFIGS
\begin{figure}[htbp]
  \centering \includegraphics[scale=0.4]{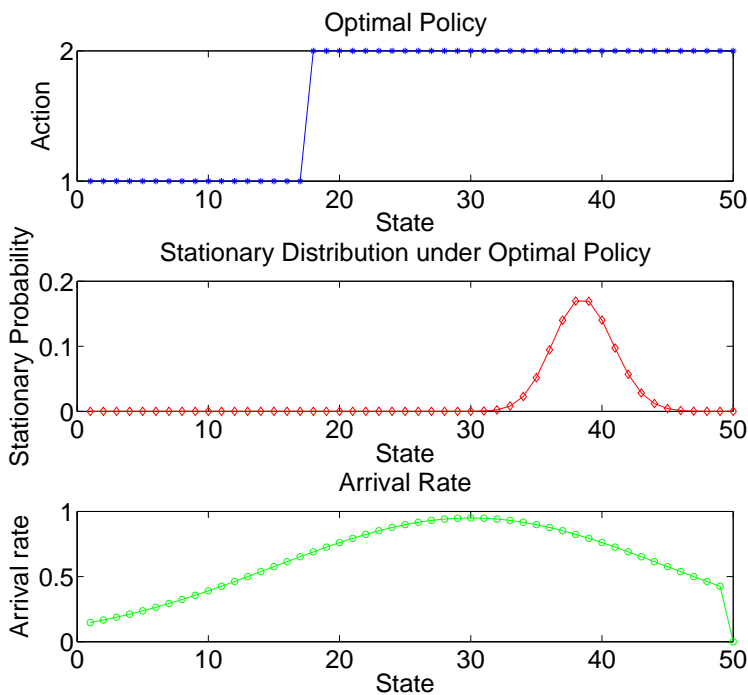}
  \caption{The true MDP for the queueing system simulation. (a) The
    optimal policy $c^\star$ selecting an action ($1,2$) for each
    state in $0,1,2,\ldots,50$. The optimal average reward is
    $96.4088$ units. (b) Stationary probability distribution for the
    optimal policy. (c) Parameterized (by a Gaussian curve),
    state-dependent, average arrival rate profile.}
  \label{fig:truemdp}
\end{figure}
\fi

\ifdefined\WITHFIGS
\begin{figure}[htbp]
  \centering \includegraphics[scale=0.4]{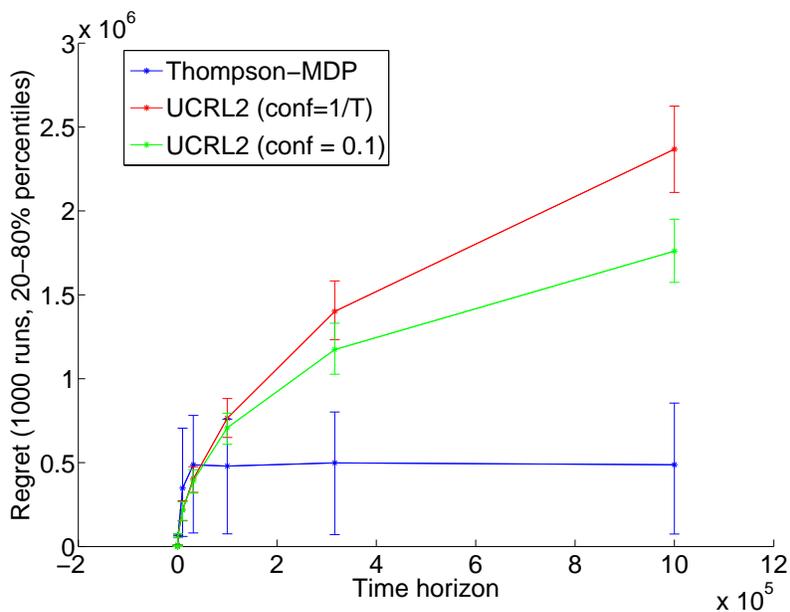}
  \caption{Regret for TSMDP and UCRL2 \citep{JakschOA10}, across 1000
    runs, for the single-queue parameterized MDP. Both mean regret and
    $[20,80]\%$-percentiles are shown.}
  \label{fig:ts-ucrl2}
\end{figure}
\fi

% \ifdefined\WITHFIGS
% \begin{figure}
% \centering
% \begin{minipage}{.4\textwidth}
%   \centering
%   \includegraphics[scale=0.15]{truemdp}
%   \captionof{figure}{{\footnotesize The true MDP for the queueing system simulation. (a) The
%      optimal policy $c^\star$ selecting an action ($1,2$) for each
%      state in $0,1,2,\ldots,50$. The optimal average reward is
%      $96.4088$ units. (b) Stationary probability distribution for the
%      optimal policy. (c) Parameterized (by a Gaussian curve),
%      state-dependent, average arrival rate profile.}}
%   \label{fig:truemdp}
% \end{minipage}%
% \hspace{0.7cm}
% \begin{minipage}{.4\textwidth}
%   \centering
%   \includegraphics[scale=0.15]{ts-ucrl2-taqeria}
%   \captionof{figure}{{\footnotesize Regret for TSMDP and UCRL2 \citep{JakschOA10}, across 1000
%     runs, for the single-queue parameterized MDP. Both mean regret and
%      $[20,80]\%$-percentiles are shown.}}
%   \label{fig:ts-ucrl2}
% \end{minipage}
% \end{figure}
% \fi

\section{Related Work}
\label{sec:relatedwork}

% (situate the paper well w.r.t. papers by Ben Van-roy -- we deal
% with frequentist notion of regret as opposed to BVR et al's Bayesian
% regret; Bayesian regret is freq regret averaged wrt prior; while this
% is useful and elegant it is weaker than standard regret in the sense
% that it is the standard freq regret averaged across models wrt the
% prior; further, it is not indicative of how structure of a specific
% model affects regret performance - we provide structural, ``gap
% dependent'' regret bounds - check this wrt ``hardness'' explanation in
% BvR papers; we provide the first known frequentist regret analysis of
% thompson sampling for MDPs ...)

A line of recent work
\citep{AgrawalG,KauKorMun12:thompson,KorKauMun13:tsexpfam,AgrGoy13:contextual,GopManMan14:thompson}
has demonstrated that the Thompson sampling enjoys near-optimal regret
guarantees for {\em multi-armed bandits} -- a widely studied subclass
of reinforcement learning problems.

% w.r.t. Osband-VanRoy-Russo: non-episodic continuous online learning, so
% no dependence on episode length!; problem-dependent, frequentist
% logarithmic learning bounds; in fact first logarithmic (in time horiz)
% learning bounds for ts in mdps; we show that essentially TS follows
% opt policy $T - O(\log T)$ times whp; capture correlations in
% parameterized MDPs in our perf bound

The work of Osband et al \citep{OsbRusVan13:more}, perhaps the most
relevant to us, studies the {\em Bayesian regret} of Thompson
sampling for MDPs. In this setting, the true MDP is assumed to have
been drawn from the same prior used by the algorithm; consequently,
the Bayesian regret becomes the standard frequentist regret {\em
  averaged} across the entire parameter space w.r.t. the prior. While
this is useful, it is arguably weaker than the standard frequentist
notion of regret in that it is an averaged notion of standard regret
(w.r.t. the specific prior), and moreover is not indicative of how the
structure of the MDP exactly influences regret performance. Moreover,
the learning model considered in their work is {\em episodic} with
fixed-length {\em episodes} and resets, as opposed to the non-episodic
learning setting treated in this work, where we are able to show the
first known structural (``gap-dependent'') regret bounds for Thompson
sampling in fixed but unknown parameterized MDPs.

Prior to this, \citet{OrtegaBraun10} investigate the consistency
performance of posterior-sampling based control rules, again in the
fully Bayesian setting where nature's prior is known.

% w.r.t. REGAL, UCRL2, RMax: parameterized MDPs; alg not dependent on
% analysis, so possibility to tighten; alg does not use theory-dependent
% confidence bounds; in fact, no clean confidence sets for general
% parameterized problems; only for ``decoupled'' params; regret bounds
% depend on diameter, but not problem dependent structural bounds

Several deterministic algorithms relying on the ``optimism under
uncertainty'' philosophy have been proposed for RL in the frequentist
setup considered here
\citep{Brafman2003,JakschOA10,Bartlett2009}. These algorithms work by
maintaining confidence intervals for each transition probability and
reward, computing the most optimistic MDP satisfying all confidence
intervals and adaptively shrinking the confidence intervals each time
the relevant state transition occurs. This strategy is potentially
inefficient in parameterized MDPs where, potentially, observing a
particular state transition can give information about other parts of
the MDP as well.
% Moreover, known regret bounds for these algorithms
% are ``worst-case, gap-independent'' and in terms of the diameter of
% the MDP, instead of being structural and problem-dependent as we show
% here.

%% w.r.t. Dyagilev-Mannor-Shimkin: they consider learning parameterized MDPs
%% as we do; but for discounted case only; PAC model, so does not really
%% capture notion of regret

The parameterized MDP setting we consider in this work has been
previously studied by other authors. Dyagilev et al
\citep{DyaManShi08:param} investigate learning parameterized MDPs for
{\em finite} parameter spaces in the discounted setting (we consider
the average-reward setting), and demonstrate sample-complexity results
under the Probably-Approximately-Correct (PAC) learning model, which
is different from the notion of regret.

%% Agrawal-teneketzis-anantharam: give fundamental lower bounds on regret
%% scaling for general parameterized mdp problems; tight in the sense
%% that for finite param spaces, an alg is proposed which achieves it;
%% although our analysis is also for finite params, the ATA strategy
%% altogether breaks down for infinite param spaces; tsmdp is
%% well-defined for any type of param spaces though, and has guarantees
%% for continuous obs spaces e.g. linear contextual bandits

The {\em certainty equivalence} approach to learning MDPs
\citep{kumar1986stochastic} -- building the most plausible model given
available data and using the optimal policy for it -- is perhaps
natural, but it suffers from a serious lack of adequate exploration
necessary to achieve low regret \citep{KaeLitMoo96:survey}.

A noteworthy related work is the seminal paper of Agrawal et al
\citep{AgrTenAna89:markov} that gives fundamental lower bounds on the
asymptotic regret scaling for general, parameterized reinforcement
learning problems. The bound is also tight, in the sense that for
finite parameter spaces, the authors show a learning algorithm that
achieves the bound. Even though our analytical results also hold for
the setting of a finite parameter space, the strategy in
\citep{AgrTenAna89:markov} relies {\em crucially} on the finiteness
assumption. This is in sharp contrast to Thompson sampling which can
be defined for any kind of parameter space. In fact, Thompson sampling
has previously been shown to enjoy favorable regret guarantees with
continuous priors in linear bandit problems
\citep{AgrGoy13:contextual}.

\section{Conclusion and Future Work}
% what we did
% what we can do later

% We have proposed the TSMDP algorithm in this paper for solving
% parameterized reinforcement learning problems, in which the parameter
% controls the structural elements of the underlying MDP. Thompson
% sampling is a natural Bayesian-inspired algorithm that factors in all
% available information by maintaining a ``prior'' distribution over
% candidate parameters, and gradually shrinks the prior as more
% information is obtained via state transitions in the environment. The
% flexible structure of the algorithm allows it to be applied in a wide
% variety of complex reinforcement learning problems. We have derived a
% general-purpose regret bound for TSMDP that quantifies the logarithmic
% growth in the number of suboptimal actions chosen by the
% algorithm. This supports the increasing evidence for the success of
% Thompson sampling and pseudo-Bayesian methods for reinforcement
% learning/bandit problems.

We have proposed the TSMDP algorithm in this paper for solving
parameterized RL problems, and have derived regret-style bounds for
the algorithm under significantly general initial priors. This
supports the increasing evidence for the success of Thompson sampling
and pseudo-Bayesian methods for reinforcement learning/bandit
problems.

Moving forward, it would be useful to extend the performance results
for Thompson sampling to continuous parameter spaces, as well as
understand what happens when feedback can be delayed. Specific
applications to reinforcement learning problems with additional
structure would also prove insightful. In particular, studying the
regret of Thompson Sampling for MDPs with linear function
approximation \citep{MelMeyRib08:funcapprox} would be of interest --
in this setting, the parameterization of the MDP is in terms of linear
weights corresponding to a known basis of state-action value
functions, and one could develop a variant of Thompson sampling which
uses information from sample paths to update its posterior over the
space of weights.

\newpage 
\appendix

\noindent {\bf \Large Supplementary Material (Appendices and
  References) \\\\ {\Large \it Thompson Sampling for Learning Parameterized
    Markov Decision Processes}} 

\section{Proof of Theorem \ref{thm:main}}
\label{app:main}

% With regard to the TSMDP algorithm (Algorithm \ref{alg:tsmdp}), let
% us use the notation $e(t)$ to denote the epoch index $k$
% corresponding to a time instant $t$. Also, without loss of
% generality\footnote{This is because the TSMDP algorithm always
% samples from
% $\{c^{\mathsf{OPT}}(\theta): \theta \in \Theta\} \subset
% \mathcal{C}$},
% we take $\mathcal{C} \equiv \{c^{\mathsf{OPT}}(\theta): \theta \in \Theta\}$.

\subsection{Expressing the ``posterior'' distribution}
At time $t$, the ``posterior distribution'' $\pi_t$ that TSMDP uses
can be expressed by iterating Bayes' rule (\ref{eqn:bayesrule}):
\begin{align*}
  \forall \mathcal{M} \subseteq \Theta \quad \pi_t(\mathcal{M}) &=
  \frac{W_t(\mathcal{M})}{W_t(\Theta)} = \frac{\int_{\mathcal{M}}
    W_t(\theta) \pi(d\theta)}{\int_\Theta W_t(\theta) \pi(d\theta)},
\end{align*}
with the posterior density or {\em weight} $W_t(\theta)$ simply being
the likelihood ratio of the entire observed history up to $t$ under
the MDPs $m_\theta$ and $m_{\theta^\star}$, i.e.,
\begin{align}
  &W_t(\theta) \bydef \prod_{i=0}^{t-1}
  \frac{p_\theta(S_i,A_{i+1},S_{i+1})}{p_{\theta^\star}(S_i,A_{i+1},S_{i+1})} \nonumber \\
  &= \exp \left( \sum_{c \in \mathcal{C}} \sum_{i=0}^{t-1}
    \mathbbm{1}\{ C_{e(i)} = c \} \log
    \frac{p_\theta(S_i,A_{i+1},S_{i+1})}{p_{\theta^\star}(S_i,A_{i+1},S_{i+1})}
  \right) \nonumber \\
  &= \exp \left( \sum_{c \in \mathcal{C}} \sum_{(s_1,s_2) \in
      \mathcal{S}^2} \sum_{i=0}^{t-1} \mathbbm{1}\left\{ C_{e(i)} = c,
      (S_i,S_{i+1}) = (s_1,s_2) \right\} \log
    \frac{p_\theta(s_1,c(s_1),s_2)}{p_{\theta^\star}(s_1,c(s_1),s_2)}
  \right) \nonumber \\
  &= \exp \left( -\sum_{c \in \mathcal{C}} V_c(t) \sum_{(s_1,s_2) \in
      \mathcal{S}^2} \sum_{i=0}^{t-1} \frac{\mathbbm{1}\left\{
        C_{e(i)} = c, (S_i,S_{i+1}) = (s_1,s_2) \right\}}{V_c(t)} \log
    \frac{p_{\theta^\star}(s_1,c(s_1),s_2)}{p_{\theta}(s_1,c(s_1),s_2)}
  \right), \label{eqn:wt1}
\end{align}
where $V_c(t) \bydef \sum_{i=0}^{t-1} \mathbbm{1}\left\{ C_{e(i)} = c
\right\}$ is the total number of time instants up to $t$ for which the
epoch policy $c$ was used.

We will find it convenient in the sequel to introduce the following
decomposition of the number of epochs up to epoch $k$ for which $c$
was chosen to be the epoch policy:
  \begin{equation} 
    \label{eqn:pcdecomp}
    N_c(k) \bydef \sum_{l=1}^k \mathbbm{1}\{\theta_l \in S_c\} = N_c'(k) +
    N_c''(k),
  \end{equation}
 \[N_c'(k) \bydef \sum_{l=1}^k \mathbbm{1}\{\theta_l
  \in S_c'\}, N_c''(k) \bydef \sum_{l=1}^k \mathbbm{1}\{\theta_l \in
  S_c''\}.\]

\subsection{An alternative probability space} 
\label{sec:alternative}
In order to analyze the dynamics of the TSMDP algorithm, it is useful
to work in an equivalent probability space defined as follows. Define
a $\infty \times |\mathcal{C}|$ random matrix $Q$ with elements in
$\mathcal{S} \times \mathcal{A} \times \mathbb{R}$. The rows of $Q$
are indexed by {\em sampling indices} $l = 1, 2, \ldots$, and the
columns by policies in $\mathcal{C}$. For each $c \in \mathcal{C}$,
{\em independently} generate the $c$-th column of $Q$ by applying the
stationary policy $c$ to the MDP $m_{\theta^\star}$, starting from
initial state $s_0$, and noting down the resulting
$(state,action,reward)$ sequence, i.e.,
$Q(l,c) \equiv (Q_1(l,c),Q_2(l,c),Q_3(l,c)) \bydef
({S}^{\theta^\star,c}_l,{A}^{\theta^\star,c}_l,{R}^{\theta^\star,c}_l)$.
For the $c$-th column of $Q$, we let $\tilde{\tau}_{0,c} \bydef 0$,
and
$\tilde{\tau}_{k,c} \bydef \min\{l \geq \tilde{\tau}_{k-1,c}: Q_1(l,c)
= s_0 \}$
$\forall k \geq 1$. In words, $\tilde{\tau}_{k,c}$ is the $k$-th
successive ``virtual time'' at which the MDP $m_{\theta^\star}$ under
policy $c$ returns to the start state $s_0$. We thus have that the
expected first return time to $s_0$, defined earlier in Section
\ref{sec:formal}, satisfies
$\bar{\tau}_{c} = \tilde{\mathbb{E}}[\tilde{\tau}_{1,c}]$. \\

Given the matrix $Q$, we can alternatively simulate the TSMDP
algorithm operating in the MDP $m_{\theta^\star}$ as follows. At each
round $t \geq 1$ with the epoch index $e(t) = k$, if the epoch policy
in effect is $C_k = c$, then the action $A_t =
Q_2(\tilde{\tau}_{N_c(k),c} + t - t_k, c)$ is played, with the next
state (resp. reward) being $S_t = Q_1(\tilde{\tau}_{N_c(k),c} + t -
t_k, c)$ (resp. $R_t = Q_3(\tilde{\tau}_{N_c(k),c} + t - t_k, c)$). \\

Let $\tilde{\mathbb{P}}$ denote the probability measure for the
alternative probability space described above. The following
equivalence lemma records the fact that the distributions of the
$(state, action, reward)$ sample path seen by the TSMDP algorithm
under the original probability measure $\mathbb{P}$ and under in the
alternative measure $\tilde{\mathbb{P}}$ are both identical.

\begin{lemma}[Equivalence of probability spaces]
  For each $(state,action,reward)$ sequence \\
  $\{(s_t,a_t,r_t)\}_{t=1}^T$, we have, under the TSMDP algorithm,
  \[ \tilde{\mathbb{P}}\left[ \forall 1 \leq t \leq T \; (S_t,A_t,R_t)
    = (s_t,a_t,r_t) \right] = \prob{\forall 1 \leq t \leq T \;
    (S_t,A_t,R_t) = (s_t,a_t,r_t)}. \]
\end{lemma}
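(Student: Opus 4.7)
The plan is to prove the equivalence by an induction over epochs, exploiting the strong Markov property at each return to the start state $s_0$. The key structural fact is that every epoch boundary $t_k$ satisfies $S_{t_k}=s_0$, so that both probability spaces ``regenerate'' at epoch ends. I would couple the two spaces by assuming, as the induction hypothesis at epoch $k$, that $(S_t,A_t,R_t)_{t=0}^{t_k}$ has the same joint distribution under $\mathbb{P}$ and under $\tilde{\mathbb{P}}$.

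The first step is to observe that, in both spaces, the draw of $\theta_{k+1}$ at the start of epoch $k+1$ is produced from the same deterministic function of the past history via the Bayes iteration (\ref{eqn:bayesrule}); hence under the induction hypothesis the joint distribution of $(S_t,A_t,R_t)_{t=0}^{t_k}$ together with $\theta_{k+1}$ (and therefore $C_{k+1}=c^{\mathsf{OPT}}(\theta_{k+1})$) is identical in the two spaces.

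The second and main step is to show that, conditional on $C_{k+1}=c$ and on the entire history through $t_k$, the epoch-$(k+1)$ sub-trajectory $(S_t,A_t,R_t)_{t=t_k+1}^{t_{k+1}}$ has the same distribution under both measures. Under $\mathbb{P}$, the strong Markov property applied at the stopping time $t_k$ shows that, starting from $S_{t_k}=s_0$ and running policy $c$ in $m_{\theta^\star}$ until the first return to $s_0$, the resulting excursion is independent of the past and distributed as a fresh excursion of $m_{\theta^\star}$ under $c$ from $s_0$ to $s_0$. Under $\tilde{\mathbb{P}}$, the sub-trajectory is read off as the $N_c(k+1)$-th excursion of the $c$-th column of $Q$; by construction that column is a Markov chain of $m_{\theta^\star}$ under $c$ started at $s_0$, and the strong Markov property applied within the column makes its successive $s_0$-to-$s_0$ excursions i.i.d., each with exactly the same law as a fresh excursion of $m_{\theta^\star}$ under $c$. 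Moreover, different columns of $Q$ are mutually independent, so conditional on the policy sequence $(C_1,\ldots,C_{k+1})$ the excursions used across distinct epochs are jointly independent in $\tilde{\mathbb{P}}$, matching the (strong-Markov) joint independence in $\mathbb{P}$. This yields agreement of the two distributions up to $t_{k+1}$, completing the induction.

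The final bookkeeping step handles a horizon $T$ that falls in the interior of some epoch: the same strong-Markov argument applies to the (possibly incomplete) prefix of the last excursion, since only a finite prefix of the relevant column of $Q$ is consulted. The main obstacle, and the only delicate point, is to verify that each excursion in $\tilde{\mathbb{P}}$ is consumed at most once so that the ``fresh excursion'' claim really applies; this follows because $N_c(k)$ is non-decreasing in $k$ and increases by exactly one whenever the epoch policy equals $c$, so the mapping from epochs with policy $c$ to excursions within the $c$-th column of $Q$ is injective. The rest of the argument is formal and reduces to assembling the per-epoch factorizations of the joint density over the finite prefix of length $T$.
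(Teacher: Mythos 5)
Your argument is correct. Note that the paper itself supplies no proof of this lemma -- it is stated as a ``record'' of the equivalence and immediately used -- so there is nothing to compare against except the implicit standard argument, which is exactly what you have written out: induction over epochs, regeneration at each return to $s_0$, the strong Markov property giving a fresh $s_0$-to-$s_0$ excursion under $\mathbb{P}$, and the i.i.d.\ structure of excursions within each independent column of $Q$ (together with the injectivity of the epoch-to-excursion assignment) giving the matching conditional law under $\tilde{\mathbb{P}}$. The one phrase I would tighten is ``the draw of $\theta_{k+1}$ \ldots is produced from the same deterministic function of the past history'': the posterior $\pi_{t_k}$ is a deterministic functional of the history, but the draw itself is random, so the correct statement (which your next clause effectively makes) is that the \emph{conditional distribution} of $\theta_{k+1}$ given the history is the same measure $\pi_{t_k}$ in both spaces, and that this sampling randomness is independent of $Q$. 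With that wording fixed, the proof is complete and is the justification the authors implicitly rely on.
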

Henceforth, we will work in the alternative space with measure
$\tilde{\mathbb{P}}$ but will dispense with the tilde for ease of
notation.

We now develop some useful concentration estimates for the random
sample path matrix $Q$. Define the following empirical estimates:
\begin{itemize}
\item $U_{(s_1,s_2)}(j,c) \bydef \frac{1}{j} \sum_{l=1}^j
  \mathbbm{1}\left\{Q_1(l-1,c) = s_1, Q_1(l,c) = s_2 \right\}$, $s_1,
  s_2 \in \mathcal{S}$, $j \geq 1$, denote the empirical mean number
  of state transitions $s_1 \rightarrow s_2$ down column $c$ of $Q$
  (or the {\em pairwise} empirical frequency),
\item $U(j,c) \bydef \left(U_{(s_1,s_2)}(j,c)\right)_{s_1,s_2 \in
    \mathcal{S}}$ denote the empirical state transition vector for
  policy $c$,
\item $U_{s_1}(j,c) \bydef \sum_{s_2 \in \mathcal{S}}
  U_{(s_1,s_2)}(j,c)$, $s_1 \in \mathcal{S}$, $j \geq 1$, be the {\em
    marginal} empirical frequency, and
\item $U_{s_2|s_1}(j,c) \bydef
  \frac{U_{(s_1,s_2)}(j,c)}{U_{s_1}(j,c)}$, $s_1 \in \mathcal{S}$,
  $s_2 \in \mathcal{S}$, $j \geq 1$, be the {\em conditional}
  empirical frequency (whenever $U_{s_1}(j,c) > 0$; defined to be
  $0$ otherwise)
\end{itemize}
in $j$ virtual time steps. With this alternative view of the TSMDP
execution, equation (\ref{eqn:wt1}) for the posterior probability
density $W_t$ at time $t$ becomes

\begin{equation}
  \label{eqn:wt2}
  -\log W_t(\theta) = \sum_{c \in \mathcal{C}} V_c(t) \sum_{(s_1,s_2) \in
    \mathcal{S}^2} U_{(s_1,s_2)}\left(V_c(t), c\right)  \log
  \frac{p_{\theta^\star}(s_1,c(s_1),s_2)}{p_{\theta}(s_1,c(s_1),s_2)}.
\end{equation}

The following key self-normalized uniform bound controls the large
deviation behavior of the empirical means $U_{(s_1,s_2)}(j,c)$ and the
return times $\tilde{\tau}_{k,c}$. It may be interpreted as a
finite-sample version of the Law of the Iterated Logarithm (LIL).

\begin{proposition}[Uniform concentration for empirical means]
  \label{prop:conc}
  Fix $\delta \in [0,1]$. Then, there exist constants $d_1 \geq 0$,
  $d_2 \geq 0$ such that the following estimates hold with probability at
  least $1-\delta$ for all $k \geq 1$, $c \in \mathcal{C}$, $s_1, s_2
  \in \mathcal{S}$:
  \begin{align} & \left| \tilde{\tau}_{k,c} - k \bar{\tau}_{c}
    \right| \leq \sqrt{ k d_1 \log\left(\frac{|\mathcal{C}|
          |\mathcal{S}|^2 d_2 \log
          k}{\delta} \right)}, \label{eqn:conc1}\\
    &\left|\tilde{\tau}_{k,c} \cdot U_{(s_1,s_2)}(\tilde{\tau}_{k,c}, c) -
      k \bar{\tau}_{c} \cdot {\pi^{(c)}_{(s_1,s_2)}}
    \right| \leq \sqrt{ k d_1 \log\left(\frac{|\mathcal{C}|
          |\mathcal{S}|^2 d_2 \log k}{\delta} \right)}, \label{eqn:conc2} \\
    &\left|\tilde{\tau}_{k,c} \cdot U_{s_1}(\tilde{\tau}_{k,c}, c) -
      k \bar{\tau}_{c} \cdot {\pi^{(c)}_{s_1}}
    \right| \leq \sqrt{ k d_1 \log\left(\frac{|\mathcal{C}|
          |\mathcal{S}|^2 d_2 \log k}{\delta} \right)}. \label{eqn:conc2b}
  \end{align}
\end{proposition}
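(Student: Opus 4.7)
The plan is to exploit the regenerative structure that the return times $\tilde{\tau}_{k,c}$ induce on each column of the sample-path matrix $Q$. Because each column of $Q$ is generated independently by running policy $c$ on $m_{\theta^\star}$ started at $s_0$, the cycle increments $\Delta_{k,c} \bydef \tilde{\tau}_{k,c} - \tilde{\tau}_{k-1,c}$ are IID in $k$ for each fixed $c$; likewise, for each $s_1, s_2 \in \mathcal{S}$, the per-cycle transition counts $N^{(c,s_1,s_2)}_k \bydef \sum_{l=\tilde{\tau}_{k-1,c}+1}^{\tilde{\tau}_{k,c}} \mathbbm{1}\{Q_1(l-1,c) = s_1, Q_1(l,c) = s_2\}$ and the marginal analogues $N^{(c,s_1)}_k$ are IID across $k$. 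By the standard cycle formula (Kac's lemma), $\mathbb{E}[\Delta_{1,c}] = \bar{\tau}_c$, $\mathbb{E}[N^{(c,s_1,s_2)}_1] = \bar{\tau}_c\, \pi^{(c)}_{(s_1,s_2)}$, and $\mathbb{E}[N^{(c,s_1)}_1] = \bar{\tau}_c\, \pi^{(c)}_{s_1}$. Each of the three quantities in \eqref{eqn:conc1}--\eqref{eqn:conc2b} is therefore the absolute deviation of a sum of $k$ centered IID random variables from its mean.

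First I would establish that these per-cycle random variables are sub-exponential. The recurrence hypothesis (Assumption \ref{ass:ergodic}) combined with finiteness of $\mathcal{S}$ and $\mathcal{A}$ and the finiteness of the candidate policy family $\{c^{\mathsf{OPT}}(\theta) : \theta \in \mathrm{supp}(\pi)\}$ implies that for each such policy $c$, the chain induced on the true MDP is an irreducible finite Markov chain; hence $\Delta_{1,c}$ has a geometric tail $\mathbb{P}[\Delta_{1,c} > m] \leq C_1 e^{-C_2 m}$ for constants depending on $m_{\theta^\star}$ and $\mathcal{C}$, yielding a uniform sub-exponential norm. Since $0 \leq N^{(c,s_1,s_2)}_1, N^{(c,s_1)}_1 \leq \Delta_{1,c}$, the per-cycle counts inherit matching sub-exponential parameters.

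Next I would invoke a self-normalized, maximal Bernstein-type concentration inequality for sums of IID sub-exponential random variables (the tool anticipated in the introduction, in the spirit of \citep{delapena2007}). The standard route is peeling: partition $k \in \{1, 2, \ldots\}$ into dyadic blocks $k \in [2^j, 2^{j+1})$, apply a Bernstein tail bound with a carefully chosen deviation level on each block, and union-bound over blocks $j$. This yields, for each fixed $(c, s_1, s_2)$ and any $\delta' > 0$, a simultaneous-in-$k$ deviation bound of the form $\sqrt{k\, d_1 \log(d_2 \log k / \delta')}$, i.e., the LIL-type $\sqrt{k \log \log k}$ scaling rather than the crude $\sqrt{k \log k}$ that a naive union-over-all-$k$ bound would give. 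Applied in turn to $\Delta_{j,c}$, $N^{(c,s_1,s_2)}_j$, and $N^{(c,s_1)}_j$, this produces the three estimates of the proposition for fixed indices.

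Finally I would take a union bound over the finitely many triples $(c, s_1, s_2) \in \mathcal{C} \times \mathcal{S} \times \mathcal{S}$, setting $\delta' = \delta / (3 |\mathcal{C}| |\mathcal{S}|^2)$ (absorbing the factor of $3$ and constants into $d_1, d_2$); the $|\mathcal{C}| |\mathcal{S}|^2$ factor inside the logarithms in \eqref{eqn:conc1}--\eqref{eqn:conc2b} comes directly from this union bound. The main obstacle is the self-normalized maximal inequality itself: one needs a time-uniform tail bound for partial sums of sub-exponential (not merely sub-Gaussian) variables that retains the $\log \log k$ scaling rather than $\log k$. This must combine the Bernstein MGF control appropriate for the heavier cycle-length tail with the dyadic peeling device, and it is where the bulk of the genuine work of the proposition lies; the remaining steps (identifying the regenerative decomposition, establishing sub-exponentiality from recurrence, and the final union bound) are largely structural.
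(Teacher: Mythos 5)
Your proposal is correct and matches the paper's overall architecture: the same regenerative decomposition of each column of $Q$ into IID cycles, the same identification of the per-cycle means via the renewal-reward/Kac formula, the same observation that the per-cycle transition counts are dominated by the cycle lengths and hence inherit the exponential tail guaranteed by irreducibility of the finite chain, and the same final union bound over $(c,s_1,s_2)$ producing the $|\mathcal{C}||\mathcal{S}|^2$ factor inside the logarithm. The one place where you genuinely diverge is the time-uniform maximal inequality, which you correctly identify as the crux. You propose dyadic peeling plus a Bernstein bound on each block; the paper's Lemma \ref{lem:maximal} instead bounds the moment generating function near the origin via a second-order Taylor expansion, builds the exponential supermartingale $M_t = \exp(\lambda \sum_i X_i - \lambda t \mathbb{E}[X_1] - t\beta^2\lambda^2/2)$, and applies the method of mixtures for martingale suprema \citep{delapena2007,robbins1970} to get the $\sqrt{k\log(\log k/\delta)}$ boundary in one shot. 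Both routes deliver the LIL-type $\log\log k$ scaling; the mixture argument is cleaner in that it avoids block-by-block bookkeeping, while peeling is more elementary and makes the constants more transparent. One detail to tighten in your sketch: a pointwise Bernstein bound at the endpoint of a dyadic block does not by itself control the supremum of the centered partial sums over that block (the centered sum is not monotone in $k$), so within each block you still need a maximal form of Bernstein --- e.g., Doob's inequality applied to the very same exponential supermartingale --- which means the MGF control of Lemma \ref{lem:maximal} is unavoidable either way.
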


\begin{proof}
  By the Markov property, it follows that the (non-negative) random
  variables $\tilde{\tau}_{1,c}$, $(\tilde{\tau}_{2,c} -
  \tilde{\tau}_{1,c})$, $(\tilde{\tau}_{3,c} - \tilde{\tau}_{2,c})$,
  $\ldots$, $(\tilde{\tau}_{k,c} - \tilde{\tau}_{k-1,c})$ are
  IID. From standard arguments for finite-state, irreducible Markov
  chains \citet[Lemma 7]{LeeOzdSha13:stationary}, we have that the
  recurrence times to $s_0$ have exponential tails:
  \begin{equation}
    \label{eqn:tautail}
    \forall v \geq 0 \quad \prob{\tilde{\tau}_{1,c} > v} \leq 2 \cdot 2^{-\left(\frac{v}{2 \bar{\tau}_{\max}}\right)},
  \end{equation}
  where $\bar{\tau}_{\max}$ is the maximum expected hitting time, over
  states in the same communicating class as $s_0$, to $s_0$. We also
  have $\expect{\tilde{\tau}_{1,c}} = \bar{\tau}_{c}$. \\
  
  On the other hand, using the definition of
  $U_{(s_1,s_2)}(\tilde{\tau}_{k,c}, c)$, we can write 
  \[ \tilde{\tau}_{k,c} \cdot U_{(s_1,s_2)}(\tilde{\tau}_{k,c}, c) =
  \sum_{j=1}^k B_{c,s_1,s_2}(\tilde{\tau}_{j-1,c}+1,
  \tilde{\tau}_{j,c}), \] where the partial sums
  \[ B_{c,s_1,s_2}(\tilde{\tau}_{j-1,c}+1, \tilde{\tau}_{j,c}) \bydef
  \sum_{l=\tilde{\tau}_{j-1,c}+1}^{\tilde{\tau}_{j,c}}
  \mathbbm{1}\left\{Q_1(l-1) = s_1, Q_1(l) = s_2 \right\}, \quad j =
  1, 2, \ldots, k \] are again non-negative IID random variables due
  to the Markov property, and are bounded by the corresponding cycle
  lengths $(\tilde{\tau}_{j,c} - \tilde{\tau}_{j-1,c})$. Thus,
  $B_{c,s_1,s_2}(1, \tilde{\tau}_{1,c})$ also satisfies the
  exponential tail inequality (\ref{eqn:tautail}) satisfied by
  $\tilde{\tau}_{1,c}$, with mean\footnote{The expectation can be
    computed via the renewal-reward theorem \citep{grimmett92} and
    Markov chain ergodicity.} $\expect{B_{c,s_1,s_2}(1,
    \tilde{\tau}_{1,c})} =
  \frac{\pi^{(c)}_{(s_1,s_2)}}{\pi_{s_0}(\theta^\star,c)} =
  \bar{\tau}_{c} \cdot {\pi^{(c)}_{(s_1,s_2)}}$. \\

  The conclusions of the proposition now follow by (a) appealing to
  the maximal concentration inequality of Lemma \ref{lem:maximal}, and
  (b) taking a union bound over all $c \in \mathcal{C}$, $s_1, s_2 \in
  \mathcal{S}$ with the least possible uniform upper bounds on the
  constants $\eta_1$ and $\eta_2$ guaranteed by Lemma
  \ref{lem:maximal}.
\end{proof}

Lemma \ref{lem:maximal} below gives a concentration bound for the
entire sample path of the empirical mean of an IID process, and may be
viewed as a finite-sample analog of the asymptotic Law of the Iterated
Logarithm (LIL).

  \begin{lemma}[A maximal concentration inequality for random walks with
    sub-exponential increments]
    \label{lem:maximal}
    Let $X_1, X_2, \ldots$ be a sequence of IID random variables such
    that $\prob{|X_1| > v} \leq \alpha_1 e^{-\alpha_2 v}$ for some
    $\alpha_1, \alpha_2 > 0$, and fix $\delta \in [0,1]$. Then, there
    exist constants $\eta_1 \geq 0$, $\eta_2 \geq 0$ such that the
    following event occurs with probability at least $1-\delta$:
    \[ \forall {k \geq 1} \; \left| \sum_{i=1}^k X_i - k \expect{X_1}
    \right| \leq \sqrt{\eta_1 k \log\left( \frac{\eta_2 \log
          k}{\delta} \right)} .\]
  \end{lemma}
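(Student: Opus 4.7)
The plan is to establish this finite-sample law-of-the-iterated-logarithm style bound by combining a one-shot Bernstein-type inequality for sub-exponential random variables with a dyadic peeling (chaining) argument across scales of $k$.

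First I would center the increments by setting $Y_i = X_i - \expect{X_1}$, which are zero-mean and inherit the sub-exponential tail with possibly slightly worse constants. The standard Bernstein inequality for sums of IID centered sub-exponential random variables then yields, for each fixed $k$ and $t > 0$,
\[ \prob{\left|\sum_{i=1}^k Y_i\right| > t} \leq 2 \exp\left(-c \min\left(\frac{t^2}{k v^2}, \frac{t}{K}\right)\right), \]
where $v^2$ and $K$ are sub-exponential variance/scale parameters determined by $\alpha_1,\alpha_2$, and $c$ is universal. At the target scale $t \asymp \sqrt{k \log\log k}$, we have $t/k \to 0$, so for $k$ large the minimum is achieved by the Gaussian term $t^2/(kv^2)$, and the bound reduces to a sub-Gaussian-style exponential in $t^2/k$.

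Second, to convert this pointwise-in-$k$ estimate into a uniform-in-$k$ statement, I would peel the integers into dyadic blocks $B_j \bydef \{k : 2^{j-1} \leq k < 2^j\}$ for $j \geq 1$. Inside each block I apply Etemadi's maximal inequality for partial sums of independent random variables,
\[ \prob{\max_{k \leq 2^j} \left|\sum_{i=1}^k Y_i\right| > t} \leq 3 \max_{k \leq 2^j} \prob{\left|\sum_{i=1}^k Y_i\right| > t/3}, \]
and bound the right-hand side by the Bernstein estimate at $k = 2^j$. Setting the block-level threshold $t_j = \sqrt{\eta_1 \, 2^{j-1} \log(\eta_2 j / \delta)}$ for suitably chosen constants $\eta_1,\eta_2$ makes each block's failure probability at most $(\delta/2)\cdot 2^{-j}$, so a union bound over $j \geq 1$ gives total failure probability at most $\delta$. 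On the success event, any $k$ lying in block $B_j$ (so $j \leq 1 + \log_2 k$) satisfies $|S_k - k\expect{X_1}| \leq t_j \leq \sqrt{\eta_1' \, k \log(\eta_2' \log k / \delta)}$ after mild inflation of the constants, which is the claimed bound.

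The main obstacle is the sub-exponential (as opposed to sub-Gaussian) nature of the summands: Bernstein's inequality switches between a Gaussian and a linear-in-$t$ exponential regime, and one must verify that at the working scale $t \asymp \sqrt{k \log\log k}$ the Gaussian regime is active, which requires $k$ to exceed a constant depending only on $\alpha_1,\alpha_2$. For the finitely many small-$k$ blocks where the exponential regime would instead dominate, I would simply absorb the deviations into $\eta_1,\eta_2$ by applying a crude union bound on the sub-exponential tails of $|X_i|$ directly over those finitely many indices. The iterated logarithm $\log\log k$ arises naturally from the peeling: the block index is $j \asymp \log_2 k$, and the per-block $\log(1/\delta_j) \asymp \log j$ that is needed for summability produces the extra $\log\log k$ factor.
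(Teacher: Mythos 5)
Your proposal is correct in its overall architecture but takes a genuinely different route from the paper. The paper does not use Bernstein plus peeling: it bounds the cumulant generating function by $\Lambda_{X_1}(\lambda) \leq \lambda\expect{X_1} + \beta^2\lambda^2/2$ for $\lambda$ in a neighborhood of zero (via the sub-exponential tail and a second-order Taylor expansion), forms the exponential supermartingale $M_t = \exp\left(\lambda\sum_{i\le t}X_i - \lambda t\expect{X_1} - t\beta^2\lambda^2/2\right)$, and invokes the method of mixtures of de la Pe\~{n}a et al.\ (going back to Robbins--Siegmund) to obtain the uniform-in-$k$ boundary in one stroke; the negative tail is handled by negating the variables. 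Your dyadic peeling with Etemadi's maximal inequality is more elementary and makes the origin of the iterated logarithm completely transparent (block index $j \asymp \log_2 k$, per-block confidence level $\asymp \log j$), at the cost of block-by-block bookkeeping and a case analysis at the Bernstein regime crossover; the method of mixtures avoids both. In spirit the two treat the sub-exponential (rather than sub-Gaussian) tails the same way: by working at a deviation scale where the quadratic approximation to the log-MGF is the operative one.

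One bookkeeping step in your writeup needs correcting. With $t_j = \sqrt{\eta_1\, 2^{j-1}\log(\eta_2 j/\delta)}$, the Bernstein exponent in the Gaussian regime is of order $\eta_1\log(\eta_2 j/\delta)$, so the per-block failure probability is $\left(\delta/(\eta_2 j)\right)^{p}$ with $p \propto \eta_1$ --- polynomial in $j$, not the geometric $(\delta/2)\cdot 2^{-j}$ you claim. Demanding genuinely geometric decay in $j$ would force $t_j \gtrsim \sqrt{2^j\, j} \asymp \sqrt{k\log k}$ and destroy the iterated logarithm. The fix costs nothing: polynomial decay with $p \geq 2$ is already summable over $j$ (take $\eta_1$ large enough that $p \geq 2$ and $\eta_2$ large enough that $\sum_{j\ge 1}(\eta_2 j)^{-2} \leq 1$), which is all the union bound over blocks requires. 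With that correction, and your stated absorption of the finitely many small blocks where the linear Bernstein regime is active, the argument is sound.
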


  \begin{proof}
    We begin by noticing that the exponential tail property implies
    finiteness of the moment generating function in a neighborhood of
    zero: for any $\lambda \in (0,\alpha_2)$, 
    \begin{align*}
      e^{\Lambda_{X_1}(\lambda)} \bydef \expect{e^{\lambda X_1}} &=
      \int_0^\infty \prob{e^{\lambda X_1} > y } dy  \\
      &\leq 1 +  \int_1^\infty \prob{e^{\lambda X_1} > y } dy \\
      &\leq 1 + \int_1^\infty \alpha_1 y^{-\alpha_2/\lambda} dy <
      \infty.
    \end{align*}
    This allows us to take a second-order Taylor series expansion of
    $\Lambda_{X_1}(\lambda)$ around $\lambda = 0$, to get that
    $\exists \beta \in \mathbb{R}$ such that $\Lambda_{X_1}(\lambda)
    \leq \lambda \expect{X_1} + \frac{\beta^2 \lambda^2}{2}$ $\forall
    \lambda \in \left[-\frac{\alpha_2}{2}, \frac{\alpha_2}{2}
    \right]$. As a consequence,
    \[ M_t \bydef \exp \left(\lambda \sum_{i=1}^t X_i - \lambda t
      \expect{X_1} - \frac{t \beta^2 \lambda^2}{2} \right), \quad t =
    0, 1, 2, \ldots \]
    is a non-negative supermartingale for each
    $\lambda \in \left[-\frac{\alpha_2}{2}, \frac{\alpha_2}{2}
    \right]$.
    Applying the {\em method of mixtures} technique for martingale
    suprema \citep[Example 2.5]{delapena2007} (due, in turn, to the
    pioneering work of \citet[Example 4]{robbins1970}), we obtain the
    bound
    \[ \prob{\sum_{i=1}^k X_i - k\expect{X_1} \geq g_k \quad \mbox{for
        some } k \geq 1} \leq \delta, \]
    with
    $g_k \bydef \sqrt{\gamma_2 \beta^2 k \log\left(\frac{\gamma_1
          \log(\beta^2 k)}{\delta}\right)}$
    for some constants $\gamma_1 \geq 0$, $\gamma_2 \geq 0$. This
    finishes one half of the proof for the ``positive tail''
    $\sum_{i=1}^k X_i$ . The other half follows in an analogous
    fashion by considering the negated random variables $\{-X_i\}_i$.
  \end{proof}
  
  We henceforth consider as fixed the confidence parameter $\delta \in
  [0,1]$, and denote $\rho(x) \equiv \rho_\delta(x) \bydef \sqrt{d_1
    \log\left(\frac{|\mathcal{C}| |\mathcal{S}|^2 d_2 \log x}{\delta}
    \right)}$, $x \geq 1$. Note that $\rho(x) =
  O\left(\sqrt{\log\log(x)}\right)$ as a function of $x$.  

  \begin{definition}[``Typical'' trajectories]
    \label{def:G}
    Let
    \[ G \bydef \left\{
      \begin{array}{ll}
        & \left| \tilde{\tau}_{k,c} - k \bar{\tau}_{c}
        \right| \leq \rho(k)\sqrt{k}, \\
        \forall c \in \mathcal{C} \; \forall s_1, s_2 \in \mathcal{S} \; \forall k \geq 1: & \left|\tilde{\tau}_{k,c} U_{(s_1,s_2)}(\tilde{\tau}_{k,c}, c) -
          k \bar{\tau}_{c}  {\pi^{(c)}_{(s_1,s_2)}}
        \right| \leq \rho(k) \sqrt{k}, \\
        & \left|\tilde{\tau}_{k,c} U_{s_1}(\tilde{\tau}_{k,c}, c) -
          k \bar{\tau}_{c}  {\pi^{(c)}_{s_1}}
        \right| \leq \rho(k) \sqrt{k} 
      \end{array}
    \right\}
\]
be the event that the random matrix $Q$ from Section
\ref{sec:alternative} satisfies (\ref{eqn:conc1}) and
(\ref{eqn:conc2}) (``near-ideal'' sample paths).

  \end{definition}
 We thus have, by our previous estimates, that
  \begin{equation}
    \label{eqn:probG}
    \prob{G} \geq 1-\delta.
  \end{equation}
  
% For each stationary policy $c \in \mathcal{C}$, define $S_c \bydef
% \{\theta \in \Theta: c^{\mathsf{OPT}}(\theta) = c\}$, i.e., the {\em decision
%   region} of $c$, or the set of parameters/MDPs for which the
% average-reward optimal policy is $c$. Let $\epsilon' > 0$, and within
% $S_c$, let $S_c' \equiv S_c'(\epsilon') \bydef \{\theta \in S_c:
% D_{c^\star}(\theta^\star || \theta) \leq \epsilon' \}$. In words, $S_c'$ comprises all
% the parameters (i.e., MDPs) with average reward-optimal policy $c$ that
% are ``similar'' to $\theta^\star$ (i.e., $m_{\theta^\star}$) under the
% true optimal policy $c^{\mathsf{OPT}}(\theta^\star)$. Correspondingly, put
% $S_c'' \equiv S_c''(\epsilon') \bydef S_c \setminus S_c'$, i.e., the
% remaining set of parameters in the decision region $S_c$ that are
% separated by at least $\epsilon'$ under the true policy $c^\star$. \\

%   \begin{lemma}
%     \label{lem:sc''D}
%     If $c \neq c^\star$, then $\min \{D_{c^\star}(\theta^\star || \theta): \theta \in S_c''\}
%     > 0$.
%   \end{lemma}

  The crux of the proof of Theorem \ref{thm:main} is in controlling
  regret of two kinds.

  \begin{enumerate}
  \item {\em Regret due to sampling parameters from $S_c''$, $c \neq
    c^\star$:} We will show that the true parameter $\theta^\star$ is
    sampled at least a constant fraction (bounded away from $0$) of
    times in $0, 1, \ldots, T$. This implies that parameters in
    $S_c''$ are sampled at most a {\em constant} number of times.

  \item {\em Regret due to sampling parameters from $S_c', c \neq
    c^\star$:} We will establish that the number of times that
    parameters from $S_c'$ are sampled is the claimed logarithmic
    bound in Theorem \ref{thm:main}.
  \end{enumerate}

  \subsection{Regret due to sampling from $S_c''$}
  In this section, our goal is to show
  \begin{proposition}[$O(1)$ samples from $S_c''$ whp.]
    \label{prop:Sc''}
    There exists $\alpha < \infty$ such that
    \[ \prob{\exists c \neq c^\star \; \sum_{k=1}^\infty
      \mathbbm{1}\{\theta_k \in S_c''\} > \frac{\alpha |\mathcal{C}|}{\delta} \given G} \leq \delta. \]
  \end{proposition}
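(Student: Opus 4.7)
Proof plan. The plan is to show that on the good event $G$, the posterior mass $\pi_{t_k}(S_c'')$ decays exponentially in $k$ for each suboptimal $c \neq c^\star$, after which a conditional Markov argument bounds the total number of epochs in which a sample falls in $S_c''$. The argument requires three ingredients: a uniform lower bound on $\pi_{t_k}(S_{c^\star})$, a linear lower bound on the play count $N_{c^\star}(k)$, and an explicit upper bound on the weight ratio $W_{t_k}(\theta)/\int W_{t_k}\, d\pi$ for $\theta \in S_c''$.

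Step 1 (posterior mass on $S_{c^\star}$; linear growth of $N_{c^\star}$). On $G$, the pair counts satisfy $J_{(s_1,s_2)}(k_c,c) = \tilde{\tau}_{k_c,c}\, U_{(s_1,s_2)}(\tilde{\tau}_{k_c,c},c)$, which (\ref{eqn:conc2}) and (\ref{eqn:conc2b}) pin within $\rho(k_c)\sqrt{k_c}$ of $k_c \bar{\tau}_c \pi^{(c)}_{(s_1,s_2)}$. This matches the near-ideal hypothesis of Assumption \ref{ass:largeprob} with $e_1 = d_1$, $e_2 = |\mathcal{C}||\mathcal{S}|^2 d_2/\delta$, so $\pi_{t_k}(S_{c^\star}) \geq p^\star$ at every epoch $k \geq 1$. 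Consequently the indicators $\mathbbm{1}\{\theta_k \in S_{c^\star}\}$, conditionally on the past $\mathcal{F}_{t_k}$, stochastically dominate IID Bernoulli$(p^\star)$ variables, so a Chernoff bound with union over $k$ produces an event $G'$ with $\prob{G' \mid G} \geq 1 - \delta/2$ on which $N_{c^\star}(k) \geq p^\star k/2$ for all $k \geq k_0(\delta, p^\star)$.

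Step 2 (exponential decay of $\pi_{t_k}(S_c'')$). Rewriting (\ref{eqn:wt2}) in terms of the pair counts $J_{(s_1,s_2)}(N_c(k),c)$ and using Proposition \ref{prop:conc} together with Assumption \ref{ass:abscon} yields, uniformly in $\theta$ supported by $\pi$ and on $G$,
\begin{equation*}
  \bigl|-\log W_{t_k}(\theta) - \textstyle\sum_{c'} N_{c'}(k)\, \bar{\tau}_{c'} D_{c'}(\theta^\star || \theta)\bigr| \leq C_0\, \rho(k)\sqrt{k},
\end{equation*}
with $C_0 = \Gamma |\mathcal{C}||\mathcal{S}|^2$. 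The denominator is then bounded from below by integrating over the weighted KL-neighborhood $\mathcal{N}_k \bydef \{\theta : \sum_{c'} N_{c'}(k)\bar{\tau}_{c'} D_{c'}(\theta^\star || \theta) \leq 1\}$: by Assumption \ref{ass:largedenominator}(A),
$\int W_{t_k}\, d\pi \geq e^{-1 - C_0\rho(k)\sqrt{k}}\, \pi(\mathcal{N}_k) \geq a_1 k^{-a_2}\, e^{-1 - C_0\rho(k)\sqrt{k}}$.
For $\theta \in S_c''$, $D_{c^\star}(\theta^\star || \theta) \geq \underline{\epsilon}$ (taking $\underline{\epsilon} = \epsilon'$ when $\epsilon' > 0$, or $\underline{\epsilon} = \min_{\theta \in S_c''} D_{c^\star}(\theta^\star || \theta) > 0$ in the finite-parameter case), so on $G'$ the numerator is at most $\exp\!\bigl(-\tfrac{1}{2} p^\star \bar{\tau}_{c^\star} \underline{\epsilon}\, k + C_0 \rho(k)\sqrt{k}\bigr)$. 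Combining,
\begin{equation*}
  \pi_{t_k}(S_c'') \;\leq\; \tfrac{1}{a_1}\, k^{a_2} \exp\!\bigl(1 + 2 C_0 \rho(k)\sqrt{k} - \tfrac{1}{2} p^\star \bar{\tau}_{c^\star}\underline{\epsilon}\, k\bigr),
\end{equation*}
which is summable in $k$ since $\rho(k)\sqrt{k} = O(\sqrt{k\log\log k}) = o(k)$.

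Step 3 (conclusion). Since $\expect{\mathbbm{1}\{\theta_k \in S_c''\} \mid \mathcal{F}_{t_k}} = \pi_{t_k}(S_c'')$, summing the bound above gives $\expect{\sum_{k \geq 1} \mathbbm{1}\{\theta_k \in S_c''\} \;\big|\; G \cap G'} \leq K$ for some finite constant $K = K(\underline{\epsilon}, a_1, a_2, \Gamma, p^\star, \bar{\tau}_{c^\star}, |\mathcal{C}|, |\mathcal{S}|)$. Markov's inequality then yields $\prob{\sum_k \mathbbm{1}\{\theta_k \in S_c''\} > 2K|\mathcal{C}|/\delta \;\big|\; G \cap G'} \leq \delta/(2|\mathcal{C}|)$, and a union bound over the at most $|\mathcal{C}| - 1$ suboptimal policies $c \neq c^\star$, combined with $\prob{(G')^c \mid G} \leq \delta/2$, proves the Proposition with $\alpha = 2K$. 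The main technical hurdle is maintaining uniform control of the sub-linear error $\rho(k)\sqrt{k}$ across all $\theta$ in the support of $\pi$ and all $k$ simultaneously — precisely what the bounded-likelihood Assumption \ref{ass:abscon} is tailored for — together with breaking the apparent chicken-and-egg loop between $\pi_{t_k}(S_{c^\star})$ and $N_{c^\star}(k)$, which succeeds because Assumption \ref{ass:largeprob}'s hypothesis is verified deterministically on $G$ without reference to the algorithm's own randomness.
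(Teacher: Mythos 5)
Your proposal is correct and follows the same skeleton as the paper's proof: verify Assumption \ref{ass:largeprob} on $G$ to get $\pi_{t_k}(S_{c^\star}) \geq p^\star$, lower-bound the normalizing integral via Assumption \ref{ass:largedenominator}(A) on the weighted KL-neighborhood, upper-bound $W_{t_k}(\theta)$ for $\theta \in S_c''$ using the separation $D_{c^\star}(\theta^\star||\theta) \geq \epsilon'$, and finish with Markov plus a union bound over $c$. The one genuine difference is how the growth of $N_{c^\star}(k)$ enters: you introduce an auxiliary high-probability event $G'$ on which $N_{c^\star}(k) \geq p^\star k/2$ (Chernoff plus a union over $k$), and condition on $G \cap G'$; the paper instead never leaves expectation, bounding the moment generating function $\expect{e^{-\epsilon' N_{c^\star}(k)} \given G} \leq \left(p^\star e^{-\epsilon'} + 1 - p^\star\right)^k$ by iterated conditioning on $\mathcal{F}_{t_k}$ and feeding this directly into $\expect{\mathbbm{1}\{\theta_k \in S_c''\}\given G} \leq \expect{\nu_k e^{-\epsilon' \bar{\tau}_{c^\star} N_{c^\star}(k) + O(\rho(k)\sqrt{k})} \given G}$. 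The paper's route is slightly cleaner on two counts: it avoids splitting the confidence budget $\delta$ between $G'$ and the Markov step, and it sidesteps the technicality in your Step 3 that $G'$ is not $\mathcal{F}_{t_k}$-measurable, so $\expect{\mathbbm{1}\{\theta_k \in S_c''\} \given \mathcal{F}_{t_k}, G \cap G'}$ is not literally $\pi_{t_k}(S_c'')$ — you should replace $G'$ inside the $k$-th term by its $\mathcal{F}_{t_k}$-measurable truncation $\{N_{c^\star}(k) \geq p^\star k/2\}$, which contains $G'$ and makes the tower-property step valid. Your route, in exchange, is more transparent about \emph{why} the posterior mass of $S_c''$ decays (linear growth of the optimal play count), and your explicit constant $C_0 = \Gamma|\mathcal{C}||\mathcal{S}|^2$ for the aggregated empirical error across policies is actually more careful than the paper's display, which drops the $|\mathcal{C}|$ factor when combining the per-policy bounds of Lemma \ref{lem:wtbd1}; since the error is $o(k)$ either way, neither affects summability.
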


%% Let the random variables $J_{(s_1, s_2)}(t,c) \bydef \sum_{i=1}^t
%% \mathbbm{1}\{C_{e(i) = c, (S_i,S_{i+1} = (s_1,s_2))}\}$ denote the
%% number of instants at which the algorithm used policy $c$ and saw
%% state transitions $s_1 \to s_2$ up to time $t$.

  Let $J_{(s_1,s_2)}(k_c,c)$ denote the number of instants that the
  state transition $s_1 \to s_2$ occurs in $k_c$ successive epoch uses
  of policy $c$.

  \begin{lemma}
    \label{lem:wtbd1}
    Under the event $G$, for each $\theta \in \Theta$ satisfying
    $\pi(\theta) > 0$, each $c \in \mathcal{C}$ and $k \geq 1$,
    \begin{enumerate}
    \item The following lower bound holds on the negative
      log-density. \[ -\log W_{t_k}(\theta)
      \geq N_c(k)\bar{\tau}_{c} \cdot D_c(\theta^\star || \theta) - \Gamma
      |\mathcal{S}|^2 \rho(N_c(k)) \sqrt{N_c(k)}.\]
    \item The following upper bound holds on the negative
      log-density. \[ -\log W_{t_k}(\theta) \leq
      N_c(k)\bar{\tau}_{c} \cdot D_c(\theta^\star || \theta) + \Gamma
      |\mathcal{S}|^2 \rho(N_c(k)) \sqrt{N_c(k)}.\]
    \end{enumerate}
    
  \end{lemma}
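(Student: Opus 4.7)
The plan is to start from the closed-form identity (\ref{eqn:wt2}) for $-\log W_{t_k}(\theta)$ and compare it term by term with $N_c(k)\bar{\tau}_c D_c(\theta^\star || \theta)$ using the concentration estimates that event $G$ affords. The first step is to work in the alternative probability space of Section \ref{sec:alternative}, where the key observation is that at the end of epoch $k$ the cumulative time spent under policy $c'$ equals the $N_{c'}(k)$-th return time along column $c'$ of $Q$, i.e., $V_{c'}(t_k) = \tilde{\tau}_{N_{c'}(k), c'}$. Substituting this into (\ref{eqn:wt2}) rewrites $-\log W_{t_k}(\theta)$ as a sum over $c'$ and $(s_1,s_2) \in \mathcal{S}^2$ of $\tilde{\tau}_{N_{c'}(k), c'} \, U_{(s_1, s_2)}(\tilde{\tau}_{N_{c'}(k), c'}, c') \log \frac{p_{\theta^\star}(s_1, c'(s_1), s_2)}{p_\theta(s_1, c'(s_1), s_2)}$.

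Next, using the identity $\pi^{(c)}_{(s_1, s_2)} = \pi^{(c)}_{s_1}\, p_{\theta^\star}(s_1, c(s_1), s_2)$, I would rewrite the ideal per-policy contribution as $N_{c'}(k)\bar{\tau}_{c'} D_{c'}(\theta^\star || \theta) = \sum_{(s_1,s_2)} N_{c'}(k)\bar{\tau}_{c'} \pi^{(c')}_{(s_1, s_2)} \log \frac{p_{\theta^\star}}{p_\theta}$. The discrepancy between the empirical per-policy contribution and this ideal value is then a sum of termwise errors of the form $[\tilde{\tau}_{N_{c'}(k), c'} U_{(s_1, s_2)} - N_{c'}(k)\bar{\tau}_{c'} \pi^{(c')}_{(s_1, s_2)}] \log \frac{p_{\theta^\star}}{p_\theta}$. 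This is exactly the quantity controlled by inequality (\ref{eqn:conc2}) of Proposition \ref{prop:conc} under $G$: each bracket is uniformly bounded in magnitude by $\rho(N_{c'}(k))\sqrt{N_{c'}(k)}$, and each log-likelihood ratio by $\Gamma$ via Assumption \ref{ass:abscon}. Summing across the $|\mathcal{S}|^2$ state pairs yields an envelope of $\Gamma |\mathcal{S}|^2 \rho(N_{c'}(k))\sqrt{N_{c'}(k)}$ between the empirical and ideal contributions for each policy $c'$.

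To finish, for the lower bound I would keep the $c' = c$ term with its two-sided concentration estimate and drop the remaining contributions from $c' \neq c$, using the fact that each ideal term $N_{c'}(k)\bar{\tau}_{c'} D_{c'}(\theta^\star || \theta) \geq 0$ (as a convex combination of KL divergences) and absorbing the residual concentration errors into the single-policy $\Gamma|\mathcal{S}|^2\rho(N_c(k))\sqrt{N_c(k)}$ envelope. The upper bound follows from the symmetric inequality. The main bookkeeping subtlety lies in ensuring that the per-policy error terms for $c' \neq c$ do not spoil the single-$c$ form of the target bound; the uniform-in-$k$ LIL-type guarantee provided by Proposition \ref{prop:conc} together with the non-negativity of each marginal KL divergence is what makes this clean. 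The remaining steps are routine algebraic rearrangement.
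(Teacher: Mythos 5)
Your first three steps reproduce the paper's derivation exactly: substitute $V_{c'}(t_k) = \tilde{\tau}_{N_{c'}(k),c'}$ into (\ref{eqn:wt2}), split each state-pair term into a deviation part plus the ideal part $N_{c'}(k)\bar{\tau}_{c'}\pi^{(c')}_{(s_1,s_2)}\log\frac{p_{\theta^\star}}{p_\theta}$, recognize the ideal sum as $N_{c'}(k)\bar{\tau}_{c'}D_{c'}(\theta^\star||\theta)$, and control the deviation by (\ref{eqn:conc2}) together with Assumption \ref{ass:abscon}, accumulating $\Gamma|\mathcal{S}|^2\rho(N_{c'}(k))\sqrt{N_{c'}(k)}$ over the $|\mathcal{S}|^2$ pairs. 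That is the entire content of the paper's argument.

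The gap is in your final step, and it is exactly the ``bookkeeping subtlety'' you flagged. Dropping the ideal terms for $c'\neq c$ is fine for the lower bound since each $N_{c'}(k)\bar{\tau}_{c'}D_{c'}(\theta^\star||\theta)\geq 0$, but the residual deviation terms $-\Gamma|\mathcal{S}|^2\rho(N_{c'}(k))\sqrt{N_{c'}(k)}$ for $c'\neq c$ cannot be ``absorbed'' into the single-policy envelope $\Gamma|\mathcal{S}|^2\rho(N_c(k))\sqrt{N_c(k)}$: there is no relation forcing $\rho(N_{c'}(k))\sqrt{N_{c'}(k)}\leq\rho(N_c(k))\sqrt{N_c(k)}$ (e.g.\ take $N_{c'}(k)\gg N_c(k)$), so the claimed one-policy error bound does not follow from the full sum. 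The situation is worse for part 2: applying the ``symmetric inequality'' to the full sum over $\mathcal{C}$ gives an upper bound of $\sum_{c'}\bigl(N_{c'}(k)\bar{\tau}_{c'}D_{c'}(\theta^\star||\theta)+\Gamma|\mathcal{S}|^2\rho(N_{c'}(k))\sqrt{N_{c'}(k)}\bigr)$, and since the discarded ideal terms are non-negative this cannot be dominated by the single-$c$ right-hand side. The resolution is that the lemma is really a statement about the contribution of policy $c$ alone, i.e.\ about $V_c(t_k)\sum_{(s_1,s_2)}U_{(s_1,s_2)}(V_c(t_k),c)\log\frac{p_{\theta^\star}(s_1,c(s_1),s_2)}{p_{\theta}(s_1,c(s_1),s_2)}$ rather than about the full $-\log W_{t_k}(\theta)$: the paper's first display in the proof silently drops the sum over $\mathcal{C}$, and every downstream use (the bound over $\Theta_1$, the proof of Lemma \ref{lem:bd1}) invokes the ``penultimate inequality'' per policy and then sums over $c$. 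You should either prove the per-policy version (your steps 1--3 already do this) or, if you insist on bounding $-\log W_{t_k}(\theta)$ itself, carry the sum $\sum_{c'}\Gamma|\mathcal{S}|^2\rho(N_{c'}(k))\sqrt{N_{c'}(k)}$ in the error term; the single-policy form you wrote down is not attainable from the full sum.
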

  
  \begin{proof}
    Since $t_k$ is an epoch boundary, $V_c(t_k) =
    \tilde{\tau}_{k'_c,c}$ for $k'_c := N_c(k)$. Using
    (\ref{eqn:wt2}), we can write
    \begin{align}
      -\log W_{t_k}(\theta) &= V_c(t_k) \sum_{(s_1,s_2) \in \mathcal{S}^2}
      U_{(s_1,s_2)}\left(V_c(t_k), c\right) \log
      \frac{p_{\theta^\star}(s_1,c(s_1),s_2)}{p_{\theta}(s_1,c(s_1),s_2)}  \nonumber \\
      &= \sum_{(s_1,s_2) \in \mathcal{S}^2} \tilde{\tau}_{k'_c,c} \cdot
      U_{(s_1,s_2)}\left(\tilde{\tau}_{k'_c,c}, c\right) \log
      \frac{p_{\theta^\star}(s_1,c(s_1),s_2)}{p_{\theta}(s_1,c(s_1),s_2)}  \nonumber \\
      &= \sum_{(s_1,s_2) \in \mathcal{S}^2} \left[\tilde{\tau}_{k'_c,c}
        \cdot U_{(s_1,s_2)}\left(\tilde{\tau}_{k'_c,c}, c\right) -
        k'_c\bar{\tau}_{c} \cdot {\pi^{(c)}_{(s_1,s_2)}}
      \right] \log
      \frac{p_{\theta^\star}(s_1,c(s_1),s_2)}{p_{\theta}(s_1,c(s_1),s_2)} \nonumber \\
      &\quad \quad \quad \quad + \sum_{(s_1,s_2) \in \mathcal{S}^2}
      k'_c\bar{\tau}_{c} \cdot {\pi^{(c)}_{(s_1,s_2)}} \log
      \frac{p_{\theta^\star}(s_1,c(s_1),s_2)}{p_{\theta}(s_1,c(s_1),s_2)} \nonumber \\
      &\geq - \sum_{(s_1,s_2) \in \mathcal{S}^2} \rho(k'_c) \sqrt{k'_c} \cdot
      \left| \log
        \frac{p_{\theta^\star}(s_1,c(s_1),s_2)}{p_{\theta}(s_1,c(s_1),s_2)}
      \right| \nonumber \\
      &\quad \quad \quad \quad + k'_c\bar{\tau}_{c} \sum_{s_1 \in \mathcal{S}}
      \pi^{(c)}_{s_1} \sum_{s_2 \in \mathcal{S}}
      \frac{\pi^{(c)}_{(s_1,s_2)}}{\pi^{(c)}_{s_1}} \log
      \frac{p_{\theta^\star}(s_1,c(s_1),s_2)}{p_{\theta}(s_1,c(s_1),s_2)} \nonumber \\
      &\geq k'_c\bar{\tau}_{c} \cdot D_c(\theta^\star || \theta) - \Gamma |\mathcal{S}|^2 \rho(k'_c) \sqrt{k'_c}, \label{eqn:wtbd1}
    \end{align}
%     The first inequality above is obtained thanks to (\ref{eqn:conc2})
%     of Proposition \ref{prop:conc}. For a fixed $\theta \neq
%     \theta^{\star}$ and $c$, the expression in (\ref{eqn:wtbd1}) tends
%     to $\infty$ as $k'_c \to \infty$. Denote the infimum of the
%     expression over all $k'_c \geq 1$ by $-\lambda_{\theta,c}$. The
%     lemma now follows by setting $\lambda$ to be the largest
%     $\lambda_{\theta,c}$ across the finitely many $\theta$ and $c$.

    where the final line is by the definition of event $G$ and by
    using Assumption \ref{ass:abscon}. This proves the first assertion
    of the lemma. The second assertion follows in a similar fashion.
  \end{proof}  

%%   Using the bound of Lemma \ref{eqn:wtbd1} in the expression for the
%%   posterior density (\ref{eqn:wt2}), we can bound (\ref{eqn:pitktrue})
%%   from below as
%%   \[\forall k \geq 1 \; \pi_{t_k}(\theta^\star) \geq \frac{\pi(\theta^\star)}{\int_{\Theta} \exp(\lambda |\mathcal{C}|) 
%%     \pi(d\theta)} = \pi(\theta^\star) e^{-\lambda |\mathcal{C}|}
%%   \equiv p^\star > 0, \quad \mbox{say.}\] 

\begin{lemma}[Bounded ratio of Log-likelihood and KL-divergence]
  \label{lem:llkl}
  Denote, for policy $c \in \mathcal{C}$ and parameter $\theta \in
  \Theta$, \[ L_c(\theta) \bydef \sum_{(s_1,s_2) \in \mathcal{S}^2}
  \left| \log
    \frac{p_{\theta^\star}(s_1,c(s_1),s_2)}{p_{\theta}(s_1,c(s_1),s_2)}
  \right|. \] There exists a universal constant $g$ such that
  \[\sup_{\substack{\theta \in \Theta, c \in \mathcal{C}\\D_c(\theta^\star || \theta)
      > 0}} \frac{L_c(\theta)}{\sqrt{D_c(\theta^\star || \theta)}} \leq g < \infty. \]
\end{lemma}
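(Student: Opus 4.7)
The plan is a three-stage reduction: (i) convert each absolute log-likelihood ratio into an $L^1$ difference of transition kernels using the uniform log-ratio bound of Assumption~\ref{ass:abscon}; (ii) replace the per-row $L^1$ deviation by a square-root KL divergence via Pinsker's inequality; and (iii) pass from the sum of per-row square-root KL divergences to $\sqrt{D_c(\theta^\star \| \theta)}$ via a weighted Cauchy--Schwarz against the stationary probabilities $\pi^{(c)}_{s_1}$.

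For stage (i), Assumption~\ref{ass:abscon} forces the ratio $x = p_{\theta^\star}(s_1, a, s_2)/p_\theta(s_1, a, s_2)$ to lie in the compact interval $[e^{-\Gamma}, e^\Gamma]$ whenever the numerator is nonzero (terms with $p_{\theta^\star}(s_1, a, s_2) = 0$ contribute nothing to either $L_c$ or $D_c$). On this interval, $|\log x| \leq c_1(\Gamma)\,|x-1|$ for a Lipschitz constant $c_1(\Gamma)$ depending only on $\Gamma$. Combining this with the uniform lower bound $p_\theta(s_1, a, s_2) \geq p_{\min}\, e^{-\Gamma}$---where $p_{\min} := \min\{p_{\theta^\star}(s_1,a,s_2) : p_{\theta^\star}(s_1,a,s_2) > 0\} > 0$ is well-defined by finiteness of $\mathcal{S} \times \mathcal{A}$---yields, row by row,
\[
\sum_{s_2} \left| \log \frac{p_{\theta^\star}(s_1, c(s_1), s_2)}{p_\theta(s_1, c(s_1), s_2)} \right| \;\leq\; c_2(\Gamma) \, \bigl\| p_{\theta^\star}(s_1, c(s_1), \cdot) - p_\theta(s_1, c(s_1), \cdot) \bigr\|_1,
\]
for a constant $c_2(\Gamma)$ depending only on $\Gamma$ and $p_{\min}$. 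Stage (ii) then applies Pinsker's inequality to the right-hand side, bounding it by $c_2(\Gamma)\sqrt{2\,\kappa_{s_1}(\theta)}$ with $\kappa_{s_1}(\theta) \bydef \kldiv{p_{\theta^\star}(s_1, c(s_1), \cdot)}{p_\theta(s_1, c(s_1), \cdot)}$.

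For stage (iii), summing over $s_1 \in \mathcal{S}$ and applying Cauchy--Schwarz against the weights $\pi^{(c)}_{s_1}$,
\[
\sum_{s_1} \sqrt{\kappa_{s_1}(\theta)} \;=\; \sum_{s_1} \sqrt{\pi^{(c)}_{s_1}\, \kappa_{s_1}(\theta)} \cdot \frac{1}{\sqrt{\pi^{(c)}_{s_1}}} \;\leq\; \sqrt{\sum_{s_1} \pi^{(c)}_{s_1}\, \kappa_{s_1}(\theta)} \cdot \sqrt{\sum_{s_1} \frac{1}{\pi^{(c)}_{s_1}}} \;\leq\; \sqrt{\frac{|\mathcal{S}|}{\pi_{\min}}} \cdot \sqrt{D_c(\theta^\star \| \theta)},
\]
where $\pi_{\min} \bydef \min_{c \in \mathcal{C},\, s_1 \in \mathcal{S}} \pi^{(c)}_{s_1}$. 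Assembling the three stages yields $L_c(\theta) \leq g\,\sqrt{D_c(\theta^\star \| \theta)}$ with $g = c_2(\Gamma)\sqrt{2\,|\mathcal{S}|/\pi_{\min}}$, which is independent of $\theta$ and $c$.

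The main obstacle is stage (iii): it requires $\pi_{\min} > 0$, i.e.\ every $s_1 \in \mathcal{S}$ having strictly positive stationary probability under every policy $c \in \mathcal{C}$ that the algorithm may play. This is exactly the standard ergodicity-under-all-stationary-policies reading of Assumption~\ref{ass:ergodic} (made explicit in the remark just after it), under which $\pi_{\min}$ is a positive problem-dependent constant. Without full ergodicity, states transient under some $c$ would contribute to $L_c$ without appearing in $D_c$, so one would need to restrict the outer sum defining $L_c$ to the recurrent classes of $c$ in $m_{\theta^\star}$, or otherwise work with a policy-dependent effective state space, to preserve boundedness of the ratio.
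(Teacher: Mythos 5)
Your proof is correct, but it takes a genuinely different route from the paper's. The paper argues by a soft two-regime split: $L_c(\theta) \leq \Gamma |\mathcal{S}|^2$ globally by Assumption~\ref{ass:abscon}, so the ratio can only be problematic where $D_c(\theta^\star\|\theta)$ vanishes, i.e.\ as $\theta \to \theta^\star$; there a local expansion gives $L_c(\theta) = O(\delta')$ against $D_c(\theta^\star\|\theta) = O(\delta'^2)$ (the quadratic local behaviour of KL), so $L_c/\sqrt{D_c}$ stays bounded. That argument is short but qualitative -- it yields no explicit constant and leans on an unstated compactness/identifiability step to rule out $D_c \to 0$ away from $\theta^\star$. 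Your chain (Lipschitz bound on $\log$ over $[e^{-\Gamma}, e^{\Gamma}]$, then Pinsker, then weighted Cauchy--Schwarz against $\pi^{(c)}_{s_1}$) is fully quantitative and produces the explicit value $g = c_2(\Gamma)\sqrt{2|\mathcal{S}|/\pi_{\min}}$, at the price of requiring $\pi_{\min} > 0$. That caveat is worth keeping: if some state has $\pi^{(c)}_{s_1} = 0$ while the kernels at $s_1$ differ, then $D_c$ can vanish (or be small) while $L_c$ stays bounded away from zero, and the supremum in the lemma is genuinely infinite -- so the paper's own proof silently needs the same positivity (its claim that $D_c \to 0$ only as $\theta \to \theta^\star$ fails otherwise). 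Your version makes this hidden hypothesis explicit and, as a bonus, exhibits how $g$ scales with $\Gamma$, $|\mathcal{S}|$, $p_{\min}$ and $\pi_{\min}$, which the paper's limiting argument cannot do.
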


\begin{proof}
  By Assumption \ref{ass:abscon},
  $L_c(\theta) \leq \Gamma |\mathcal{S}|^2$, so it only suffices to
  bound from above the ratio $L_c(\theta)/D_c(\theta^\star || \theta)$
  for $\theta \to \theta^\star$. In this case, it is not hard to see
  that for $\theta = \theta^\star + \delta'$ for $|\delta'|$ small
  enough, $L_c(\theta) = O(\delta')$ while\footnote{This is the
    standard phenomenon of the ``local'' $||\cdot||^2_2$-like
    behaviour of the KL-divergence.}
  $D_c(\theta^\star || \theta) = O({\delta'}^2)$. Hence, the ratio
  $L_c(\theta)/D_c(\theta^\star || \theta)$ is bounded above by a
  universal constant, which completes the proof of the lemma.
\end{proof}

%   \begin{assumption}
%     \label{ass:llkl}
%     The probability transition kernels $p_\theta$, $\theta \in \Theta$
%     are such that there exists $g < \infty$ with
%     \[\sup_{\substack{\theta \in \Theta, c \in \mathcal{C}\\D_c(\theta^\star || \theta) > 0}} \frac{\sum_{(s_1,s_2) \in
%         \mathcal{S}^2} \left| \log
%         \frac{p_{\theta^\star}(s_1,c(s_1),s_2)}{p_{\theta}(s_1,c(s_1),s_2)}
%       \right|}{\sqrt{D_c(\theta^\star || \theta)}} \leq g.\]
%   \end{assumption}

Let $\Theta_1 \bydef \{\theta \in \Theta: \sum_{c \in \mathcal{C}}
N_c(k) \bar{\tau}_c D_c(\theta^\star || \theta) \leq 1\}$. By Assumption
\ref{ass:largedenominator}A, $\pi(\Theta_1) \geq a_1
k^{-a_2}$. \\

By the penultimate inequality in the derivation of Lemma
\ref{lem:wtbd1}, we have that under the event $G$, for any $\theta \in
\Theta_1$,
\begin{align*}
  -\log W_{t_k}(\theta) &\leq \sum_{c \in \mathcal{C}}
  N_c(k)\bar{\tau}_{c} D_c(\theta^\star || \theta) +
  \sum_{c \in \mathcal{C}} \rho(N_c(k)) \sqrt{N_c(k)} L_c(\theta) \\
  &\leq 1 + \sum_{c \in \mathcal{C}} \rho(N_c(k)) \sqrt{N_c(k)} L_c(\theta) \quad \mbox{(since $\theta \in \Theta_1$)}\\
  &\leq 1 + \sqrt{\sum_{c \in \mathcal{C}} N_c(k) \bar{\tau}_c
    D_c(\theta^\star || \theta)} \sqrt{\sum_{c \in \mathcal{C}}
    \frac{\rho^2(N_c(k))}{\bar{\tau}_c} \cdot \frac{L_c^2(\theta)}{D_c(\theta^\star || \theta)}  } \quad \mbox{(Cauchy-Schwarz inequality)}\\
  &\leq 1 + \rho(k) \sqrt{\sum_{c \in \mathcal{C}}
    \frac{L_c^2(\theta)}{D_c(\theta^\star || \theta)} } \quad \mbox{(since $\bar{\tau}_c \geq 1$ $\forall c \in \mathcal{C}$)}\\
  &\leq 1 + \rho(k) g \sqrt{|\mathcal{C}|},
\end{align*}
where $g$ is the constant guaranteed by Lemma \ref{lem:llkl}.  Thus,
under $G$,
\begin{align}
  \int_{\Theta} W_{t_k}(\theta') \pi(d\theta') &\geq \int_{\Theta_1}
  W_{t_k}(\theta') \pi(d\theta') \nonumber \\
  &\geq \int_{\Theta_1} e^{ -1 - \rho(k) g \sqrt{|\mathcal{C}|} }
  \; \pi(d\theta') \nonumber \\
  &= e^{ -1 - \rho(k) g \sqrt{|\mathcal{C}|} } \; \pi(\Theta_1) \nonumber \\
  &\geq e^{ -1 - \rho(k) g \sqrt{|\mathcal{C}|} } a_1 k^{-a_2} \geq
  a_1' k^{-a_2'} \label{eqn:noisylargedenominator}
\end{align}
for some suitable constants $a_1', a_2'$. \\

  We proceed to bound from above the posterior probability of $S_c''$,
  $c \neq c^\star$ under the event $G$. To this end, write
  \begin{align*}
    &\frac{W_{t_k}(\theta)}{\int_{\Theta} W_{t_k}(\theta')
      \pi(d\theta')} \leq
    \frac{W_{t_k}(\theta)}{a_1' k^{-a_2'}} \\
    &= \frac{1}{a_1' k^{-a_2'}} \exp \left[- \sum_{c \in
        \mathcal{C}} V_c(t_k) \sum_{s_1,s_2}
      U_{(s_1,s_2)}\left(V_c(t_k), c\right) \log
      \frac{p_{\theta^\star}(s_1,c(s_1),s_2)}{p_{\theta}(s_1,c(s_1),s_2)}
    \right] \\
%     &\leq \frac{\pi(\theta) e^{\lambda
%         |\mathcal{C}|}}{\pi(\theta^\star)} \exp \left[-
%       V_{c^\star}(t_k) \sum_{s_1,s_2}
%       U_{(s_1,s_2)}\left(V_{c^\star}(t_k), {c^\star} \right) \log
%       \frac{p_{\theta^\star}(s_1,{c^\star}(s_1),s_2)}{p_{\theta}(s_1,{c^\star}(s_1),s_2)}
%     \right] \\
    &\leq \frac{1}{a_1' k^{-a_2'}} \exp \left[- N_{c^\star}(k)
      \bar{\tau}_{{c^\star}} \cdot D_{c^\star}(\theta^\star || \theta) + \Gamma
      |\mathcal{S}|^2 \rho(N_{c^\star}(k)) \sqrt{N_{c^\star}(k)}
    \right],
  \end{align*}
  where, from Section \ref{sec:formal}, $N_{c^\star}(k)$ is the number
  of epochs up until epoch $k$ (i.e., until time instant $t_k$) in
  which the optimal policy $c^\star$ is chosen. The first inequality
  is by (\ref{eqn:noisylargedenominator}). The second inequality
  results by applying the conclusion of Lemma \ref{lem:wtbd1} to all
  policies $c \neq c^\star$. Using the uniform lower bound
  $D_{c^\star}(\theta^\star || \theta) \geq \epsilon'$ $\forall \theta \in S_c''$ and
  integrating the above inequality over $\theta \in S_c''$ gives the
  bound
  \[ \pi_{t_k}(S_c'') \leq \nu_k \exp \left[ - \epsilon' N_{c^\star}(k)
    \bar{\tau}_{{c^\star}} + \Gamma |\mathcal{S}|^2
    \rho(N_{c^\star}(k)) \sqrt{N_{c^\star}(k)} \right], \] with $\nu_k
  \bydef \frac{1}{a_1' k^{-a_2'}}$. The key property of the above estimate
  is that it decays exponentially with $N_{c^\star}(k)$. (Intuitively,
  since $\theta^\star$ is sampled with frequency at least $p^\star$,
  we expect that $N_{c^\star}(k) \approx k p^\star$, and thus the
  estimate is also exponential in $k$.) \\
%   This should mean that the total number of times $S_c''$ is chosen
%   should be no more than a constant (i.e., independent of the time
%   horizon $T$).

  \begin{proof}[Proof of Proposition \ref{prop:Sc''}]
    We begin by estimating the moment generating function of
    $N_{c^\star}(k)$. Let $\mathcal{F}_t$ denote the $\sigma$-algebra
    generated by the history of the algorithm up to time $t$ and state
    $S_t$, i.e., the $\sigma$-algebra generated by the random
    variables \[\left\{(S_0,A_0,R_0), \ldots, (S_{t-1},A_{t-1},R_{t-1}), S_t\right\}. \] We have
    \begin{align*}
      \expect{e^{-\epsilon' N_{c^\star}(k)} \given G} &= \expect{\expect{e^{-\epsilon' N_{c^\star}(k)} \given \mathcal{F}_{t_{k-1}}, G} \given G} \\
      &= \expect{ e^{-\epsilon' N_{c^\star}(k-1)}
        \expect{e^{-\epsilon' \mathbbm{1}\{C_k = c^\star \}} \given
          \mathcal{F}_{t_{k}},
          G} \given G} \\
      &\leq \expect{ e^{-\epsilon' N_{c^\star}(k-1)}
        \expect{e^{-\epsilon' \mathbbm{1}\{\theta_{t_k} \in S_{c^\star}
            \}} \given \mathcal{F}_{t_{k}},
          G} \given G} \\
      &\leq \expect{ e^{-\epsilon' N_{c^\star}(k-1)} \left( p^\star
          e^{-\epsilon'} + 1
          - p^\star \right) \given G} \\
      &= \left( p^\star e^{-\epsilon'} + 1 - p^\star \right) \expect{
        e^{-\epsilon' N_{c^\star}(k-1)} \given G},
    \end{align*}
    where, in the penultimate step, we have used the fact that the
    probability of sampling $\theta^\star$ under $G$ is at least
    $p^\star$ at all epoch boundaries (Assumption
    \ref{ass:largeprob}). Iterating the estimate further gives
    \[ \expect{e^{-\epsilon' N_{c^\star}(k)} \given G} \leq \left( p^\star
      e^{-\epsilon'} + 1 - p^\star \right)^k. \] Using this with the
    conditional version of Markov's inequality, we have, for $c \neq
    c^\star$ and $\chi > 0$,
    \begin{align*}
      &\prob{\sum_{k=1}^\infty \mathbbm{1}\{\theta_k \in S_c''\} > \chi
        \given G} \leq \chi^{-1} \expect{\sum_{k=1}^\infty
        \mathbbm{1}\{\theta_k \in S_c''\} > \chi \given G} \\
      &= \chi^{-1} \sum_{k=1}^\infty \expect{ \mathbbm{1}\{\theta_k
        \in S_c''\} >
        \chi \given G} \\
      &\leq \chi^{-1} \sum_{k=1}^\infty \left(1 \wedge \expect{\nu_k e^{- \epsilon'
            N_{c^\star}(k) \bar{\tau}_{{c^\star}} + \Gamma |\mathcal{S}|^2
            \rho(N_{c^\star}(k)) \sqrt{N_{c^\star}(k)} } \given G} \right) \\
      &\leq \chi^{-1} \sum_{k=1}^\infty \left(1 \wedge \expect{\nu_k e^{- \epsilon'
            N_{c^\star}(k) \bar{\tau}_{{c^\star}} + \Gamma |\mathcal{S}|^2
            \rho(k) \sqrt{k} } \given G} \right) \\
      &\leq \chi^{-1} \sum_{k=1}^\infty \left(1 \wedge \nu_k \left( p^\star
      e^{-\epsilon'} + 1 - p^\star \right)^k e^{\Gamma |\mathcal{S}|^2
            \rho(k) \sqrt{k}} \right).
    \end{align*}
    Note that since $p^\star$ and $\epsilon'$ are positive, $p^\star
    e^{-\epsilon'} + 1 - p^\star < 1$. Moreover, since both
    $\rho(k)\sqrt{k} = o(k)$ and $\log \nu_k = o(k)$, the sum above is
    dominated by a convergent geometric series after finitely many
    $k$, and is thus a finite quantity $\alpha < \infty$. Taking a
    union bound over all $c \neq c^\star$ completes the proof of
    Proposition \ref{prop:Sc''}.
  \end{proof}
 
  \subsection{Regret due to sampling from $S_c'$}
  We now turn to bounding the number of times that parameters from
  $S_c'$ with $c \neq c^\star$ are sampled by the TSMDP algorithm.

%   \begin{assumption}[``Slower'' ... ]
%     \label{ass:largerdenominator}
%       There exist $a_3 > 0, a_4 \geq 0$ such that 
%       \[ \int_{\Theta} W_{t_k}(\theta') \pi(d\theta') \geq a_3
%       k^{-a_4}\] whenever
%   \[ \left| J_{(s_1,s_2)}(k_c,c) - V_c(t_k)\; \pi_{(s_1,s_2)} \right|
%   \leq \rho(k_c)\sqrt{k_c}, \; \forall s_1, s_2 \in \mathcal{S}, c
%   \in \mathcal{C}, k_c \geq 0, \sum_{c \in \mathcal{C}} k_c = k, k_{c^\star} \geq k - \log^2(k). \]
%   \end{assumption}

  We begin with the following key lemma, which helps to give a more
  refined estimate of the posterior weight exponent compared to Lemma
  \ref{lem:wtbd1}.

  \begin{lemma}
    \label{lem:bd1}
    Fix $\epsilon \in (0,1)$. By Assumption \ref{ass:ergodic} and
    Lemma \ref{lem:llkl}, it holds under the event $G$ that for each $\theta
    \in \Theta$, $c \in \mathcal{C}$ and $T \geq n \geq 1$,
    \begin{align*}
      V_c(t_k) \sum_{(s_1,s_2) \in \mathcal{S}^2} & U_{(s_1,s_2)}
      \left(V_c(t_k), c\right) \log
      \frac{p_{\theta^\star}(s_1,c(s_1),s_2)}{p_{\theta}(s_1,c(s_1),s_2)}
      \\
      & \geq (1-\epsilon) N_c(k)\bar{\tau}_{c} D_c(\theta^\star || \theta) -
      \frac{g^2 d_1}{4 \epsilon }
      \log\left(\frac{|\mathcal{C}| |\mathcal{S}|^2 d_2 \log
          T}{\delta} \right).
    \end{align*}

  \end{lemma}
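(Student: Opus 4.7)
The plan is to sharpen the crude lower bound already obtained in Lemma~\ref{lem:wtbd1} by trading a multiplicative factor of $(1-\epsilon)$ on the main linear term $N_c(k)\bar\tau_c D_c(\theta^\star\|\theta)$ against a fluctuation term that is only logarithmic (rather than $\sqrt{N_c(k)}$-sized). The key enabling tools are (i) the ``$L_c \lesssim \sqrt{D_c}$'' shape control from Lemma~\ref{lem:llkl}, and (ii) Young's inequality.

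First, I would revisit the chain of equalities inside the proof of Lemma~\ref{lem:wtbd1}: setting $k'_c \bydef N_c(k)$, under the event $G$ (Definition~\ref{def:G}) the left-hand side of the claimed inequality equals
\[
  \sum_{(s_1,s_2)}\bigl[\tilde\tau_{k'_c,c}\,U_{(s_1,s_2)}(\tilde\tau_{k'_c,c},c)-k'_c\bar\tau_c\pi^{(c)}_{(s_1,s_2)}\bigr]\log\tfrac{p_{\theta^\star}}{p_\theta}
  +k'_c\bar\tau_c D_c(\theta^\star\|\theta),
\]
and the concentration estimate (\ref{eqn:conc2}) packaged into $G$ shows that the bracketed term is bounded in absolute value by $\rho(k'_c)\sqrt{k'_c}$ uniformly in $(s_1,s_2)$. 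Hence the LHS is at least $k'_c\bar\tau_c D_c(\theta^\star\|\theta)-\rho(k'_c)\sqrt{k'_c}\,L_c(\theta)$, exactly as in Lemma~\ref{lem:wtbd1}, but I would now keep $L_c(\theta)$ in symbolic form rather than dominating it by $\Gamma|\mathcal{S}|^2$.

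Second, I would plug in $L_c(\theta)\le g\sqrt{D_c(\theta^\star\|\theta)}$ from Lemma~\ref{lem:llkl}, yielding the cross term $\rho(k'_c)\sqrt{k'_c}\,g\sqrt{D_c(\theta^\star\|\theta)}$. I would then apply Young's inequality in the form $2xy\le \alpha x^2+y^2/\alpha$ with $x=\sqrt{\epsilon\,k'_c\bar\tau_c D_c(\theta^\star\|\theta)}$ and $y=\rho(k'_c)g/\sqrt{\epsilon\bar\tau_c}$ and $\alpha=1$, giving
\[
  \rho(k'_c)\sqrt{k'_c}\,g\sqrt{D_c(\theta^\star\|\theta)}
  \le \epsilon\,k'_c\bar\tau_c D_c(\theta^\star\|\theta)+\frac{\rho(k'_c)^2 g^2}{4\epsilon\bar\tau_c}.
\]
Using $\bar\tau_c\ge 1$ to drop the denominator, and monotonicity $\rho(k'_c)^2\le\rho(T)^2= d_1\log(|\mathcal{C}||\mathcal{S}|^2 d_2\log T/\delta)$ (since $k'_c=N_c(k)\le k\le T$, where $T$ is the horizon mentioned in the hypothesis), the cross term is bounded by $\epsilon N_c(k)\bar\tau_c D_c(\theta^\star\|\theta)+\tfrac{g^2 d_1}{4\epsilon}\log(|\mathcal{C}||\mathcal{S}|^2 d_2\log T/\delta)$. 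Rearranging gives the asserted bound.

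The only genuinely delicate point is the harmless edge case $D_c(\theta^\star\|\theta)=0$, where Lemma~\ref{lem:llkl} is vacuous; here one notes that $D_c=0$ forces $p_\theta=p_{\theta^\star}$ on all transitions $s_1\!\to\!\cdot$ with $\pi^{(c)}_{s_1}>0$, so the only non-zero summands in $L_c(\theta)$ come from $s_1$ with $\pi^{(c)}_{s_1}=0$, whose empirical mass $U_{s_1}(\tilde\tau_{k'_c,c},c)$ is controlled by (\ref{eqn:conc2b}) inside $G$ and contributes at most a $\rho(T)$-sized term, which is absorbed into the same constant. The rest of the argument is routine algebra; the real conceptual obstacle is the Young-inequality step, whose sole purpose is to ``linearize'' $\sqrt{N_c(k)D_c}$ into a $(1-\epsilon)^{-1}$-loss on the signal plus an $O(\log\log T)$ residual, since it is precisely this residual that lets downstream arguments set $N_c(k)\bar\tau_c D_c\asymp \log T$ without incurring lower-order slack.
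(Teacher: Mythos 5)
Your proposal is correct and matches the paper's proof in substance: both start from the penultimate inequality in the proof of Lemma~\ref{lem:wtbd1}, keep the fluctuation term as $\rho(k'_c)\sqrt{k'_c}\,L_c(\theta)$, and absorb it into an $\epsilon$-fraction of $N_c(k)\bar{\tau}_c D_c(\theta^\star\|\theta)$ plus the constant $\frac{g^2 d_1}{4\epsilon}\log(|\mathcal{C}||\mathcal{S}|^2 d_2\log T/\delta)$ via the quadratic maximization (your Young's-inequality step is exactly the paper's $\sup_x[-\epsilon a x + b\rho(x)\sqrt{x}] \le b^2\rho(T)^2/(4\epsilon a)$ computation), invoking Lemma~\ref{lem:llkl} and $\bar{\tau}_c\ge 1$ to bound $L_c^2/(\bar{\tau}_c D_c)\le g^2$. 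Your treatment of the $D_c=0$ edge case is a welcome addition that the paper glosses over (and in fact such pairs contribute exactly zero since states with $\pi^{(c)}_{s_1}=0$ are never visited from $s_0$), so there is no gap.
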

  The usefulness of the result stems from the fact that the left-hand
  term (which in fact helps to form the posterior log-density of
  $\theta$) can be approximated by a constant fraction of the marginal
  KL divergence $D_c(\theta^\star || \theta)$, with the approximation error being only
  $O\left( \frac{\log \log T}{\epsilon} \right)$. 

  \begin{proof}
    Denote $L_c(\theta) \bydef \sum_{(s_1,s_2) \in \mathcal{S}^2}
    \left| \log
      \frac{p_{\theta^\star}(s_1,c(s_1),s_2)}{p_{\theta}(s_1,c(s_1),s_2)}
    \right|$. By the penultimate inequality in the derivation of Lemma
    \ref{lem:wtbd1}, we have that under the event $G$,

    \begin{align*}
      &V_c(t_k) \sum_{(s_1,s_2) \in \mathcal{S}^2}
      U_{(s_1,s_2)}\left(V_c(t_k), c\right) \log
      \frac{p_{\theta^\star}(s_1,c(s_1),s_2)}{p_{\theta}(s_1,c(s_1),s_2)} \\
      &\geq k'_c\bar{\tau}_{c} \cdot D_c(\theta^\star || \theta) - \rho(k'_c)
      \sqrt{k'_c} \sum_{(s_1,s_2) \in \mathcal{S}^2} \left| \log
        \frac{p_{\theta^\star}(s_1,c(s_1),s_2)}{p_{\theta}(s_1,c(s_1),s_2)}
      \right|,
    \end{align*}
    where $k'_c := N_c(k)$. The right hand side of the inequality
    above is of the form $a x - b \rho(x)\sqrt{x}$ if we identify $x
    \equiv k'_c \in [0,T]$, $a \equiv \bar{\tau}_{c} \cdot
    D_c(\theta^\star || \theta)$ and $b \equiv L_c(\theta)$. To prove the lemma, it is
    enough to find $\gamma$ such that $ax - b \rho(x)\sqrt{x} \geq
    (1-\epsilon) ax - \gamma$ for every choice of $\theta \in \Theta$
    and $c \in \mathcal{C}$. This is equivalent to requiring that
    $\gamma \geq -\epsilon a x + b \rho(x)\sqrt{x}$. Consider now
    \begin{align*}
      &\sup_{T \geq x \geq 0} \left[-\epsilon a x + b
        \rho(x)\sqrt{x}\right] \leq \sup_{T \geq x \geq 0}
      \left[-\epsilon a x + b \sqrt{d_1 x
          \log\left(\frac{|\mathcal{C}| |\mathcal{S}|^2 d_2 \log
              x}{\delta}
          \right)}\right] \\
      &\leq \sup_{T \geq x \geq 0} \left[-\epsilon a x + b \sqrt{d_1 x
          \log\left(\frac{|\mathcal{C}| |\mathcal{S}|^2 d_2 \log
              T}{\delta} \right)}\right] \\
      &\leq \sup_{x \in \mathbb{R}} \left[-\epsilon a x + b \sqrt{d_1
          x \log\left(\frac{|\mathcal{C}| |\mathcal{S}|^2 d_2 \log
              T}{\delta} \right)}\right] \\
      &= \frac{b^2 {d_1  \log\left(\frac{|\mathcal{C}|
              |\mathcal{S}|^2 d_2 \log T}{\delta} \right)}}{4 \epsilon a},
    \end{align*} 
    where the final step simply finds the maximum of the quadratic
    function over $x$. The only quantities depending on $\theta$ in the
    right hand side above are $a$ and $b$, so maximizing over $\theta
    \in \Theta$ for which $a \equiv \bar{\tau}_{c} \cdot D_c(\theta^\star || \theta)
    > 0$, we further obtain
    \begin{align*}
      &\sup_{\substack{\theta \in \Theta, c \in
          \mathcal{C}\\D_c(\theta^\star || \theta) > 0}} \sup_{T \geq x \geq 0}
      \left[-\epsilon a x + b \rho(x)\sqrt{x}\right] \leq \frac{d_1
      }{4 \epsilon} \log\left(\frac{|\mathcal{C}| |\mathcal{S}|^2 d_2
          \log T}{\delta} \right) \sup_{\substack{\theta \in \Theta, c
          \in \mathcal{C}\\D_c(\theta^\star || \theta) > 0}}
      \left(\frac{L_c^2(\theta)}{\bar{\tau}_{c} D_c(\theta^\star || \theta)}
      \right) \\
      &\leq \frac{g^2 d_1 }{4 \epsilon} \log\left(\frac{|\mathcal{C}|
          |\mathcal{S}|^2 d_2 \log T}{\delta} \right),
    \end{align*}
    where we have used Assumption \ref{ass:ergodic} and
    Lemma \ref{lem:llkl} in the final step. This proves the statement of the
    lemma.
  \end{proof}

%   \begin{lemma}[Approximating the pair-empirical KL divergence its
%     expectation -- the actual KL divergence]
%     \label{lem:wtbd2}
%     Let $\epsilon \in (0,1)$ be fixed. Then, there exists an integer
%     $k_0$ such that, under the event $G$, for each $\theta \in
%     \Theta$, $c \in \mathcal{C}$ and $k \geq k_0$,
%     \[ V_c(t_k) \sum_{(s_1,s_2) \in \mathcal{S}^2}
%     U_{(s_1,s_2)}\left(V_c(t_k), c\right) \log
%     \frac{p_{\theta^\star}(s_1,c(s_1),s_2)}{p_{\theta}(s_1,c(s_1),s_2)}
%     \geq (1-\epsilon) \cdot N_c(k) \cdot \bar{\tau}_{c} \cdot D_c(\theta^\star || \theta).\]
%   \end{lemma}
%   \begin{proof}
%     Notice that by (\ref{eqn:wtbd1}),
%     \[ V_c(t_k) \sum_{s_1,s_2} U_{(s_1,s_2)}\left(V_c(t_k), c\right)
%     \log
%     \frac{p_{\theta^\star}(s_1,c(s_1),s_2)}{p_{\theta}(s_1,c(s_1),s_2)}
%     \geq N_c(k) \cdot \bar{\tau}_{c} \cdot D_c(\theta^\star || \theta) - \Gamma
%     |\mathcal{S}|^2 \rho(N_c(k)) \sqrt{N_c(k)}. \] Now, since
%     $\rho(N_c(k)) \sqrt{N_c(k)} = o(k)$, there exists an integer, say
%     $k_{0,\theta,c}$, such that \[\forall x \geq k_{0,\theta,c} \quad
%     \Gamma |\mathcal{S}|^2 \rho(x) \sqrt{x} \leq \epsilon \cdot x
%     \cdot \bar{\tau}_{c} \cdot D_c(\theta^\star || \theta)\] By choosing $k_0 =
%     \max_{\theta,c} k_{0,\theta,c}$, the proof is complete.
%   \end{proof}

  We will henceforth fix $\epsilon \in (0,1)$ as per Lemma
  \ref{lem:bd1}. A consequence of Lemma \ref{lem:bd1} is the following
  bound, under the event $G$, on the posterior density for any
  parameter $\theta \in \Theta$ at the epoch boundary times $\{t_k\}$:
  \begin{equation}
    \label{eqn:wtbd3}
    W_{t_k}(\theta) \mathbbm{1}_G \leq e^{-\sum_{c \in \mathcal{C}} \phi_{\theta,c}(N_c(k))} \leq e^{-\sum_{c \in \mathcal{C}} \phi_{\theta,c}(N_c'(k))},
  \end{equation}
  where for each $\theta$ and $c$, $\phi_{\theta,c}(x) \bydef
  (1-\epsilon)x \bar{\tau}_{c} D_c(\theta^\star || \theta) - \frac{g^2 d_1}{4
    \epsilon } \log\left(\frac{|\mathcal{C}| |\mathcal{S}|^2 d_2 \log
      T}{\delta} \right) := (1-\epsilon)x \bar{\tau}_{c} D_c(\theta^\star || \theta)
  - \psi_{\epsilon,T}$, and with the $O\left( \frac{\log \log T}{\epsilon} \right)$
  correction term $\psi_{\epsilon,T}$ thanks to
  Lemma \ref{lem:bd1}. \\

  We proceed to define the following sequence of non-decreasing
  stopping times (more precisely, stopping {\em epochs}), which we
  term ``elimination times'', and their associated policies in
  $\mathcal{S}$.

Let ${\hat{\tau}}_0 \bydef 0$, $M'_0 \bydef (0, 0, \ldots, 0) \in
\mathbb{R}^{|\mathcal{C}|}$, and $\mathcal{C}_0 \bydef \emptyset$. For
each $l = 1, \ldots, |\mathcal{C}|-1$, set
\begin{equation}
\label{eqn:deftau}
\begin{aligned}
  {\hat{\tau}}_l \bydef \; &\min && k \geq {\hat{\tau}}_{l-1} \\
  &\text{ s.t.}
  && \exists \mathbbm{c}_l \in \mathcal{C} \setminus \left(\mathcal{C}_{l-1} \cup \{c^\star\}\right) \quad \forall {\theta \in S_{\mathbbm{c}_l}'}: \\
  &&& \sum_{m=1}^{l-1}
  M_{\mathbbm{c}_m}'({\hat{\tau}}_m) \bar{\tau}_{\mathbbm{c}_m}  D_{\mathbbm{c}_m}(\theta) + \sum_{\substack{c \notin
      \mathcal{C}_{l-1} }} N_c'(k) \bar{\tau}_{c} D_c(\theta^\star || \theta) \geq
  (1+a_4)\left(\frac{1+\epsilon}{1-\epsilon} \right) \log T,
\end{aligned}
\end{equation}
\begin{align}
  \label{eqn:defCl}
  \mathcal{C}_{l} &\bydef \mathcal{C}_{l-1}
  \cup \{\mathbbm{c}_l\},
\end{align} [Note that $a_4$ in (\ref{eqn:deftau}) is the constant
from Assumption \ref{ass:largedenominator}(B).] and where the
$|\mathcal{C}|$-dimensional non-negative vector $M'(\hat{\tau}_l)
\equiv \left(M'_c(\hat{\tau}_l)\right)_{c \in \mathcal{C}}$ is defined
as follows. For each $\mathbbm{c}_m$ such that $m \leq l-1$, define
$M'_{\mathbbm{c}_m}(\hat{\tau}_l) \bydef
M'_{\mathbbm{c}_m}(\hat{\tau}_m)$. Recall that $C_{\hat{\tau}_l}$
denotes the policy which was played at epoch $\hat{\tau}_l$, and which
led to the stopping time $\hat{\tau}_l$ being reached by satisfying
inequality (\ref{eqn:deftau}). For each $c \neq \mathcal{C}_{l-1}$ and
$c \neq C_{\hat{\tau}_l}$, let $M'_c(\hat{\tau}_l) \bydef
N'_c(\hat{\tau}_l)$. Finally, for $c = C_{\hat{\tau}_l}$, put
$M'_c(\hat{\tau}_l) \bydef x$, where $x$ is the unique real number in
the interval $\left[N'_c(\hat{\tau}_l)-1, N'_c(\hat{\tau}_l)\right]$
that satisfies\footnote{In case of non-uniqueness, i.e., if more than
  one $\mathbbm{c}_l \in \mathcal{C} \setminus \left(\mathcal{C}_{l-1}
    \cup \{c^\star\}\right)$ exists that satisfies (\ref{eqn:deftau})
  at epoch $\hat{\tau}_l$, then we proceed by choosing $\mathbbm{c}_l$
  for which the value of $x$ in (\ref{eqn:Mc'propty}) is the {\em
    least}.}
\begin{equation} 
\label{eqn:Mc'propty}
\sum_{c \neq C_{\hat{\tau}_l}}
  M_{c}'({\hat{\tau}}_l) \bar{\tau}_{c} D_{c}(\theta) + x \cdot \bar{\tau}_{C_{\hat{\tau}_l}} D_{C_{\hat{\tau}_l}}(\theta ) =
  (1+a_4)\left(\frac{1+\epsilon}{1-\epsilon} \right) \log T. 
\end{equation}
{\em Remark:} The purpose of defining the vectors $M'(\hat{\tau}_l)$, $l
= 1, 2, \ldots, |\mathcal{C}|-1$ is to essentially convert the
inequality in (\ref{eqn:deftau}) to the equality (\ref{eqn:Mc'propty})
by relaxing from {\em integers} $N'$ to {\em reals} $M'$. At the same
time, we maintain the point-wise dominance $M'(\hat{\tau}_l) \leq
N'(\hat{\tau}_l)$. We will require precisely these properties in the
proof of Proposition \ref{prop:optim}. \\

In other words, for each $l$, $\mathcal{C}_l$ represents the set of
the first $l$ ``eliminated'' suboptimal policies. ${\hat{\tau}}_l$ is
the first time\footnote{All the $\hat{\tau}_l$, $l \geq 0$ index {\em
    epochs} w.r.t. the TSMDP algorithm, but we will refer to them as
  ``times''. This distinction should be clear throughout.} after
${\hat{\tau}}_{l-1}$, when some suboptimal policy (which is not
already eliminated) gets eliminated\footnote{In case more than one
  suboptimal policy is eliminated at some ${\hat{\tau}}_l$, we use a
  predetermined tie-breaking rule among $\mathcal{C}$ to resolve the
  tie.  } by satisfying the inequality in
(\ref{eqn:deftau}). Essentially, the inequality checks whether the
condition
\[ \sum_{c} N_c'(k) \bar{\tau}_{c} D_c(\theta^\star || \theta) \approx \log T \] is
satisfied for all particles $\theta \in S_{\mathbbm{c}_l}'$ at epoch
$k$, with two slight modifications -- (a) the play count $N_c'(k)$ is
``frozen'' to $N_{c}'({\hat{\tau}}_m)$ if action $c$ has been
eliminated at an earlier time ${\hat{\tau}}_m \leq k$, and (b) paying
a multiplicative penalty factor of
$(1+a_4)\left(\frac{1+\epsilon}{1-\epsilon} \right)$ on
the right hand side. \\

Thus, ${\hat{\tau}}_0 \leq {\hat{\tau}}_1 \leq \ldots \leq
{\hat{\tau}}_{|\mathcal{C}|-1}$, and $\mathcal{C}_0 \subseteq
\mathcal{C}_1 \subseteq \ldots \subseteq \mathcal{C}_{|\mathcal{C}|-1}
= \mathcal{C} \setminus \{c^\star\}$. For each policy $c \neq c^\star$, by our
definitions above, there exists a unique ${\hat{\tau}}_l$ at which $c$
is eliminated at ${\hat{\tau}}_l$, i.e., $\mathbbm{c}_l = c$. Let the
notation
${\hat{\tau}}(c) \bydef {\hat{\tau}}_l$ denote the elimination time for policy $c$.

\begin{definition}[Minimum ``resolvability'' of suboptimal actions]
  \label{def:emin}
  We define \[\epsilon_{\min} \bydef \min_{c\in \mathcal{C}, c \neq c^\star}
  \min_{\theta \in S_c'} D_c(\theta^\star || \theta). \]
\end{definition}

% \begin{assumption}
%   \label{ass:selfdistance}
%   There exists $\epsilon_{\min} > 0$ such that
%   \[\forall c \neq c^\star \; \inf_{\theta \in S_c'} D_c(\theta^\star || \theta)
%   \geq \epsilon_{\min} \geq 0. \]
% \end{assumption}

Observe that if $\epsilon_{\min} = 0$, then the optimization problem
(\ref{eqn:main}) in the regret bound of Theorem \ref{thm:main} has
value $\infty$. This is because if $D_c(\theta^\star || \theta) = 0$ for some $\theta
\in S_c'$ with $c \neq c^\star$, then one can obtain arbitrarily large
solutions to (\ref{eqn:main}) simply by considering all vectors $x_l
\in \mathbb{R}^{|\mathcal{C}|}_+$, $l = 1, 2, \ldots,
|\mathcal{C}|-1$, to be of the form $(x, 0, \ldots, 0)$.  \\

Thus, we proceed by assuming that the regions $S_c'$ and $S_c''$, $c
\in \mathcal{C}$ (induced by the parameter $\epsilon'$) are such that
the minimum resolvability parameter $\epsilon_{\min}$ is a positive
quantity.

\begin{lemma}
  \label{lem:atmostlog}
  We have that
  \[ N_{\mathbbm{c}_l}'(\hat{\tau}_l) \leq \left\lceil
    \frac{(1+a_4)(1+\epsilon)}{\epsilon_{\min} (1 - \epsilon)} \log T
  \right\rceil + 1 \] for each $l = 1, 2, \ldots, |\mathcal{C}|-1$.
\end{lemma}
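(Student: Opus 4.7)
The plan is to exploit the minimality of the stopping epoch $\hat{\tau}_l$ together with the definition of $\epsilon_{\min}$ to argue that just before $\hat{\tau}_l$, the policy $\mathbbm{c}_l$ alone could not have contributed enough to trigger the elimination condition (\ref{eqn:deftau}). This forces $N_{\mathbbm{c}_l}'$ to be small at the epoch preceding $\hat{\tau}_l$, and since $N_{\mathbbm{c}_l}'$ can grow by at most one per epoch the stated bound follows.

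More concretely, I would first consider the epoch $k^\star \bydef \max\{k < \hat{\tau}_l : k \geq \hat{\tau}_{l-1}-1\}$ (either $\hat{\tau}_l-1$, or, in the degenerate case $\hat{\tau}_l = \hat{\tau}_{l-1}$, the epoch $\hat{\tau}_{l-1}-1$ with $\mathcal{C}_{l-1}$ replaced by $\mathcal{C}_{l-2}$). By the minimality in the definition (\ref{eqn:deftau}) of $\hat{\tau}_l$ and the fact that $\mathbbm{c}_l \in \mathcal{C} \setminus (\mathcal{C}_{l-1} \cup \{c^\star\})$, the elimination inequality fails for the policy $\mathbbm{c}_l$ at epoch $k^\star$: there must exist some $\theta \in S_{\mathbbm{c}_l}'$ for which
\[
\sum_{m=1}^{l-1} M_{\mathbbm{c}_m}'(\hat{\tau}_m)\bar{\tau}_{\mathbbm{c}_m} D_{\mathbbm{c}_m}(\theta^\star||\theta) + \sum_{c \notin \mathcal{C}_{l-1}} N_c'(k^\star)\bar{\tau}_c D_c(\theta^\star||\theta) < (1+a_4)\left(\frac{1+\epsilon}{1-\epsilon}\right)\log T.
\]
Since every term in both sums is non-negative, in particular the $c=\mathbbm{c}_l$ summand in the second sum satisfies
\[
N_{\mathbbm{c}_l}'(k^\star)\,\bar{\tau}_{\mathbbm{c}_l}\,D_{\mathbbm{c}_l}(\theta^\star||\theta) < (1+a_4)\left(\frac{1+\epsilon}{1-\epsilon}\right)\log T.
\]

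Next, the point $\theta$ lies in $S_{\mathbbm{c}_l}'$, so by Definition \ref{def:emin} we have $D_{\mathbbm{c}_l}(\theta^\star||\theta) \geq \epsilon_{\min}$; using also $\bar{\tau}_{\mathbbm{c}_l} \geq 1$ (recurrence times are at least one), I conclude
\[
N_{\mathbbm{c}_l}'(k^\star) < \frac{(1+a_4)(1+\epsilon)}{\epsilon_{\min}(1-\epsilon)}\log T.
\]
Finally, because $N_{\mathbbm{c}_l}'$ is a non-decreasing integer-valued counter that increments by at most one per epoch, and because $\hat{\tau}_l - k^\star \leq 1$ in both cases considered above, one has $N_{\mathbbm{c}_l}'(\hat{\tau}_l) \leq N_{\mathbbm{c}_l}'(k^\star)+1 \leq \lceil(1+a_4)(1+\epsilon)/(\epsilon_{\min}(1-\epsilon))\log T\rceil + 1$, which is precisely the claimed bound.

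The only subtle point, which I expect to be the main obstacle, is correctly handling ties among elimination epochs, i.e.\ the case $\hat{\tau}_l = \hat{\tau}_{l-1}$ (or chains of equalities) in which several policies get eliminated simultaneously. The argument still goes through by looking one epoch further back and temporarily considering the elimination condition for $\mathbbm{c}_l$ relative to $\mathcal{C}_{l-2}$ (or the last strictly earlier $\mathcal{C}_{l'}$); the extra $+1$ in the statement provides the necessary slack to absorb the single-epoch growth of $N_{\mathbbm{c}_l}'$ in this situation, and the positivity of the first ``frozen'' sum $\sum_{m<l} M'_{\mathbbm{c}_m}\bar{\tau}_{\mathbbm{c}_m} D_{\mathbbm{c}_m}$ ensures that dropping those terms only makes the inequality easier to satisfy.
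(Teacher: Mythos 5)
Your proof is correct and takes essentially the same approach as the paper: the paper's entire proof is the single sentence ``Assuming the contrary leads to equation (\ref{eqn:deftau}) being contradicted,'' and your argument via minimality of $\hat{\tau}_l$, the lower bound $D_{\mathbbm{c}_l}(\theta^\star || \theta) \geq \epsilon_{\min}$ for $\theta \in S_{\mathbbm{c}_l}'$, and the at-most-one-per-epoch growth of $N_{\mathbbm{c}_l}'$ is exactly the intended expansion of that contradiction. Your explicit treatment of simultaneous eliminations ($\hat{\tau}_l = \hat{\tau}_{l-1}$) is a detail the paper glosses over entirely, and it is handled correctly.
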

\begin{proof}
  Assuming the contrary leads to equation (\ref{eqn:deftau}) being contradicted.
\end{proof}

The following important lemma states that after a policy $c$ is
eliminated, the TSMDP algorithm does not sample parameters from the
region $S_c'$ for too many epochs, with high probability.

\begin{lemma}[At most $O(1)$ samples from $S_c'$ after policy $c$ is
  eliminated]
  \label{lem:notmuchSa'}
  For $\epsilon \in (0,1)$ and $T$ large enough so that
  $\mathcal{C}\left( 1 + \left\lceil
      \frac{(1+a_4)(1+\epsilon)}{\epsilon_{\min} (1 - \epsilon)} \log
      T \right\rceil \right) \leq \log^2 (T)$, it holds that
  \[ \prob{\exists l \in \{1, 2, \ldots, |\mathcal{C}|-1\} \; \sum_{k \geq {\hat{\tau}}_l+1} \mathbf{1}\{\theta_k
    \in S_{\mathbbm{c}_l}'\} > \frac{|\mathcal{C}|}{\delta a_3} + o(1) \given G} \leq \delta.\]
\end{lemma}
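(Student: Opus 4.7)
The plan is to bound, for each $l$, the expected number of post-elimination samples from $S_{\mathbbm{c}_l}'$ conditional on $G$ via a direct upper bound on the posterior probability $\pi_{t_k}(S_{\mathbbm{c}_l}')$, and then invoke Markov's inequality together with a union bound over $l \in \{1,\ldots,|\mathcal{C}|-1\}$. More precisely, since
\[ \prob{\theta_{k+1} \in S_{\mathbbm{c}_l}' \given \mathcal{F}_{t_k}, G} = \pi_{t_k}(S_{\mathbbm{c}_l}') \mathbbm{1}_G, \]
it suffices to show that $\sum_{k \geq \hat{\tau}_l + 1}^{T} \pi_{t_k}(S_{\mathbbm{c}_l}') \mathbbm{1}_G = o(1) + O(1/a_3)$ deterministically on $G$, then take $\chi = |\mathcal{C}|/(\delta a_3)$ in Markov and union-bound.

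For the numerator of $\pi_{t_k}(S_{\mathbbm{c}_l}')$, I would combine (\ref{eqn:wtbd3}) with the elimination criterion (\ref{eqn:deftau}): for every $k \geq \hat{\tau}_l + 1$ and $\theta \in S_{\mathbbm{c}_l}'$, the counts $N_c'(k)$ dominate the quantities $M'_{\mathbbm{c}_m}(\hat{\tau}_m)$ (by construction $M' \leq N'$ and counts are non-decreasing), so that
\[ \sum_{c \in \mathcal{C}} (1-\epsilon) N_c'(k) \bar{\tau}_c D_c(\theta^\star \| \theta) \geq (1+a_4)(1+\epsilon) \log T, \]
yielding $W_{t_k}(\theta)\mathbbm{1}_G \leq T^{-(1+a_4)(1+\epsilon)} e^{|\mathcal{C}| \psi_{\epsilon,T}}$, where $e^{|\mathcal{C}|\psi_{\epsilon,T}}$ is only polylogarithmic in $T$. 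For the denominator, I would appeal to Assumption \ref{ass:largedenominator}(B) together with the Cauchy--Schwarz computation already performed in (\ref{eqn:noisylargedenominator}), obtaining $\int_\Theta W_{t_k}(\theta')\pi(d\theta') \geq a_3 k^{-a_4} e^{-1 - \rho(k) g \sqrt{|\mathcal{C}|}}$ on $G$, provided the side condition $N_{c^\star}(k) \geq k - 3\log^2(k)$ of part (B) holds. Summing over $k \leq T$ then gives
\[ \sum_{k \geq \hat{\tau}_l+1}^{T} \pi_{t_k}(S_{\mathbbm{c}_l}') \mathbbm{1}_G \lesssim \frac{\mathrm{polylog}(T)}{a_3} \cdot T^{\,1 + a_4 - (1+a_4)(1+\epsilon)} = \frac{\mathrm{polylog}(T)}{a_3} \cdot T^{-\epsilon(1+a_4)} = o(1)/a_3, \]
which, together with the trivial upper bound by $1/a_3$ for moderate $T$, delivers the desired $1/a_3 + o(1)$ estimate on the conditional expectation and hence $|\mathcal{C}|/(\delta a_3) + o(1)$ after Markov and union bound.

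The main obstacle is verifying the side condition $N_{c^\star}(k) \geq k - 3\log^2(k)$ used to invoke Assumption \ref{ass:largedenominator}(B), since the non-$c^\star$ plays decompose into three parts: (i) plays from $\bigcup_c S_c''$, which are at most a finite constant by Proposition \ref{prop:Sc''}; (ii) pre-elimination plays from $S_c'$ for each suboptimal $c$, which are at most $|\mathcal{C}|(1 + \lceil (1+a_4)(1+\epsilon)/(\epsilon_{\min}(1-\epsilon)) \log T \rceil) \leq \log^2(T)$ by Lemma \ref{lem:atmostlog} and the lemma's hypothesis; and (iii) post-elimination plays from $S_c'$, which are exactly what the present lemma aims to bound. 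To break this apparent circularity, I would introduce the stopping epoch $K^\star \bydef \min\{k : N_{c^\star}(k) < k - 3\log^2(k)\}$, carry out the posterior-probability bound above on $[\hat{\tau}_l + 1, K^\star)$, and then verify inductively (or equivalently, by a union-bound over the events $\{K^\star \leq T\}$) that $K^\star > T$ with high probability under $G$, using the just-derived $o(1)$ bound on post-elimination plays to certify that parts (i)--(iii) total at most $\log^2(T) + O(1) + |\mathcal{C}|^2/(\delta a_3) \leq 3\log^2(T)$ for $T$ large enough. Once this bootstrap is closed, combining the three pieces gives the claimed high-probability bound.
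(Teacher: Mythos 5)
Your core argument coincides with the paper's: bound the numerator of $\pi_{t_k}(S_{\mathbbm{c}_l}')$ for $k > \hat{\tau}_l$ by combining (\ref{eqn:wtbd3}) with the elimination criterion (\ref{eqn:deftau}), bound the denominator via Assumption \ref{ass:largedenominator}(B), sum over $k \leq T$, and finish with Markov's inequality plus a union bound; you also correctly isolate the real difficulty --- that invoking part (B) requires $N_{c^\star}(k) \geq k - 3\log^2(k)$ --- and resolve it exactly as the paper does, by truncating at a stopping epoch (your $K^\star$ plays the role of the paper's $K_B$). Where you genuinely diverge is in how the bootstrap is closed. The paper keeps everything at the level of expectations: it bounds the post-$K_B$ contribution by $T\,\prob{K_B < T \given G}$ and shows $\prob{K_B < T \given G} = \exp(-\Omega(\log^2 T))$ by coupling the post-elimination indicators to IID Bernoulli$\left(\frac{|\mathcal{C}|}{a_3 T}\right)$ variables and applying Bernstein's inequality (with a similar treatment of the $S_c''$ plays via Proposition \ref{prop:Sc''}), so the extra term is $o(1)$ in expectation and a single Markov step suffices. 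You instead apply Markov to the truncated counts and argue by contradiction that $K^\star > T$ on the resulting good event, making the truncation vacuous; this avoids Bernstein and the coupling entirely, at the cost of intersecting a few more probability-$(1-\delta)$ events, and it is not circular since the truncated expectation only uses the posterior bound for $k < K^\star$, where the side condition holds by definition. Two small cautions: (i) define the stopping epoch with the fixed threshold $3\log^2(T)$ (as the paper does for $K_B$) rather than $3\log^2(k)$, since with a $k$-dependent threshold the contradiction step fails at moderate epochs $k$ where $3\log^2(k)$ is far below the $\log^2(T) + O(1) + |\mathcal{C}|^2/(\delta a_3)$ total you can certify; (ii) your retention of the $T^{-\epsilon(1+a_4)}$ factor, upgrading the per-policy expected count from $O(1/a_3)$ to $o(1)$, is a harmless strengthening --- the paper simply discards that factor and works with the $1/a_3$ bound.
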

% \noindent [Remark: Since $\psi_{\epsilon,T} = O(\log \log T)$, $e^{\psi_{\epsilon,T} |\mathcal{C}|} =
% O(\mbox{poly}(\log \log T)) = o(\log T)$.]

\begin{proof}
  Whenever $k > {\hat{\tau}}_l$, we have that every $\theta \in
  S_{\mathbbm{c}_l}'$ satisfies
  \begin{align}
    W_{t_k}(\theta) \mathbbm{1}_{G} &\leq \exp\left(-\sum_{c \in
        \mathcal{C}}
      \phi_{\theta,c}(N_c'(k))\right) \nonumber \\
    &= \exp\left(-\sum_{c \in \mathcal{C}} \left((1-\epsilon)
        N_c'(k) \bar{\tau}_{c} D_c(\theta^\star || \theta) - \psi_{\epsilon,T} \right) \right) \nonumber \\
    &= \exp\left(-(1-\epsilon)\sum_{c \in \mathcal{C}}
      N_c'(k) \bar{\tau}_{c} D_c(\theta^\star || \theta) + \psi_{\epsilon,T} |\mathcal{C}|  \right) \nonumber \\
    &\leq \exp\left(-(1-\epsilon)\sum_{c \in \mathcal{C}_{l-1}}
      N_{c}'\bar{\tau}_{c} ({\hat{\tau}}(c))D_{c}(\theta)
      -(1-\epsilon) \sum_{\substack{c \notin \mathcal{C}_{l-1}}}
      N_c'(k) \bar{\tau}_{c} D_c(\theta^\star || \theta) + \psi_{\epsilon,T} |\mathcal{C}| \right) \nonumber \\
    &\leq \exp\left( -(1-\epsilon)
      (1+a_4)\left(\frac{1+\epsilon}{1-\epsilon} \right) \log T +
      \psi_{\epsilon,T} |\mathcal{C}| \right) =
    \frac{e^{\psi_{\epsilon,T}
        |\mathcal{C}|}}{T^{1+a_4}} \; e^{-\epsilon (1+a_4) \log T} \nonumber \\
    &\leq T^{-(1+a_4)}.   \label{eqn:wtbd4}
  \end{align}

  The first inequality in the display above follows from
  (\ref{eqn:wtbd3}). The second inequality is due to the fact that for
  any $m \leq l$, we have ${\hat{\tau}}_m \leq {\hat{\tau}}_l \leq k$,
  implying that $\forall c \in \mathcal{C}_{l-1}$, $N_c'(k) \geq
  N_c'({\hat{\tau}(c)})$. The third inequality follows from
  (\ref{eqn:deftau}). The final inequality above holds for $T$ large
  enough such that \[ \epsilon(1+a_4)\log T \geq \psi_{\epsilon,T} =
  \frac{g^2 d_1}{4 \epsilon } \log\left(\frac{|\mathcal{C}|
      |\mathcal{S}|^2 d_2 \log T}{\delta} \right). \]

  Now, define the nonnegative integer-valued random variable
  \[ K_B = \min\left\{ k \geq 0: \sum_{c \neq c^\star} N_c(k) >
    3\log^2(T) \right\}, \]
  i.e., $K_B$ is the first epoch at which suboptimal policies have
  been chosen in at least $2\log^2(T)$ previous epochs. Let us
  estimate
  \begin{align}
    &\expect{\mathbbm{1}\{k > {\hat{\tau}}_l\}\mathbbm{1}\{\theta_k
      \in S_{\mathbbm{c}_l}' \} \mathbbm{1}\{ k < K_B\} \given G} \nonumber \\
    &= \expect{ \expect{ \mathbbm{1}\{k >
        {\hat{\tau}}_l\}\mathbbm{1}\{\theta_k \in S_{\mathbbm{c}_l}'\} \mathbbm{1}\{ k < K_B\} \given G, \mathcal{F}_{t_k} } \given G} \nonumber \\
    &= \expect{\mathbbm{1}\{k > {\hat{\tau}}_l\} \mathbbm{1}\{ k <
      K_B\} \pi_{t_k}(S_{\mathbbm{c}_l}') \given G} =
    \expect{\mathbbm{1}\{k > {\hat{\tau}}_l\} \mathbbm{1}\{ k < K_B\}
      \frac{\int_{S_{\mathbbm{c}_l}'} W_{t_k}(\theta)\pi(d\theta)
      }{\int_\Theta W_{t_k}(\theta)\pi(d\theta)} \given G} \nonumber \\
    &\leq \expect{\mathbbm{1}\{k > {\hat{\tau}}_l\}
      \frac{\int_{S_{\mathbbm{c}_l}'} W_{t_k}(\theta)\pi(d\theta)}{a_3
        T^{-a_4}} \given G} \quad \quad \mbox{(by Assumption \ref{ass:largedenominator}(B))} \nonumber \\
    &\leq \frac{1}{a_3 T^{1+a_4 - a_4}} =
    \frac{1}{a_3 T} \quad \quad \mbox{(by
      (\ref{eqn:wtbd4})).} \label{eqn:1byTprob}
  \end{align}
  Together with the fact that the epoch index is at most $T$ for a
  time horizon of $T$ time steps, this implies that
  \begin{align}
    &\expect{ \sum_{T \geq k\geq{\hat{\tau}}_l+1} \mathbbm{1}\{\theta_k \in
      S_{\mathbbm{c}_l}' \} \mathbbm{1}\{ k < K_B\} \given G} \nonumber \\
    &\quad \quad \quad \quad \quad = \sum_{k=1}^T
    \expect{\mathbbm{1}\{k > {\hat{\tau}}_l\}\mathbbm{1}\{\theta_k \in
      S_{\mathbbm{c}_l}' \} \mathbbm{1}\{ k < K_B\} \given G} \leq T
    \cdot \frac{1}{a_3 T} = \frac{1}{a_3}. \label{eqn:bda}
\end{align}
In a similar fashion, considering plays of all suboptimal policies
$\mathcal{C}\setminus \{c^\star\}$ post their respective elimination
times, we can write
\begin{align}
  &\expect{ \sum_{l=1}^{|\mathcal{C}|-1} \sum_{T \geq
      k\geq{\hat{\tau}}_l+1} \mathbbm{1}\{\theta_k \in
    S_{\mathbbm{c}_l}' \} \mathbbm{1}\{ k \geq K_B\} \given G} \nonumber \\
  &= \expect{ \sum_{l=1}^{|\mathcal{C}|-1} \sum_{k=1}^T \mathbbm{1}\{k
    > {\hat{\tau}}_l\} \mathbbm{1}\{\theta_k \in S_{\mathbbm{c}_l}' \}
    \mathbbm{1}\{ k \geq K_B\} \given G} \leq \expect{ \sum_{k=1}^T
    \mathbbm{1}\{ K_B < T\} \given G}  \nonumber \\
  &= T \prob{K_B < T \given G}. \label{eqn:postelim1}
\end{align}
We have
\begin{align}
  &\prob{K_B < T \given G}  = \prob{\exists 1 \leq k \leq T: \; \sum_{c
      \neq c^\star} N_c(k) > 3 \log^2(T) \given G} \quad \mbox{(by the defn. of $K_B$)}.  \nonumber
\end{align}
Continuing the calculation further, we can write
\begin{align}
  &\prob{\exists 1 \leq k \leq T: \; \sum_{c \neq c^\star} N_c(k) > 3
    \log^2(T) \given G} \nonumber \\
  &= \prob{\exists 1 \leq k \leq T: \; k \leq K_B, \sum_{c \neq c^\star} N_c'(k) + \sum_{c \neq c^\star} N_c''(k) > 3
    \log^2(T) \given G} \nonumber \\
  &\leq \prob{ \exists 1 \leq k \leq T: k \leq K_B, \sum_{c \neq c^\star} N_{c}'(k)
    > 2 \log^2(T) \given G} + \prob{\sum_{c \neq c^\star} N_c''(T) > 
    \log^2(T) \given G}  \nonumber \\
  &\leq \prob{ \exists 1 \leq k \leq T: k \leq K_B, \sum_{c \neq c^\star} N_{c}'(\hat{\tau}(c)) + \sum_{c \neq c^\star} \left[ N_{c}'(k) - N_{c}'(k \wedge \hat{\tau}(c)) \right] 
    > 2 \log^2(T) \given G}  \nonumber \\
  &\hspace{2cm} + \prob{\sum_{c \neq c^\star} N_c''(T) > 
    \log^2(T) \given G}  \nonumber \\
  &\stackrel{(a)}{\leq} \prob{ \exists 1 \leq k \leq T: k \leq K_B, \sum_{c \neq c^\star} \left[ N_{c}'(k) - N_{c}'(k \wedge \hat{\tau}(c)) \right] 
    > \log^2(T) \given G}  \nonumber \\
  &\hspace{2cm} + \prob{\sum_{c \neq c^\star} N_c''(T) > 
    \log^2(T) \given G}  \nonumber \\
  &\leq \prob{ \exists 1 \leq k \leq T: k \leq K_B, \sum_{c \neq c^\star} \sum_{j=\hat{\tau}(c) + 1}^k \mathbbm{1}\{\theta_j \in S_{c}'\}  
    > \log^2(T) \given G}  \nonumber \\
  &\hspace{2cm} + \prob{\sum_{c \neq c^\star} N_c''(T) > 
    \log^2(T) \given G} \nonumber \\
%   &\stackrel{(a)}{\leq} \prob{ \sum_{c \neq c^\star} \sum_{k=1}^T \mathbbm{1}\{\theta_k \in S_{c}', \hat{\tau}(c) < k \leq K_B\}
%     >  \log^2(T) \given G}  + ... \nonumber \\
  &\stackrel{(b)}{\leq} \prob{\sum_{k=1}^T \mathcal{Q}_k >
    \log^2(T)} + \prob{\sum_{c \neq c^\star} N_c''(T) > 
    \log^2(T) \given G} , \label{eqn:postelim2}
\end{align}
where $\{\mathcal{Q}_k\}$ are IID Bernoulli random variables with
success probability
$p_{\mathcal{Q}} \bydef \frac{|\mathcal{C}|}{a_3 T}$. Inequality $(a)$
follows from the assertion of Lemma \ref{lem:atmostlog} and the
hypothesis that $T$ is large enough to satisfy
$\mathcal{C}\left( 1 + \left\lceil
    \frac{(1+a_4)(1+\epsilon)}{\epsilon_{\min} (1 - \epsilon)} \log T
  \right\rceil \right) \leq \log^2 (T)$.
Inequality (b) is thanks to the observation that (i) as long as
$\hat{\tau}(c) < j \leq k \leq K_B$, the probability of sampling
$\theta_k \in S_{c}'$ for any $c \neq c^\star$, under $G$, is at most
$\frac{1}{a_3 T}$ by (\ref{eqn:1byTprob}), and (ii) then using a
standard stochastic dominance argument after coupling
$\mathbbm{1}\{\theta_j \in S_{c}'\}$ to the IID
Bernoulli$\left(\frac{|\mathcal{C}|}{a_3
    T}\right)$ random variables $\{\mathcal{Q}_k\}$.  \\

\noindent {\bf Estimating the first term in (\ref{eqn:postelim2}).}
We can now show that the first term in (\ref{eqn:postelim2}) is $o(1)$
using a version of Bernstein's inequality
\citep{BouLugBou04:concineq}: For zero-mean independent random
variables $\mathcal{Z}_1, \mathcal{Z}_2, \ldots, \mathcal{Z}_n$ almost
surely bounded above by $\mathcal{B}$, and
$\Sigma^2 \bydef \frac{1}{n} \sum_{i=1}^n \expect{\mathcal{Z}_i^2}$,
\[\prob{\sum_{i=1}^n \mathcal{Z}_i \geq n\iota} \leq
\exp\left(-\frac{n\iota^2}{2\Sigma^2 + 2 \mathcal{B}\iota/3} \right).\]
Applying this to our setting with Bernoulli random variables,
$\mathcal{B} = 2$ and $\Sigma^2 = p_{\mathcal{Q}}(1 -
p_{\mathcal{Q}})$,
\begin{align}
  &\prob{\sum_{k=1}^T \mathcal{Q}_k > \log^2(T)} \leq
  \prob{\sum_{k=1}^T \mathcal{Q}_k - T p_{\mathcal{Q}} > \log^2(T)}  \nonumber \\ 
  &\leq \exp\left(-\frac{\log^4(T)/T}{2p_{\mathcal{Q}}(1 -
p_{\mathcal{Q}}) + 4\log^2(T)/3T}
  \right) \leq \exp\left(-\frac{\log^4(T)/T}{2|\mathcal{C}|/a_3 T + 4\log^2(T)/3T}
  \right)  \nonumber \\
  &= \exp\left(-\frac{\log^4(T)}{2|\mathcal{C}|/a_3 + 4\log^2(T)/3}
  \right)  = \exp\left(-\frac{1}{2}\Omega(\log^2(T))\right), \label{eqn:postelim3}
\end{align}
provided $T$ is large enough so that $\log^2(T) \geq 3 |\mathcal{C}|/a_3$. \\

\noindent {\bf Estimating the second term in (\ref{eqn:postelim2}).}
The second term in (\ref{eqn:postelim2}) be dealt with in a similar
fashion -- the probabilities
$\prob{\mathbbm{1}\{\theta_k \in S_c''\} \given G}$,
$k \geq 1, c \neq c^\star$, decay exponentially in $k$ as established
in the proof of Proposition \ref{prop:Sc''}. Hence, an application of
Bernstein's inequality as above gives
\begin{equation}
  \prob{\sum_{c \neq c^\star} N_c''(T) > 
    \log^2(T) \given G} \leq \exp\left(-\frac{1}{2}\Omega(\log^2(T))\right)  \label{eqn:postelim4}
\end{equation}
for $T$ large enough.%
Combining (\ref{eqn:postelim1})-(\ref{eqn:postelim4}) yields 
\[ \expect{ \sum_{l=1}^{|\mathcal{C}|-1} \sum_{T \geq
    k\geq{\hat{\tau}}_l+1} \mathbbm{1}\{\theta_k \in
  S_{\mathbbm{c}_l}' \} \mathbbm{1}\{ k \geq K_B\} \given G} = 2T
\exp(-\frac{1}{2}\Omega(\log^2(T))) = o(1).\] This, together with
(\ref{eqn:bda}) and a sum over all $c \neq c^\star$ (i.e., $l = 1,
\ldots, |\mathcal{C}|-1$), finally gives us
\[ \expect{ \sum_{l=1}^{|\mathcal{C}|-1} \sum_{T \geq
    k\geq{\hat{\tau}}_l+1} \mathbbm{1}\{\theta_k \in
  S_{\mathbbm{c}_l}' \} \given G} \leq \frac{|\mathcal{C}|}{a_3} +
o(1).\] An application of Markov's inequality completes the proof of
the lemma.
\end{proof}

We can now finally bound the number of samples of suboptimal policies
to get our regret bound, under the event
\begin{align*} H \bydef G \; \bigcap & \left\{\forall c \neq c^\star 
    \; \sum_{k \geq 1} \mathbbm{1}\{\theta_k \in S_c''\} \leq
    \frac{\alpha
      |\mathcal{C}|}{\delta} \right\} \\
  &\bigcap \left\{ \forall l \leq |\mathcal{C}|-1\; \sum_{k \geq
      {\hat{\tau}}_l+1} \mathbf{1}\{\theta_k \in S_{\mathbbm{c}_l}'\}
    \leq \frac{|\mathcal{C}|}{\delta a_1} +
    o(1) \right\},
\end{align*} which, according to the conclusions of Proposition
\ref{prop:conc}, Proposition \ref{prop:Sc''} and Lemma
\ref{lem:notmuchSa'}, occurs with probability at least $1 -
3\delta$. The only step that now remains to prove Theorem
\ref{thm:main} is-

\begin{proposition}[Bounding the \# of plays of suboptimal policies in
  $\mathcal{C}$]
  \label{prop:optim}
  Under $H$, 
  \[\sum_{t=1}^T \mathbbm{1}\{A_t \neq c^\star(S_t)\}
  \leq \mathsf{C} \log T + O(\log T), \]
  where $\mathsf{C}$ solves
  \begin{equation}
    \label{eqn:optimization}
    \begin{aligned}
      \mathsf{C} \bydef \; &\max && \sum_{l=1}^{|\mathcal{C}|-1} x_l(l)\\
      &\text{ s.t.}
      && x_l \in \mathbb{R}_+^{|\mathcal{C}|}, \quad \forall l = 1, 2, \ldots, |\mathcal{C}|-1, \\
%      &&& x_l(|\mathcal{C}|) = 0, \quad \forall l = 1, 2, \ldots, |\mathcal{C}|-1, \\
      &&& x_i(l) = x_l(l), \quad \forall i \geq l,  l = 1, 2, \ldots, |\mathcal{C}|-1, \\
      &&& x_i \geq x_j, \quad \forall 1 \leq j \leq i \leq |\mathcal{C}| - 1, \\
      &&& \sigma: \{1, 2, \ldots, |\mathcal{C}|-1 \} \to \mathcal{C} \setminus \{c^\star\} \; \mbox{\emph{injective}}, \\
      &&& \min_{\theta \in S_{\sigma(l)}'} \quad  x_l \cdot  D(\theta^\star || \theta) = \frac{1+\epsilon}{1-\epsilon}, \quad \forall l = 1, 2, \ldots, |\mathcal{C}|-1.
    \end{aligned}
  \end{equation}
  [Note: $a(i)$ denotes the $i$th coordinate of the vector $a$; $a
  \cdot b$ is the standard inner product of vectors $a$ and $b$.]
\end{proposition}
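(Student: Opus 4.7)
The plan is to realize the scaled elimination-time counts $\bigl(\bar{\tau}_c\, M_c'(\hat{\tau}_l)\bigr)_{c,l}$ as a feasible point of the optimization (\ref{eqn:optimization}), so that its objective value $\mathsf{C}$ directly bounds the total time spent on suboptimal policies. I would proceed in four steps.

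\textbf{Step 1 (Epoch reduction).} First I observe that $\sum_{t=1}^T \mathbbm{1}\{A_t \ne c^\star(S_t)\} = \sum_{c \ne c^\star} V_c(T)$, which in the alternative probability space of Section~\ref{sec:alternative} equals $\sum_{c\ne c^\star} \tilde{\tau}_{N_c(T'),c}$ (up to one partial last epoch, of size $O(\log T)$ on $G$ via (\ref{eqn:tautail})). Applying (\ref{eqn:conc1}) from Proposition~\ref{prop:conc} yields
\[
\sum_{t=1}^T \mathbbm{1}\{A_t \ne c^\star(S_t)\} \;\le\; \sum_{c \ne c^\star}\Bigl(\bar{\tau}_c N_c(T') + \rho(N_c(T'))\sqrt{N_c(T')}\Bigr) + O(\log T).
\]

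\textbf{Step 2 (Reduce $N_c(T')$ to its pre-elimination value).} Using $N_c = N_c' + N_c''$, the definition of $H$ gives $N_c''(T') = O(1)$ (Proposition~\ref{prop:Sc''}) and $N_c'(T')-N_c'(\hat{\tau}(c)) = O(1)$ (Lemma~\ref{lem:notmuchSa'}). Writing $c = \mathbbm{c}_l$ for the unique $l$ eliminating $c$, the fractional construction (\ref{eqn:Mc'propty}) gives $N_{\mathbbm{c}_l}'(\hat{\tau}_l) \le M_{\mathbbm{c}_l}'(\hat{\tau}_l) + 1$. Hence
\[
\sum_{c \ne c^\star}\bar{\tau}_c N_c(T') \;\le\; \sum_{l=1}^{|\mathcal{C}|-1}\bar{\tau}_{\mathbbm{c}_l}\, M_{\mathbbm{c}_l}'(\hat{\tau}_l) + O(1),
\]
and Lemma~\ref{lem:atmostlog} guarantees $N_c(T') = O(\log T)$, so the $\rho(\cdot)\sqrt{\cdot}$ term in Step~1 is $O(\sqrt{\log T \log\log T}) = o(\log T)$.

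\textbf{Step 3 (Feasibility of a candidate solution).} Index the coordinates so that coordinate $|\mathcal{C}|$ corresponds to $c^\star$, and set
\[
x_l(c) \;:=\; \frac{\bar{\tau}_c\, M_c'(\hat{\tau}_l)}{(1+a_4)\log T}, \qquad \sigma(l) := \mathbbm{c}_l, \qquad l = 1, \ldots, |\mathcal{C}|-1.
\]
The constraints of (\ref{eqn:optimization}) are then checked one by one. Non-negativity is immediate from the definition; $\sigma$ is injective because the eliminated policies $\mathbbm{c}_1,\ldots,\mathbbm{c}_{|\mathcal{C}|-1}$ are distinct by construction; the freezing $x_i(l) = x_l(l)$ for $i \ge l$ follows from $M_{\mathbbm{c}_l}'(\hat{\tau}_i) = M_{\mathbbm{c}_l}'(\hat{\tau}_l)$ for $i \ge l$; and coordinatewise monotonicity $x_i \ge x_j$ for $i \ge j$ holds because $M_c'$ is non-decreasing in the stage index in every coordinate. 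The crucial equality $\min_{\theta \in S_{\sigma(l)}'} x_l \cdot D(\theta^\star \| \theta) = \frac{1+\epsilon}{1-\epsilon}$ follows from combining (\ref{eqn:deftau}) (``$\ge$'' uniformly over $\theta \in S_{\mathbbm{c}_l}'$) with the minimality of $\hat{\tau}_l$ and the fractional choice in (\ref{eqn:Mc'propty}) (``$=$'' at the active $\theta$): dividing (\ref{eqn:Mc'propty}) by $(1+a_4)\log T$ produces exactly the required equality. The small residual contribution of the $c^\star$ coordinate is bounded by $x_l(c^\star)\,\epsilon'$ because $D_{c^\star}(\theta^\star\|\theta) \le \epsilon'$ on $S_{\sigma(l)}'$, and is absorbed into the $O(\log T)$ remainder.

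\textbf{Step 4 (Conclusion).} Feasibility gives $\sum_l x_l(l) \le \mathsf{C}$, equivalently
\[
\sum_{l=1}^{|\mathcal{C}|-1}\bar{\tau}_{\mathbbm{c}_l}\, M_{\mathbbm{c}_l}'(\hat{\tau}_l) \;\le\; (1+a_4)\,\mathsf{C}\,\log T.
\]
Substituting into Steps~1--2 produces $\sum_t \mathbbm{1}\{A_t \ne c^\star(S_t)\} \le \mathsf{C}\log T + O(\log T)$, with the $(1+a_4)$ factor absorbed into the normalization of $\mathsf{C}$ used in Theorem~\ref{thm:main}.

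\textbf{Main obstacle.} The main subtlety is verifying the \emph{equality} constraint, rather than the weaker inequality $x_l \cdot D \ge \frac{1+\epsilon}{1-\epsilon}$ on $S_{\sigma(l)}'$: this relies on the interplay between the minimality of $\hat{\tau}_l$ in (\ref{eqn:deftau}) and the fractional extension of $M_{C_{\hat{\tau}_l}}'$, which is precisely what forces (\ref{eqn:Mc'propty}) to hold as an equality at some boundary $\theta \in S_{\sigma(l)}'$. A secondary book-keeping challenge is the handling of the $c^\star$ coordinate, which is absent from the theorem's version of the constraint (via $x_l(|\mathcal{C}|)=0$) but must be estimated here by using $D_{c^\star}(\theta^\star\|\theta) \le \epsilon'$ on $S_{\sigma(l)}'$, together with a confirmation that the concentration error $\rho(k)\sqrt{k}$ and the $O(1)$ corrections from Proposition~\ref{prop:Sc''} and Lemma~\ref{lem:notmuchSa'} all fit inside the $O(\log T)$ remainder.
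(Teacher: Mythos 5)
Your proposal is correct and follows essentially the same route as the paper: reduce to epoch counts via the alternative probability space and Proposition \ref{prop:conc}, strip off the $N_c''$ and post-elimination $N_c'$ contributions using the event $H$, and then exhibit the scaled $M'(\hat{\tau}_l)$ vectors with $\sigma(l)=\mathbbm{c}_l$ as a feasible point of (\ref{eqn:optimization}) whose objective value dominates $\sum_{c\neq c^\star}\bar{\tau}_c N_c(T)/\log T$. Your explicit treatment of the equality constraint via (\ref{eqn:Mc'propty}) and of the residual $c^\star$ coordinate (bounded through $D_{c^\star}(\theta^\star||\theta)\leq\epsilon'$ on $S_{\sigma(l)}'$) is, if anything, more careful than the paper's ``it can be checked,'' and your $(1+a_4)\log T$ normalization is the consistent way to match the constraint as stated in (\ref{eqn:optimization}).
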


\begin{proof}
Under the event $H$, we have
\begin{align}
  &\sum_{t=1}^T \mathbbm{1}\{A_t \neq c^\star(S_t)\} \leq  \sum_{t=1}^T \sum_{c \in
    \mathcal{C} \setminus \{c^\star\}} \mathbbm{1}\{A_t = 
  c(S_t)\} \nonumber \\
  &= \sum_{k=1}^T \sum_{t=t_{k-1}}^{t_k-1} \sum_{c \in
    \mathcal{C} \setminus \{c^\star\}}  \mathbbm{1}\{C_k = 
  c\} = \sum_{c \in \mathcal{C} \setminus \{c^\star\} } \tilde{\tau}_{N_c(T),c} \nonumber \\
  &\leq \sum_{c \in \mathcal{C} \setminus \{c^\star\} } \left( N_c(T)\bar{\tau}_{c} + \rho(N_c(T))\sqrt{N_c(T)} \right) \nonumber \\
  &\leq \sum_{c \in \mathcal{C} \setminus \{c^\star\} } N_c(T)\bar{\tau}_{c} + \sqrt{\sum_{c \in \mathcal{C} \setminus \{c^\star\} } \frac{\rho^2(N_c(T))}{\bar{\tau}_{c}}  } \sqrt{\sum_{c \in \mathcal{C} \setminus \{c^\star\} } N_c(T) \bar{\tau}_{c}},  \label{eqn:decomp} 
\end{align}
where the penultimate line is thanks to Proposition \ref{prop:conc},
and the final line is by applying the Cauchy-Schwarz
inequality. Notice that the sum $\sum_{c \in \mathcal{C} \setminus
  \{c^\star\} } \frac{\rho^2(N_c(T))}{\bar{\tau}_{c}} $ is $O(\log \log
T)$ by Proposition \ref{prop:conc} (with $\delta$ fixed as
usual). Hence, it is enough to show that the first sum $\sum_{c \in
  \mathcal{C} \setminus \{c^\star\} } N_c(T)\bar{\tau}_{c}$ is at most
$\mathsf{C} \log T + O(1)$. \\

Using our decomposition (\ref{eqn:deftau}) of the epoch boundaries
into the stopping times or stopping epochs $\hat{\tau}_l$, $l = 1, 2,
\ldots, |\mathcal{C}|-1$, we can write
\begin{align*}
  \sum_{c \in \mathcal{C} \setminus \{c^\star\} } N_c(T)\bar{\tau}_{c} &= \sum_{c \in \mathcal{C} \setminus \{c^\star\} } N_c'(T)\bar{\tau}_{c} + \sum_{c \in \mathcal{C} \setminus \{c^\star\} } N_c''(T)\bar{\tau}_{c} \\
  &\leq \sum_{c \in \mathcal{C} \setminus \{c^\star\} } N_c'(T)\bar{\tau}_{c} + \frac{\alpha |\mathcal{C}|^2}{\delta} \\
  & \leq \sum_{l=1}^{|\mathcal{C}|-1} N_{\mathbbm{c}_l}'(T)\bar{\tau}_{\mathbbm{c}_l} + \frac{\alpha |\mathcal{C}|^2}{\delta} \\
  &= \sum_{l=1}^{|\mathcal{C}|-1} N_{\mathbbm{c}_l}'(\hat{\tau}_l)\bar{\tau}_{\mathbbm{c}_l} + \sum_{l=1}^{|\mathcal{C}|-1} \left( N_{\mathbbm{c}_l}'(T) - N_{\mathbbm{c}_l}'(\hat{\tau_l}) \right) \bar{\tau}_{\mathbbm{c}_l} + \frac{\alpha |\mathcal{C}|^2}{\delta} \\
  & \leq \sum_{l=1}^{|\mathcal{C}|-1} N_{\mathbbm{c}_l}'(\hat{\tau}_l)\bar{\tau}_{\mathbbm{c}_l} + \frac{|\mathcal{C}|^2
    e^{\lambda |\mathcal{C}|}}{\delta a_1} + \frac{\alpha |\mathcal{C}|^2}{\delta} \\
  & \leq \sum_{l=1}^{|\mathcal{C}|-1} M_{\mathbbm{c}_l}'(\hat{\tau}_l)\bar{\tau}_{\mathbbm{c}_l} + \underbrace{\sum_{c \in \mathcal{C}} \bar{\tau}_{c}+ \frac{|\mathcal{C}|^2
    e^{\lambda |\mathcal{C}|}}{\delta a_1} + \frac{\alpha |\mathcal{C}|^2}{\delta}}_{O(1)}. 
\end{align*}

With regard to (\ref{eqn:deftau}), let us now take
  \[ \sigma(l) = \mathbbm{c}_l, \quad 1 \leq l \leq |\mathcal{C}|-1, \] and
  \begin{equation*}
    x_l(i) = \left\{ 
      \begin{array}{cc}
        \frac{M_{\sigma(i)}'({\hat{\tau}}_i) \bar{\tau}_{\sigma(i)}}{\log T}, &  {\hat{\tau}}_i \leq {\hat{\tau}}_l, \\
        & \\
        \frac{M_{\sigma(i)}'(\hat{\tau}_l) \bar{\tau}_{\sigma(i)}}{\log T},  &  {\hat{\tau}}_i > {\hat{\tau}}_l.
      \end{array}
    \right.
  \end{equation*}
  From the construction (\ref{eqn:deftau}), (\ref{eqn:defCl}) and
  (\ref{eqn:Mc'propty}), it can be checked that the $\{x_l\}$ and
  $\sigma$ satisfy the constraints of the optimization problem
  (\ref{eqn:optimization}). This completes the proof of the
  proposition.
\end{proof}

\section{Proof of Theorem \ref{thm:mainfinite}}
\label{app:mainfinite}

To prove Theorem \ref{thm:mainfinite}, we show that Assumptions
\ref{ass:largeprob} and \ref{ass:largedenominator} hold as stated. \\

\noindent {\bf Showing Assumption \ref{ass:largeprob}.} The following
lemma shows that under small deviations of the empirical pair epoch
counts $J$, we can bound the probability of sampling $\theta^\star$
from below.

\begin{lemma}[Uniform lower bound on pair-empirical KL divergence]
    \label{lem:wtbd1finite}
    Fix $\epsilon \in (0,1)$. There exists $\lambda < \infty$ such
    that for each $\theta \in \Theta, c \in \mathcal{C}$ and
    $k \geq 1$, it holds that
    \[ V_c(t_k) \sum_{(s_1,s_2) \in \mathcal{S}^2}
    U_{(s_1,s_2)}\left(V_c(t_k), c\right) \log
    \frac{p_{\theta^\star}(s_1,c(s_1),s_2)}{p_{\theta}(s_1,c(s_1),s_2)}
    \geq -\lambda\]
whenever
\[ \left| \frac{J_{(s_1,s_2)}(k_c,c)}{k_c} - \bar{\tau}_c \;
  \pi^{(c)}_{(s_1,s_2)} \right| \leq \sqrt{\frac{e_1 \log\left({e_2 \log
        k_c} \right)}{k_c}} \quad \forall s_1, s_2 \in \mathcal{S},
k_c \geq 1, c \in \mathcal{C}, k = \sum_{c \in \mathcal{C}} k_c. \]
  \end{lemma}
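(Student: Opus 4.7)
The aim is to show that, under the near-ideal empirical-pair-frequency condition, the empirical log-likelihood ratio sum
\[ \mathcal{Q}_c(\theta) \bydef V_c(t_k)\sum_{(s_1,s_2)} U_{(s_1,s_2)}(V_c(t_k),c)\log\frac{p_{\theta^\star}(s_1,c(s_1),s_2)}{p_\theta(s_1,c(s_1),s_2)} \]
satisfies $\mathcal{Q}_c(\theta) \geq -\lambda$ for some finite $\lambda$ that is uniform over $c \in \mathcal{C}$, $\theta \in \Theta$, and $k \geq 1$. My plan is a mean-plus-fluctuation decomposition followed by a case analysis on whether $D_c(\theta^\star || \theta)$ is zero, exploiting the finiteness of $\Theta$ (Assumption \ref{ass:finite}).

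First I would rewrite $V_c(t_k)U_{(s_1,s_2)}(V_c(t_k),c) = J_{(s_1,s_2)}(k_c,c)$, where $k_c \bydef N_c(k)$, and expand
\begin{align*}
\mathcal{Q}_c(\theta) &= k_c\bar{\tau}_c D_c(\theta^\star || \theta) + \sum_{(s_1,s_2)}\bigl(J_{(s_1,s_2)}(k_c,c) - k_c\bar{\tau}_c\pi^{(c)}_{(s_1,s_2)}\bigr)\log\frac{p_{\theta^\star}(s_1,c(s_1),s_2)}{p_\theta(s_1,c(s_1),s_2)}.
\end{align*}
The leading term is a non-negative convex combination of conditional KL divergences. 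Bounding each log-ratio by $\Gamma$ (Assumption \ref{ass:abscon}) and each fluctuation by $\sqrt{e_1 k_c \log(e_2 \log k_c)}$ (the hypothesis of the lemma), the remainder is controlled in absolute value by $\Gamma|\mathcal{S}|^2\sqrt{e_1 k_c \log(e_2 \log k_c)}$.

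Next I would split into two cases. If $D_c(\theta^\star || \theta) = 0$, then each conditional KL $\kldiv{p_{\theta^\star}(s_1,c(s_1),\cdot)}{p_\theta(s_1,c(s_1),\cdot)}$ with $\pi^{(c)}_{s_1} > 0$ vanishes, so the log-ratios are exactly zero on every transition the trajectory can actually make. Combined with the alternative-space construction (Section \ref{sec:alternative}), where the $c$-th column of $Q$ starts at $s_0$, Assumption \ref{ass:ergodic} guarantees that the trajectory stays in the recurrent class of $s_0$; hence $J_{(s_1,s_2)}(k_c,c) = 0$ whenever $\pi^{(c)}_{s_1} = 0$. Every summand of $\mathcal{Q}_c(\theta)$ is therefore zero, and $\mathcal{Q}_c(\theta) = 0 \geq -\lambda$. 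If instead $D_c(\theta^\star || \theta) > 0$, the finiteness of $\Theta$ (Assumption \ref{ass:finite}) yields a uniform positive lower bound $\delta_{\min} \bydef \min\{D_c(\theta^\star || \theta): \theta\in\Theta,\, c\in\mathcal{C},\, D_c(\theta^\star || \theta) > 0\}$. Then
\[ \mathcal{Q}_c(\theta) \geq k_c\bar{\tau}_c\delta_{\min} - \Gamma|\mathcal{S}|^2\sqrt{e_1 k_c \log(e_2 \log k_c)}, \]
which, viewed as a function of $k_c \geq 1$, is continuous and tends to $+\infty$ because the linear term eventually dominates the sub-linear $\sqrt{k_c \log\log k_c}$ correction. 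Consequently it attains a finite minimum value $-\lambda_0 > -\infty$. Taking $\lambda$ at least $\lambda_0$ and enlarging it to absorb the finitely many small-$k_c$ values (where $\log \log k_c$ needs separate handling) finishes the proof.

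The main obstacle I anticipate is the $D_c(\theta^\star || \theta) = 0$ case, which requires carefully tying together Assumption \ref{ass:ergodic} with the alternative probability space construction so as to force $J_{(s_1,s_2)}(k_c,c) = 0$ on transitions from non-recurrent states $s_1$, beyond what the near-ideal condition alone would yield. Once that structural fact is in place, the $D_c > 0$ case reduces to a routine linear-versus-sublinear comparison that uses the finite-parameter assumption to extract a positive gap $\delta_{\min}$.
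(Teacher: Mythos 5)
Your proposal is correct and follows essentially the same route as the paper's proof: rewrite the sum as $J_{(s_1,s_2)}(k_c,c)$ counts, decompose into the mean term $k_c\bar{\tau}_c D_c(\theta^\star||\theta)$ plus a fluctuation bounded by $\Gamma|\mathcal{S}|^2\sqrt{e_1 k_c \log(e_2\log k_c)}$, and use the finiteness of $\Theta$ to take a uniform infimum over $k_c$ and over the finitely many $(\theta,c)$ pairs. Your explicit case analysis for $D_c(\theta^\star||\theta)=0$ (where the lower bound $k_c\bar{\tau}_c D_c - \Gamma|\mathcal{S}|^2\rho\sqrt{k_c}$ alone would diverge to $-\infty$, but the quantity itself vanishes because the trajectory from the recurrent state $s_0$ only sees transitions with zero log-likelihood ratio) is a point the paper's proof glosses over by asserting divergence to $+\infty$ only for ``fixed $\theta\neq\theta^\star$''; your treatment is the more careful one.
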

  
  \begin{proof}
    Set $V_c(t_k) = \tilde{\tau}_{k'_c,c}$
    for some integer $k'_c$. We can write
    \begin{align}
      & V_c(t_k) \sum_{(s_1,s_2) \in \mathcal{S}^2}
        U_{(s_1,s_2)}\left(V_c(t_k), c\right) \log
        \frac{p_{\theta^\star}(s_1,c(s_1),s_2)}{p_{\theta}(s_1,c(s_1),s_2)}  \nonumber \\
      &= \sum_{(s_1,s_2) \in \mathcal{S}^2} \tilde{\tau}_{k'_c,c} \cdot
        U_{(s_1,s_2)}\left(\tilde{\tau}_{k'_c,c}, c\right) \log
        \frac{p_{\theta^\star}(s_1,c(s_1),s_2)}{p_{\theta}(s_1,c(s_1),s_2)}  \nonumber \\
      &= \sum_{(s_1,s_2) \in \mathcal{S}^2} \left[\tilde{\tau}_{k'_c,c}
        \cdot U_{(s_1,s_2)}\left(\tilde{\tau}_{k'_c,c}, c\right) -
        k'_c\bar{\tau}_{c} \cdot {\pi^{(c)}_{(s_1,s_2)}}
        \right] \log
        \frac{p_{\theta^\star}(s_1,c(s_1),s_2)}{p_{\theta}(s_1,c(s_1),s_2)} \nonumber \\
      &\quad \quad \quad \quad + \sum_{(s_1,s_2) \in \mathcal{S}^2}
        k'_c\bar{\tau}_{c} \cdot {\pi^{(c)}_{(s_1,s_2)}} \log
        \frac{p_{\theta^\star}(s_1,c(s_1),s_2)}{p_{\theta}(s_1,c(s_1),s_2)} \nonumber \\
      &\geq - \sum_{(s_1,s_2) \in \mathcal{S}^2} \rho_{e_1,e_2}(k'_c) \sqrt{k'_c} \cdot
        \left| \log
        \frac{p_{\theta^\star}(s_1,c(s_1),s_2)}{p_{\theta}(s_1,c(s_1),s_2)}
        \right| \nonumber \\
      &\quad \quad \quad \quad + k'_c\bar{\tau}_{c} \sum_{s_1 \in \mathcal{S}}
        \pi^{(c)}_{s_1} \sum_{s_2 \in \mathcal{S}}
        \frac{\pi^{(c)}_{(s_1,s_2)}}{\pi^{(c)}_{s_1}} \log
        \frac{p_{\theta^\star}(s_1,c(s_1),s_2)}{p_{\theta}(s_1,c(s_1),s_2)} \nonumber \\
      &\geq k'_c\bar{\tau}_{c} \cdot D_c(\theta) - \Gamma |\mathcal{S}|^2 \rho_{e_1,e_2}(k'_c) \sqrt{k'_c}, \label{eqn:wtbd1finite}
    \end{align}
    where
    $\rho_{e_1,e_2}(x) \bydef \sqrt{{e_1 \log\left({e_2 \log x}
        \right)}}$.
    The first inequality above is obtained thanks to (\ref{eqn:conc2})
    of Proposition \ref{prop:conc}. For a fixed
    $\theta \neq \theta^{\star}$ and $c$, the expression in
    (\ref{eqn:wtbd1finite}) tends to $\infty$ as $k'_c \to
    \infty$.
    Denote the infimum of the expression over all $k'_c \geq 1$ by
    $-\lambda_{\theta,c}$. The lemma now follows by setting $\lambda$
    to be the largest $\lambda_{\theta,c}$ across the finitely many
    $\theta$ and $c$.
  \end{proof}  

  Using the bound of Lemma \ref{lem:wtbd1finite} in the expression for the
  posterior density (\ref{eqn:wt2}), we can bound the posterior
  probability of $\{\theta^\star\} \subseteq S_{c^\star}$ from below
  as:
  \[\forall k \geq 1 \; \pi_{t_k}(\theta^\star) \geq \frac{\pi(\theta^\star)}{\int_{\Theta} \exp(\lambda |\mathcal{C}|) 
    \pi(d\theta)} = \pi(\theta^\star) e^{-\lambda |\mathcal{C}|}
  \equiv p^\star > 0.\] 

  \noindent {\bf Showing Assumption \ref{ass:largedenominator}.}
  Assumption \ref{ass:largedenominator} is naturally seen to hold here
  by observing that since
  $D(\theta^\star || \theta^\star) = 0 \in
  \mathbb{R}^{|\mathcal{C}|}$, 
  \[ \pi\left(\left\{ \theta \in \Theta: \sum_{c \in \mathcal{C}} k_c
      \bar{\tau}_c D_c(\theta^\star || \theta) \leq 1 \right\} \right)
  \geq \pi\left(\left\{\theta^\star \right \}\right) > 0,\]
  by Assumption \ref{ass:finite} (grain of truth). Thus, Assumption
  \ref{ass:largedenominator} is seen to hold with $a_2 = a_4 = 0$.
  This completes the proof of the theorem.

\section{Proof of Theorem \ref{thm:maincontinuous}}
\label{app:maincontinuous}
\noindent {\bf Showing Assumption \ref{ass:largeprob}.} For $k_c$
epoch uses of policy $c$, and with $k = \sum_{c \in \mathcal{C}} k_c$,
it is seen that the posterior density factors into a product of {\em
  truncated} Beta densities, each for the $4$ independent components
$\theta^{(i)}_{jl}$ of the parameter $\theta$, and where the
truncation is simply the restriction
to the interval $[\upsilon, 1-\upsilon]$ for each component. \\

Let us now assume that for $k_c$ epoch uses of policy $c$, the
empirical state pair frequencies $J_{(s_1,s_2)}(k_c,c)$, $s_1, s_2 \in
\mathcal{S}$, $c \in \mathcal{C}$, are ``close to'' their respective
expectations, i.e.,
  \[ \left| \frac{J_{(s_1,s_2)}(k_c,c)}{k_c} - \bar{\tau}_c \;
    \pi^{(c)}_{(s_1,s_2)} \right| \leq \frac{\rho_{e_1,e_2}(k_c)}{\sqrt{k_c}} \bydef
  \sqrt{\frac{e_1 \log\left({e_2 \log k_c} \right)}{k_c}} \quad
  \forall s_1, s_2 \in \mathcal{S}, k_c \geq 1, c \in \mathcal{C}. \]
  This, in turn, can be used to show that the parameters
  $\alpha^{(i)}_{jl}, \beta^{(i)}_{jl}$ of the (truncated) Beta
  posterior density for each component $\theta^{(i)}_{jl}$ satisfy
  inequalities of the form
  \[\left| \frac{\alpha^{(i)}_{jl}}{\alpha^{(i)}_{jl} +
      \beta^{(i)}_{jl}} - \theta^{(i)}_{jl}\right| \leq
  \frac{\rho_{e'_1,e'_2}(\alpha^{(i)}_{jl} +
    \beta^{(i)}_{jl})}{\sqrt{(\alpha^{(i)}_{jl} +
      \beta^{(i)}_{jl})}}\] for some constants $e_1', e_2' > 0$, for
  all $i,j,l \in \{1,2\}, l\neq j$. \\

  Since Assumption \ref{ass:unique} is satisfied for $\theta^\star$,
  there must exist a closed $||\cdot||_\infty$ ball $\mathcal{N}$
  around $\theta^\star$, \[\mathcal{N} \equiv \prod_{i,j,l
    \in \{1,2\}, l\neq j} \mathcal{N}^{(i)}_{jl}, \]  such that
  $\mathcal{N} \subseteq S_{c^\star}$. We can bound from below the
  posterior probability of playing $c^\star$ as
  $\pi_{t_k}(S_{c^\star}) \geq \pi_{t_k}(\mathcal{N})$, after which
  the following lemma establishes a lower bound on the latter
  quantity, and hence Assumption \ref{ass:largeprob}.

  \begin{lemma}[Concentration of Beta probability mass]
    \label{lem:betalargeprob}
    For each $m = 1, 2, \ldots$, let $\mu_m$ be a truncated
    Beta$(\alpha_m,\beta_m)$, $\alpha_m + \beta_m = m$, probability
    measure on $[\upsilon,1-\upsilon]$, $0 < \upsilon < 1/2$, i.e., a
    standard Beta$(\alpha_m,\beta_m)$ probability measure on $[0,1]$
    restricted to $[\upsilon,1-\upsilon]$ and normalized. Let $I \in
    [\upsilon,1-\upsilon]$ be a sub-interval containing $\theta$ in
    its interior. If $\left|\frac{\alpha_m}{m} - \theta \right| =
    \frac{o(\log m)}{\sqrt{m}}$ for all $m$, then
    \[ \inf_{m \geq 1} \mu_m(I) > 0. \]
  \end{lemma}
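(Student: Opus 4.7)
The plan is to split the analysis into two regimes: large $m$, where Chebyshev-type concentration forces $\mu_m(I)$ close to $1$, and the remaining finitely many small $m$, where a direct positivity argument suffices. A useful first reduction is to note that, since $\mathrm{Beta}(\alpha_m,\beta_m)([\upsilon,1-\upsilon]) \leq 1$, the normalization can only help us:
\[ \mu_m(I) \;=\; \frac{\mathrm{Beta}(\alpha_m,\beta_m)(I)}{\mathrm{Beta}(\alpha_m,\beta_m)([\upsilon,1-\upsilon])} \;\geq\; \mathrm{Beta}(\alpha_m,\beta_m)(I), \]
so it suffices to lower-bound the untruncated Beta mass on $I$ uniformly in $m$.

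For the large-$m$ regime, recall that if $X \sim \mathrm{Beta}(\alpha_m, \beta_m)$ with $\alpha_m + \beta_m = m$, then $\mathbb{E}[X] = \alpha_m/m$ and $\mathrm{Var}(X) = \alpha_m \beta_m / [m^2(m+1)] \leq 1/[4(m+1)]$. Choose $r > 0$ small enough that $[\theta - 2r, \theta + 2r] \subseteq I$, which is possible because $\theta$ lies in the interior of $I$. The hypothesis $|\alpha_m/m - \theta| = o(\log m)/\sqrt{m}$ implies this quantity tends to $0$, so there exists $M_0$ such that $|\alpha_m/m - \theta| < r$ for all $m \geq M_0$; hence the interval of radius $r$ around $\alpha_m/m$ is contained in $I$. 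By Chebyshev,
\[ \mathrm{Beta}(\alpha_m,\beta_m)(I) \;\geq\; \mathbb{P}\!\left(|X - \alpha_m/m| \leq r\right) \;\geq\; 1 - \frac{1}{4(m+1)r^2}, \]
which is bounded below by a positive constant (indeed tends to $1$) for all $m \geq M_0$.

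For $1 \leq m < M_0$, there are only finitely many values to handle. For each such $m$ the Beta density is continuous and strictly positive on the open interval $(0,1)$ (using $\alpha_m, \beta_m > 0$, which holds under any Beta-type posterior with a strictly positive prior density on $[\upsilon, 1-\upsilon]$), and $I$ has non-empty interior inside $(0,1)$, so $\mathrm{Beta}(\alpha_m,\beta_m)(I) > 0$. Taking the minimum over the finitely many small $m$ and combining with the uniform bound in the large-$m$ regime yields $\inf_{m \geq 1} \mu_m(I) > 0$, which is the claim.

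The main delicacy, though not a serious obstacle, is the coupling between the $m$-dependence of $(\alpha_m, \beta_m)$ and the slack $o(\log m)/\sqrt{m}$ in the hypothesis: we need a single threshold $M_0$ beyond which both $|\alpha_m/m - \theta| < r$ and the Chebyshev tail $1/[4(m+1)r^2]$ is bounded away from $1$. Because the hypothesized slack tends to $0$ strictly faster than any fixed radius $r > 0$ (and the variance is $O(1/m)$), such an $M_0$ exists. The use of the \emph{interior} hypothesis on $\theta \in I$ is what provides the room $r > 0$; at the endpoints no such room would be available and the claim would fail.
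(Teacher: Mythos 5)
Your proof is correct and follows essentially the same route as the paper's: Chebyshev's inequality applied to the Beta$(\alpha_m,\beta_m)$ law using the mean $\alpha_m/m$ and the variance bound $1/[4(m+1)]$, with the interior hypothesis on $\theta$ supplying the fixed radius, a threshold $M_0$ beyond which the bound is uniform, and a finite minimum over the remaining small $m$. Your explicit reduction from the truncated measure $\mu_m$ to the untruncated Beta mass is a slightly cleaner bookkeeping step than the paper's, but the argument is the same.
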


  \begin{proof}
    Let $q > 0$ be such that the ($1$-dimensional) ball of radius $q$
    around $\theta$, $\mbox{Ball}(\theta;q)$, is contained in
    $I$. Since $\left|\frac{\alpha_m}{m} - \theta \right| =
    \frac{o(\log m)}{\sqrt{m}}$ for all $m$, there exists $m_0 \geq 1$
    such that for every $m > m_0$, we have (a) $\frac{\alpha_m}{m} \in
    \mbox{Ball}(\theta; q/2)$ and (b) $\frac{1}{\sqrt{2(m+1)}} <
    \frac{q}{2}$. Since the mean of a Beta$(\alpha_m,\beta_m)$
    distribution is $\frac{\alpha_m}{m}$ and its variance at most
    $\frac{1}{4(m+1)}$, Chebyshev's inequality can be used to argue
    that for $m \geq m_0$, $\mu_m(I) \geq \mu_m(\mbox{Ball}(\theta;q))
    \geq 1/2$. The proof is complete by taking the minimum with the
    positive probabilities $\mu_m(I)$, $1 \leq m \leq m_0$.
  \end{proof}

\noindent {\bf Showing Assumption \ref{ass:largedenominator}.}
% Letting $k
% = \sum_{c \in \mathcal{C}} k_c$, the posterior density at epoch $k$ is
% (\ref{eqn:postprob})
% \begin{align}
%   W_{t_k}(\theta) &= \exp \left(-\sum_{c,s_1,s_2}
%     J_{(s_1,s_2)}\left(k_c, c\right) \log
%     \frac{p_{\theta^\star}(s_1,c(s_1),s_2)}{p_\theta(s_1,c(s_1),s_2)}
%   \right) \nonumber \\
%   &\leq \exp \left(-\sum_{c} k_c \bar{\tau}_c D_c(\theta^\star||\theta) +
%     \sum_{c} \rho(k_c)\sqrt{k_c} \; L_c(\theta^\star||\theta)
%   \right), \label{eqn:contpostden}
% \end{align}
% where $L_c(\theta^\star||\theta) \bydef \sum_{s_1,s_2} \left| \log
%   \frac{p_{\theta^\star}(s_1,c(s_1),s_2)}{p_\theta(s_1,c(s_1),s_2)}
% \right|$. \\
Note that each marginal KL divergence, decouples additively across the
independent parameters: for each $c \equiv (i,j)$,
  \begin{align*}
   \bar{\tau}_c  D_c(\theta^\star||\theta) &= \bar{\tau}_c \sum_{s_1 \in \mathcal{S}}
    \pi^{(c)}_{s_1} \;
    \kldiv{p_{\theta^\star}(s_1,c(s_1),\cdot)}{p_{\theta}(s_1,c(s_1),\cdot)} \\
    &= \frac{\bar{\tau}_c \theta^{\star(j)}_{21}}{\theta^{\star(i)}_{12} +
      \theta^{\star(j)}_{21}} \;
    \kldiv{\theta^{\star(i)}_{12}}{\theta^{(i)}_{12}} +
    \frac{\bar{\tau}_c \theta^{\star(i)}_{12}}{\theta^{\star(i)}_{12} +
      \theta^{\star(j)}_{21}} \;
    \kldiv{\theta^{\star(j)}_{21}}{\theta^{(j)}_{21}} \\
    &\equiv \varphi_{c}(1) \; 
    \kldiv{\theta^{\star(i)}_{12}}{\theta^{(i)}_{12}} + \varphi_{c}(2) \; 
    \kldiv{\theta^{\star(j)}_{21}}{\theta^{(j)}_{21}},
  \end{align*}
  with $\varphi_{c}(1) \bydef \frac{\bar{\tau}_c
    \theta^{\star(j)}_{21}}{\theta^{\star(i)}_{12} +
    \theta^{\star(j)}_{21}}$, $ \varphi_{c}(2) \bydef
  \frac{\bar{\tau}_c \theta^{\star(i)}_{12}}{\theta^{\star(i)}_{12} +
    \theta^{\star(j)}_{21}}$.  Also, since $\Theta = [\upsilon,
  1-\upsilon]^4$, it follows by a Taylor series expansion of the
  KL-divergence that there exists a constant $\varrho > 0$ such that
  \[ \kldiv{\theta^{\star(i)}_{jl}}{x} \leq
  \varrho\left(\theta^{\star(i)}_{jl} - x \right)^2 \quad \forall x
  \in [\upsilon, 1-\upsilon], \forall i, j, l, l \neq j. \] With this
  observation, weighted KL divergence neighborhoods of $\theta^\star$ are
  seen to contain appropriately scaled Euclidean neighborhoods of
  $\theta^\star$. To show Assumption \ref{ass:largedenominator}(A), we
  compute
  \begin{align*}
    \pi\left(\left\{ \theta \in \Theta: \sum_{c \in \mathcal{C}} k_c
        \bar{\tau}_c D_c(\theta^\star || \theta) \leq 1 \right\}
    \right) &\geq \pi\left(\left\{ \theta \in \Theta: \sum_{l \neq
          j,i} \gamma^{(i)}_{jl}
        \left(\theta^{\star(i)}_{jl} - \theta^{(i)}_{jl} \right)^2
        \leq \frac{1}{\varrho \bar{\tau}_{\max}} \right\} \right),
  \end{align*}
  where $\bar{\tau}_{\max} \bydef \max_c \bar{\tau}_c$, and $\sum_{l
    \neq j, i} \gamma^{(i)}_{jl} = 2\sum_c k_c \equiv 2k$, since each
  policy $c$ is informative about exactly $2$ of the $4$ independent
  parameter components. Using this fact, we can continue the bound as
  follows.
  \begin{align*}
    \pi\left(\left\{ \theta \in \Theta: \sum_{l \neq j,i}
        \gamma^{(i)}_{jl} \left(\theta^{\star(i)}_{jl} -
          \theta^{(i)}_{jl} \right)^2 \leq \frac{1}{\varrho
          \bar{\tau}_{\max}} \right\} \right) &\geq \pi\left(\left\{
        \theta \in \Theta: \sum_{l \neq j,i}
        \left(\theta^{\star(i)}_{jl} - \theta^{(i)}_{jl} \right)^2
        \leq \frac{1}{2k\varrho \bar{\tau}_{\max}} \right\} \right) \\
    &\geq a_1 {k}^{-2}
  \end{align*}
  using the well-known volume of a multidimensional Euclidean ball.

  Assumption \ref{ass:largedenominator}(B) results from a calculation
  similar to the above, but by considering the ellipsoid $\left\{
    \theta \in \Theta: \sum_{l \neq j,i} \gamma^{(i)}_{jl}
    \left(\theta^{\star(i)}_{jl} - \theta^{(i)}_{jl} \right)^2 \leq
    \frac{1}{\varrho \bar{\tau}_{\max}} \right\}$ with a choice of
  weights $\gamma^{(1)}_{21} = \gamma^{(1)}_{21} \geq k - 3 \log^2(k)$
  and $\gamma^{(2)}_{21} + \gamma^{(2)}_{21} \leq 6 \log^2(k)$, in
  which case the volume of the ellipsoid is at least $a_3
  \sqrt{k}^{-2} = a_3 k^{-1}$.

\section{Proof of Theorem \ref{thm:atleastLmain}}
\label{app:atleastLmain}

  For each $c \neq c^\star$, let
  $\delta_c \bydef \min_{c \neq c^\star, \theta \in S_c'}
  D_c(\theta^\star || \theta)$.
  Consider a solution
  $\left((x_l)_{l =1}^{|\mathcal{C}|-1}, \sigma\right)$ to the
  optimization problem (\ref{eqn:main}). Since
  \begin{equation}
    \label{eqn:lastconstraint}
    \min_{\theta \in S_{\sigma(l)}'} \quad x_l \cdot D(\theta^\star
  || \theta) = (1+a_4)\left(\frac{1+\epsilon}{1-\epsilon}\right)
  \quad \forall 1 \leq l \leq |\mathcal{C}|-1,
  \end{equation}
  we must have $x_l(l) = z^{\circ}(l) \leq \chi / \tilde{\Delta} $
  with $\chi \bydef \frac{(1+a_4)(1+\epsilon)}{1-\epsilon}$
  $\forall l = 1, \ldots, |\mathcal{C}|-1$.

  Put $z^{\circ} \bydef x_{|\mathcal{C}|-1}$,
  $c^{\circ} \bydef \sigma(|\mathcal{C}|-1)$.
%   \[ \min_{\theta \in S_{c^{\circ}}'} \quad z^{\circ} \cdot
%   D(\theta^\star || \theta) =
%   (1+a_4)\left(\frac{1+\epsilon}{1-\epsilon}\right), \]
  We claim that
  $||z||_1 \equiv \ip{\mathbf{1}}{z}\leq \left(\frac{|\mathcal{A}| -
      L}{\tilde{\Delta}}\right) \chi$.
  If not, set
  $y^{\circ} \bydef \frac{\chi}{\tilde{\Delta}} \left(1, 1, \ldots, 1,
    0 \right) \in \mathbb{R}^{|\mathcal{C}|}$,
  and\footnote{$\min(x,y)$ for two vectors is to be interpreted as the
    pointwise minimum.}
  $D^{\tilde{\Delta}}(\theta^\star || \theta) \bydef \min\left(
    D(\theta^\star || \theta), \tilde{\Delta} \times \mathbf{1} \right)$.
  Let us estimate, for $ \theta \in S_{c^{\circ}}'$ that attains the
  minimum in (\ref{eqn:lastconstraint}) for $l = |\mathcal{C}|-1$,
  \begin{align}
    (y^{\circ} - z^{\circ}) \cdot D^{\tilde{\Delta}}(\theta^\star || \theta)  &= y^{\circ} \cdot D^{\tilde{\Delta}}(\theta^\star || \theta)  - z^{\circ} \cdot D^{\tilde{\Delta}}(\theta^\star || \theta) \nonumber \\
                                                             &\geq \chi \cdot L \cdot \Delta \cdot \frac{1}{\Delta} - \chi = \chi(L-1). \label{eqn:ip1}
  \end{align}

  But then\footnote{$\mathbf{1}$ represents the all-ones vector.},
  \begin{align*}
    (y^{\circ} - z^{\circ}) \cdot \mathbf{1}
    &= y^{\circ} \cdot \mathbf{1} - z^{\circ} \cdot \mathbf{1} \\
    &< \frac{\chi(|\mathcal{C}|-1)}{\tilde{\Delta}} -
      \frac{\chi(|\mathcal{C}|-L)}{\tilde{\Delta}}
      = \frac{\chi(L-1)}{\tilde{\Delta}} \\
    &\leq \frac{(y^{\circ} - z^{\circ}) \cdot D^{\tilde{\Delta}}(\theta^\star || \theta)}{\tilde{\Delta}} \quad \quad \quad \mbox{by (\ref{eqn:ip1})} \\
    &\leq \frac{\ip{(y^{\circ}-z^{\circ})}{(\Delta \times \mathbf{1})}}{\Delta} =
      \ip{(y^{\circ}-z^{\circ})}{\mathbf{1}},
  \end{align*}
  since
  $D^{\tilde{\Delta}}(\theta^\star || \theta) \preceq \Delta \times
  \mathbf{1}$
  by definition, and $z^{\circ} \preceq y^{\circ}$ by hypothesis. This
  is a contradiction.

  \section{Example: Single Parameter Queueing MDP with a Large Number
    of States (Section \ref{sec:scalingconst})}
\label{app:queueing}

In this section, we show an MDP possessing a large number of states
but only a small number of uncertain parameters, in which the regret
scaling with time can be demonstrated to {\em not} depend at all on
the number of states (and hence the number of possible stationary
policies).

Consider learning to control a discrete time, two-server single queue
MDP\footnote{Such a model has been classically studied in queueing and
  control theory \citep{LinKum84:queuecontrol, Koo95:threshold} in the
  planning context.}, parameterized by a single scalar parameter
$\theta$. The state space is $\mathcal{S} := \{0, 1, 2, ..., M\}$, $M$
a positive integer, representing the occupancy of a size-at most-$M$
queue of customers. A customer arrives to the system independently
each time with probability $\theta$, i.e., arrivals to the queue
follow a Bernoulli($\theta$) probability distribution, where
$\theta \in \Theta \bydef [\upsilon,1-\upsilon]$,
$0 < \upsilon \ll 1/2$, is the unknown parameter for the MDP. At each
state, one of $2$ actions -- Action $1$ (SLOW service) and Action $2$
(FAST service) may be chosen, i.e., $\mathcal{A} = \{1,2\}$. Applying
SLOW (resp. FAST) service results in serving one packet from the queue
with probability $\mu_1$ (resp. $\mu_2$) if it is not empty, i.e., the
service model is Bernoulli($\mu_i$) where $\mu_i$ is the packet
service probability under service type $i = 1,2$. Actions $1$ and $2$
incur a per-instant cost of $c_1$ and $c_2$ units respectively. In
addition to this cost, there is a holding cost of $c_0$ per packet in
the queue at all times. The system gains a reward of $r$ units
whenever a packet is served from the queue. Let us assume that
$\mu_1$, $\mu_2$, $c_0$, $c_1$, $c_2$ and $r$ are known constants,
with the only uncertainty being in $\theta \in \Theta$. Thus, the true
MDP is represented by some $\theta^\star \in \Theta$ with a
corresponding optimal policy $c^\star$ mapping each state to one of
$\{\mu_1, \mu_2\}$.  The total number of policies is of order $2^M$,
and the number of optimal policies $|\mathcal{C}|$ can potentially be
of order $M$ (this occurs, for instance, if optimal policies are of
threshold type w.r.t. the state space, and the threshold monotonically
increases from $0$ to $M$ as $\theta$ ranges in $\Theta$
\citep{LinKum84:queuecontrol}).

With regard to the TSMDP algorithm, let us assume that the start state
(and thus the epoch demarcating state) is $s_0 := 0$, and the prior a
uniform probability distribution over $\Theta$.

% another
% example of an MDP with monotone threshold policies is {\em RiverSwim}
% [check?]

  \noindent {\bf Analysis.} Let us estimate the marginal KL divergence
  $D_c(\theta^\star || \theta)$ for a candidate parameter
$\theta \in \Theta$ and a stationary policy $c$. First, notice that at
each state $0 < s < M$,
\[ \kldiv{p_{\theta^\star}(s,\mu_i,\cdot)}{p_{\theta^\star}(s,\mu_i,\cdot)} 
= \kldiv{[\mu_i\bar{\theta^\star}; \mu_i\theta^\star+
  \bar{\mu_i}\bar{\theta^\star};
  \bar{\mu_i}\theta^\star]}{[\mu_i\bar{\theta}; \mu_i\theta +
  \bar{\mu_i}\bar{\theta}; \bar{\mu_i}\theta]}, \]
where $\bar{x}$ denotes $1-x$. This can be bounded from below using
Pinsker's inequality to get
\begin{align*}
  \kldiv{p_{\theta^\star}(s,\mu_i,\cdot)}{p_{\theta^\star}(s,\mu_i,\cdot)}
  & \geq \frac{1}{2} \left| \left|[\mu_i\bar{\theta^\star}; \mu_i\theta^\star+
    \bar{\mu_i}\bar{\theta^\star};
    \bar{\mu_i}\theta^\star] -  [\mu_i\bar{\theta}; \mu_i\theta +
    \bar{\mu_i}\bar{\theta}; \bar{\mu_i}\theta] \right|\right|^2_1 \\
  &= \frac{1}{2}(\theta^\star - \theta)^2 (1 + |2\mu_i - 1|)^2 \geq a_1 (\theta^\star - \theta)^2,
\end{align*}
with $a_1 \bydef \frac{1}{2} \min_{i = 1,2} (1 + |2\mu_i - 1|)^2
$. Similarly, for states $s \in \{0,M\}$,
\begin{align*}
  \kldiv{p_{\theta^\star}(s,\mu_i,\cdot)}{p_{\theta^\star}(s,\mu_i,\cdot)}
  & \geq a_2 (\theta^\star - \theta)^2
\end{align*}
for some positive constant $a_2$. Thus, we have
$D_c(\theta^\star || \theta) \geq a(\theta^\star - \theta)^2$ for
$a \bydef \min\{a_1,a_2\}$, since $D_c(\theta^\star || \theta)$ by
definition is a convex combination of individual KL divergence terms
as above. In particular, it follows that for each suboptimal parameter
$\theta$ (i.e., $\theta \in S_c$, $c \neq c^\star$), the vector
$D(\theta^\star || \theta)$ of all $D_c(\theta^\star || \theta)$
values is such that each of its coordinates is at least
$a(\theta^\star - \theta)^2$. Let
$\theta_b \bydef \arg\min_{\theta \in S_c, c \neq c^\star}
|\theta^\star - \theta|$
be the closest suboptimal parameter to the true parameter
$\theta^\star$. Under the non-degenerate case where the MDP
parameterized by $\theta^\star$ possesses a unique optimal policy, we
must have $\delta^\star \bydef (\theta_b - \theta^\star)^2 > 0$. 

Theorem \ref{thm:atleastLmain} can now be applied, with
$\Delta \bydef \delta^\star$ and $L \bydef |\mathcal{C}|-1$, to get
that the scaling constant $\mathsf{C}$ satisfies
$\mathsf{C} \leq
\frac{(1+a_4)(1+\epsilon)}{\delta^\star(1-\epsilon)}$.

Thus, if all the assumptions required for Theorem \ref{thm:main} are
satisfied\footnote{These can be shown to be satisfied using techniques
  similar to those used to show Theorem \ref{thm:maincontinuous}.},
then the regret scaling does {\em not} depending on the number of
policies ($|\mathcal{C}|$). Using a naive bandit approach treating
each policy as an arm of the bandit (and thus completely ignoring the
structure of the MDP) would, in contrast, result in regret that scales
at rate $\frac{|\mathcal{C}|}{\delta^\star}\log T$ -- a huge blowup
compared to the former. In summary,
\begin{itemize}
\item The number of states $|\mathcal{S}|$ (and thus the number of
  possible optimal policies of the order of $\Omega(|\mathcal{S}|)$)
  can potentially be very large, while the number of uncertain
  parameter dimensions can be relatively much smaller. One can
  consider running a ``flat'' bandit algorithm on all possible optimal
  policies (order $|\mathcal{C}| = \Omega(|\mathcal{S}|)$ or
  larger). This will yield the standard decoupled regret that is
  $O\left(\frac{|\mathcal{C}|}{\delta^\star}\log T\right)$.
  Furthermore, even an MDP-specific algorithm like UCRL2, in this
  setup, is unable to exploit the high amount of generalizability
  across states/actions, and exhibits a regret scaling of
  $O\left( \frac{\mathcal{D}^2 |\mathcal{S}|^2 |\mathcal{A}|
      \log(T)}{g} \right)$
  \citep[Theorem 4]{JakschOA10}, where $\mathcal{D}$ is the MDP
  diameter and $g$ is the gap between the expected return of the best
  and second-best policies.

\item Thompson Sampling for MDPs, with a prior on the uncertainty
  space of parameters, can yield regret that scales as
  $O\left(\frac{1}{\delta^\star} \log T\right)$ which is {\em
    independent} of $|\mathcal{C}|$. This represents a dramatic
  improvement in regret especially when $|\mathcal{S}|$ is large. 

\item Intuitively, the reason for the saving in regret is that with a
  prior over the structure of the MDP, {\em every}
  transition/recurrence cycle in the Thompson Sampling algorithm (and
  the resulting posterior update) gives non-trivial information in
  resolving suboptimal models from the true underlying model, This is
  completely ignored by a flat bandit algorithm across policies which
  is forced to explore all available arms (policies). 

\end{itemize}

\section{Proof of Theorem \ref{thm:regret}}
\begin{lemma}[Concentration of the empirical reward process]
  \label{lem:conc3}
  Let $\delta \in (0,1]$. Then, there exist positive $d_3, d_4$ such
    that the following bound holds with probability at least
    $1-\delta$ over the choice of the matrix $Q$,
  \begin{equation} \forall k \geq 1 \quad \sum_{l=1}^{\tilde{\tau}_{k,c^\star}} (\mu^\star - Q_3(l,c^\star)) <  \sqrt{d_3 k \log
      \left(\frac{d_4 \log k}{\delta}\right)}. \label{eqn:conc3}
    \end{equation}
\end{lemma}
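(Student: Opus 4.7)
The natural plan is to decompose the sum over $l = 1, \ldots, \tilde{\tau}_{k,c^\star}$ into cycle-sums and appeal directly to the self-normalized maximal inequality of Lemma \ref{lem:maximal} that has already been developed in the paper.

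First, for $j \geq 1$, define the per-cycle increments
\begin{equation*}
Y_j \;:=\; \sum_{l = \tilde{\tau}_{j-1,c^\star}+1}^{\tilde{\tau}_{j,c^\star}} \bigl(\mu^\star - Q_3(l,c^\star)\bigr),
\end{equation*}
so that the partial sum in the lemma is exactly $\sum_{j=1}^{k} Y_j$. By the strong Markov property applied at the successive visits of the state sequence $(Q_1(\cdot,c^\star))$ to the recurrent state $s_0$ (as was used in the proof of Proposition \ref{prop:conc}), the random variables $Y_1, Y_2, \ldots$ are IID. I would next argue that $\mathbb{E}[Y_1] = 0$: by the renewal reward theorem applied to the Markov chain induced by $c^\star$ on $m_{\theta^\star}$, the expected per-cycle reward equals $\mu^\star$ times the expected cycle length $\bar{\tau}_{c^\star}$, so $\mathbb{E}\bigl[\sum_{l=1}^{\tilde{\tau}_{1,c^\star}} Q_3(l,c^\star)\bigr] = \mu^\star\,\bar{\tau}_{c^\star}$, while $\mathbb{E}[\mu^\star \cdot \tilde{\tau}_{1,c^\star}] = \mu^\star \bar{\tau}_{c^\star}$ as well.

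Next, I would verify that $Y_1$ has sub-exponential tails in the sense required by Lemma \ref{lem:maximal}. Since $r$ is a bounded reward function on the finite state--action space, there exists $R_{\max} < \infty$ with $|Q_3(l,c^\star)| \leq R_{\max}$ almost surely, so
\begin{equation*}
|Y_1| \;\leq\; (R_{\max} + |\mu^\star|)\,\tilde{\tau}_{1,c^\star}.
\end{equation*}
The exponential tail estimate \eqref{eqn:tautail} for the return time $\tilde{\tau}_{1,c^\star}$ then yields $\mathbb{P}[|Y_1| > v] \leq \alpha_1 e^{-\alpha_2 v}$ for suitable constants $\alpha_1, \alpha_2 > 0$ depending only on $R_{\max}$, $|\mu^\star|$ and $\bar{\tau}_{\max}$.

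Having established that $(Y_j)_{j\geq 1}$ is an IID sequence of mean-zero, sub-exponential random variables, the conclusion \eqref{eqn:conc3} follows by a direct application of Lemma \ref{lem:maximal} (taking $X_i = Y_i$ and matching the constants $\eta_1, \eta_2$ to $d_3, d_4$). I expect the only mildly delicate point is the mean-zero verification — one has to be a little careful about the boundary term from the first partial cycle starting at $s_0 = Q_1(0,c^\star)$ — but this is handled cleanly by observing that in the construction of Section \ref{sec:alternative} we start at $s_0$, so $\tilde{\tau}_{0,c^\star} = 0$ lies exactly at a renewal epoch, and the standard renewal-reward identity applies directly with no boundary correction.
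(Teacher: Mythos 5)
Your proposal is correct and follows essentially the same route as the paper's own proof: the identical decomposition into IID cycle-sums $\hat{B}_{l'}$, the renewal-reward argument for the zero mean, stochastic domination of each cycle-sum by a constant multiple of $\tilde{\tau}_{1,c^\star}$ to obtain the sub-exponential tail via (\ref{eqn:tautail}), and a final appeal to Lemma \ref{lem:maximal}. The additional remark about the absence of a boundary correction (since the chain is started at $s_0$) is a fine clarification but introduces nothing beyond what the paper already does implicitly.
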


\begin{proof}
  The proof is along the same lines as that of Proposition
  \ref{prop:conc}. Break the sum on the left as
  $\sum_{l=1}^{\tilde{\tau}_{k,c^\star}} (\mu^\star - Q_3(l,c^\star))
  = \sum_{l'=1}^k \hat{B}_{l'}$,
  where the cycle-based random variables
  \[\hat{B}_{l'} \bydef
  \sum_{l=\tilde{\tau}_{l'-1,c^\star}+1}^{\tilde{\tau}_{l',c^\star}}
  (\mu^\star - Q_3(l,c^\star)), \quad l' = 1, 2, 3, \ldots,\] are IID
  owing to the Markov property. Also, by the renewal-reward theorem
  \citep{grimmett92} and Markov chain ergodicity, it follows that
  $\expect{\hat{B}_1} = 0$. Most importantly, $\hat{B}_1$ is
  stochastically dominated by $2 r_{\max} \tilde{\tau}_{1,c^\star}$, and
  thus possesses an exponentially decaying tail
  (\ref{eqn:tautail}). An application of Lemma \ref{lem:maximal} thus
  gives that for some $d_3, d_4$, with probability at least
  $1-\delta$,
  \[ \forall k \geq 1 \quad \sum_{l'=1}^k \hat{B}_{l'} \leq  \sqrt{d_3 k \log \left(\frac{d_4 \log k}{\delta}\right)}.\]
  This proves the lemma.
\end{proof}

We decompose the regret along the trajectory up to time $T$ as
follows.
\begin{align}
  &T\mu^\star - \sum_{t=1}^T r(S_t,A_t)  = \sum_{k=1}^{e(T)} \sum_{t=t_{k-1}}^{t_k -
    1} \sum_{c \in \mathcal{C}} \mathbbm{1}\{C_k = c\} (r(S_t,A_t) - \mu^\star) \nonumber \\
  &= \sum_{k=1}^{e(T)} \sum_{t=t_{k-1}}^{t_k - 1} \mathbbm{1}\{C_k =
  c^\star\} (\mu^\star - r(S_t,A_t)) + \sum_{k=1}^{e(T)} \sum_{t=t_{k-1}}^{t_k - 1}
  \sum_{c \neq
    c^\star} \mathbbm{1}\{C_k = c\} (\mu^\star - r(S_t,A_t)) \nonumber \\
  &\leq \sum_{k=1}^{e(T)} \sum_{t=t_{k-1}}^{t_k - 1} \mathbbm{1}\{C_k =
  c^\star\} (\mu^\star - r(S_t,A_t)) + 2r_{\max} \sum_{c \in \mathcal{C}
    \setminus \{c^\star\} } \tilde{\tau}_{N_c(T),c} \nonumber \\
  &\leq \sum_{k=1}^{e(T)} \sum_{t=t_{k-1}}^{t_k - 1} \mathbbm{1}\{C_k
  = c^\star\} (\mu^\star - r(S_t,A_t)) + 2 r_{\max} (\mathsf{B} + \mathsf{C} \log T)
  \quad \quad \mbox{(by Proposition \ref{prop:optim})} \nonumber \\
  &= \sum_{l=1}^{\tilde{\tau}_{N_{c^\star}(T),c^\star}} (\mu^\star - Q_3(l,c^\star))
  + r_{\max} (\mathsf{B} + \mathsf{C} \log T). \label{eqn:decomp2}
\end{align}
The first step above uses the recurrence cycle structure of the TSMDP
algorithm, $r_{\max}$ in the third step is defined to be the maximum
reward for any state-action pair: $r_{\max} \bydef \max_{s \in
  \mathcal{S}, a \in \mathcal{A}} r(s,a)$, and in the final step we
use the coupling with the alternative probability space described in
Section \ref{sec:alternative}.

Under the event $G$, we have the estimate
\[\forall k \quad \tilde{\tau}_{k,c^\star} \geq k
\bar{\tau}_{c^\star} - \sqrt{k d_1 \log \left(\frac{|\mathcal{C}|
    |\mathcal{S}|^2 d_2 \log k}{\delta}\right)}\]
\[ \Rightarrow \quad T \geq \tilde{\tau}_{N_{c^\star}(T),c^\star} \geq N_{c^\star}(T)
\bar{\tau}_{c^\star} - \sqrt{N_{c^\star}(T) d_1 \log
  \left(\frac{|\mathcal{C}| |\mathcal{S}|^2 d_2 \log
    N_{c^\star}(T)}{\delta}\right)}.\] The square-root correction term
above is $o(N_{c^\star}(T))$, thus for any $\epsilon_1 > 0$, we have
$N_{c^\star}(T) \leq \frac{(1+\epsilon_1)T}{\bar{\tau}_{c^\star}}$
for $T$ large enough. 

Let $G_1$ be the event, occurring with probability at least
$1-\delta$, for which (\ref{eqn:conc3}) is satisfied. Then, the event
$G \cap G_1$ occurs with probability at least $1-2\delta$ by the union
bound. Using the bound on $N_{c^\star}(T)$ from the preceding
paragraph in (\ref{eqn:decomp2}) thus gives that for $T$ large enough,
under the event $G \cap G_1$, 
\begin{align*}
 T\mu^\star - \sum_{t=1}^T r(S_t,A_t) &\leq  \sqrt{\frac{d_3(1+\epsilon_1)T}{\bar{\tau}_{c^\star}} \log
      \left(\frac{d_4 \log \left(\frac{(1+\epsilon_1)T}{\bar{\tau}_{c^\star}}\right) }{\delta}\right)} + r_{\max} \mathsf{B} + r_{\max} \mathsf{C} \log T \\
&= O\left(\sqrt{\frac{T}{\bar{\tau}_{c^\star}} \log \left(\frac{\log T}{\delta} \right)}\right).
\end{align*}

%\bibliography{tsmdp}
%\bibliographystyle{plainnat}

\end{document}